\def\eqref#1{equation~\ref{#1}}
\def\1{\bm{1}}
\DeclareMathAlphabet{\mathsfit}{\encodingdefault}{\sfdefault}{m}{sl}
\SetMathAlphabet{\mathsfit}{bold}{\encodingdefault}{\sfdefault}{bx}{n}
\newcommand{\KL}{D_{\mathrm{KL}}}
\newcommand{\ie}{\textit{i.e.}}
\newcommand{\eg}{\textit{e.g.}}
\newcommand{\myparagraph}[1]{\vspace{-0.12in}\paragraph{#1}}
\newcommand{\legendline}[3]{%
  \tikz[baseline=-0.5ex] \draw[line width=0.8pt, color=#1, #2] 
    (0,0) -- (0.5,0) 
    node[pos=0.5, draw=#1, fill=#1, shape=#3, inner sep=2pt] {};
}
\newcommand{\legendxline}[2]{%
  \tikz[baseline=-0.5ex] \draw[line width=0.8pt, color=#1, #2] 
    (0,0) -- (0.5,0) 
    node[pos=0.5, color=#1, font=\small] {\textbf{$\times$}};
}
\newcommand{\cmark}{\textcolor{green!60!black}{\ding{51}}}  
\newcommand{\xmark}{\textcolor{red!70!black}{\ding{55}}}    
\theoremstyle{plain}
\newtheorem{theorem}{Theorem}
\newtheorem{lemma}{Lemma}
\newtheorem{corollary}{Corollary}
\theoremstyle{definition}
\newtheorem{assumption}{Assumption}
\theoremstyle{remark}
\definecolor{Red}{rgb}{0.768, 0.054, 0.054}
\definecolor{Blue}{rgb}{0.152, 0.294, 0.925}
\definecolor{Green}{rgb}{0,0.4,0.7}
\Crefname{table}{Tab.}{Tabs.}
\Crefname{figure}{Fig.}{Figs.}
\Crefname{equation}{Eq.}{Eqs.}
\Crefname{thm}{Theorem}{Theorems}
\Crefname{assumption}{Assumption}{Assumptions}
\Crefname{theorem}{Theorem}{Theorems}
\Crefname{algorithm}{Alg.}{Algs.}
\title{Simple yet Effective Semi-supervised Knowledge Distillation from Vision-Language Models via \textbf{\texttt{D}}ual-\textbf{\texttt{H}}ead \textbf{\texttt{O}}ptimization}
\author{%
    Seongjae Kang$^{\spadesuit,\dagger}$ \quad
    Dong Bok Lee$^{\clubsuit,\dagger}$ \quad
    Hyungjoon Jang$^{\spadesuit}$ \quad
    Sung Ju Hwang$^{\clubsuit,\diamondsuit}$ \\[0.1em]
    $^{\spadesuit}$VUNO Inc. \quad
    $^{\clubsuit}$KAIST \quad
    $^{\diamondsuit}$DeepAuto.ai \\[0.1em]
    \texttt{\{seongjae.kang, hyungjoon.jang\}@vuno.co} \\
    \texttt{\{markhi, sjhwang\}@kaist.ac.kr} \\[0.1em]
    \footnotesize{$^{\dagger}$Equal contribution}
}
\begin{document}

\maketitle


\definecolor{linkcolor}{RGB}{5, 70, 170}

\vspace{-0.1in}

\begin{abstract}

Semi-supervised learning (SSL) has emerged as a practical solution for addressing data scarcity challenges by leveraging unlabeled data.
Recently, vision-language models (VLMs), pre-trained on massive image-text pairs, have demonstrated remarkable zero-/few-shot performance that often surpasses SSL approaches due to their exceptional generalization capabilities.
This gap motivates us to question: how can we effectively \emph{harness the powerful generalization capabilities of VLMs into task-specific models}?
Knowledge distillation (KD) offers a natural framework for transferring VLM capabilities, but we identify that it suffers from \emph{gradient conflicts} between supervised and distillation losses.
To address this challenge, we propose \textbf{\texttt{D}}ual-\textbf{\texttt{H}}ead \textbf{\texttt{O}}ptimization (\textbf{\texttt{DHO}}), which introduces dual prediction heads for each distinct signal.
We observe that \textbf{\texttt{DHO}} resolves \textit{gradient conflicts}, enabling improved feature learning compared to single-head KD baselines, with practical benefits of minimal computational overhead and test-time hyperparameter tuning without retraining.
Extensive experiments across 15 datasets show that \textbf{\texttt{DHO}} consistently outperforms KD baselines, often outperforming teacher models with smaller student models.
\textbf{\texttt{DHO}} also achieves new state-of-the-art performance on both in-distribution ImageNet semi-supervised learning and out-of-distribution generalization across ImageNet variants. 
We publicly release our code and model checkpoints to facilitate future research at \href{https://github.com/erjui/DHO}{\textcolor{linkcolor}{https://github.com/erjui/DHO}}.

\end{abstract}


\vspace{-0.15in}
\section{Introduction}
\label{sec:introduction}
\vspace{-0.05in}

\looseness=-1


Vision-language models (VLMs), which learn joint vision-language representations through large-scale pre-training, have shown remarkable zero-shot capabilities across diverse tasks~\citep{radford2021learning, jia2021scaling}.
Building upon these strong foundational capabilities, recent work has explored various adaptation strategies, including parameter-efficient approaches such as linear probing~\citep{li2022elevater, huang2024lp++}, lightweight adapters~\citep{zhang2021tip, gao2024clip}, and prompt-based fine-tuning methods~\citep{zhou2022learning, zhou2022conditional, lafon2025gallop}, demonstrating the potential of VLMs for data-limited visual recognition tasks.

In parallel, semi-supervised learning (SSL) has emerged as a practical approach to address data scarcity by leveraging both labeled and unlabeled data~\citep{sohn2020fixmatch, assran2021semi, cai2022semi, zheng2023simmatchv2}.
While these methods have shown success to leverage large amounts of unlabeled data, they often struggle to match the impressive zero- and few-shot capabilities of large pre-trained VLMs~\citep{liu2023learning}.
This discrepancy highlights a fundamental limitation: traditional semi-supervised methods, despite their theoretical appeal, remain suboptimal compared to the rich representations learned by foundation models through massive-scale pre-training.

VLMs excel at zero- and few-shot generalization but may lack the fine-grained discriminative power needed for specific tasks, while models trained on limited labeled data capture task-specific patterns but generalize poorly.
This complementary nature motivates us to integrate generalist VLM knowledge with task-specific supervision in semi-supervised learning settings.
Therefore, the challenge naturally arises: \textit{how can we effectively transfer the powerful capabilities of large VLMs to task-specific models in semi-supervised settings?}

\begin{figure}
    \centering
    \vspace{-0.35in}
    {\includegraphics[width=0.58\textwidth]{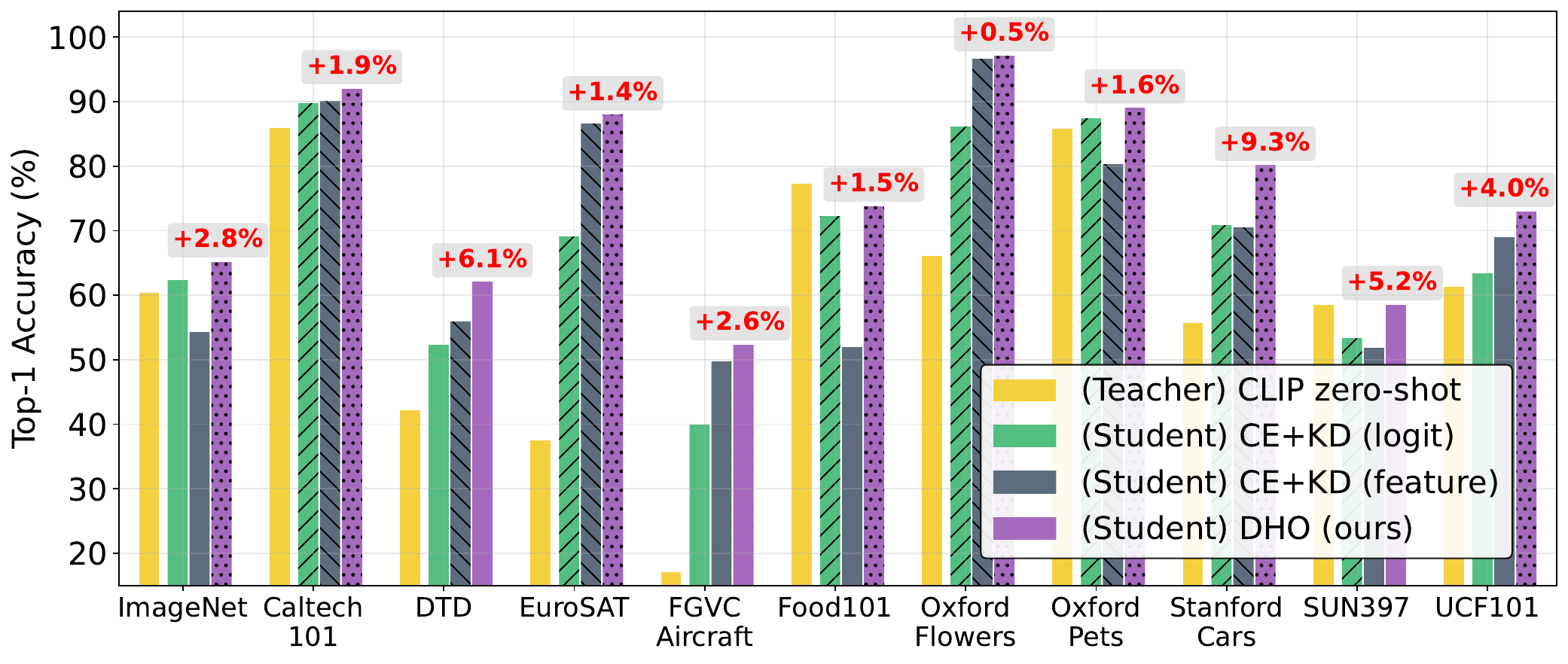}}
    {\includegraphics[width=0.41\textwidth]{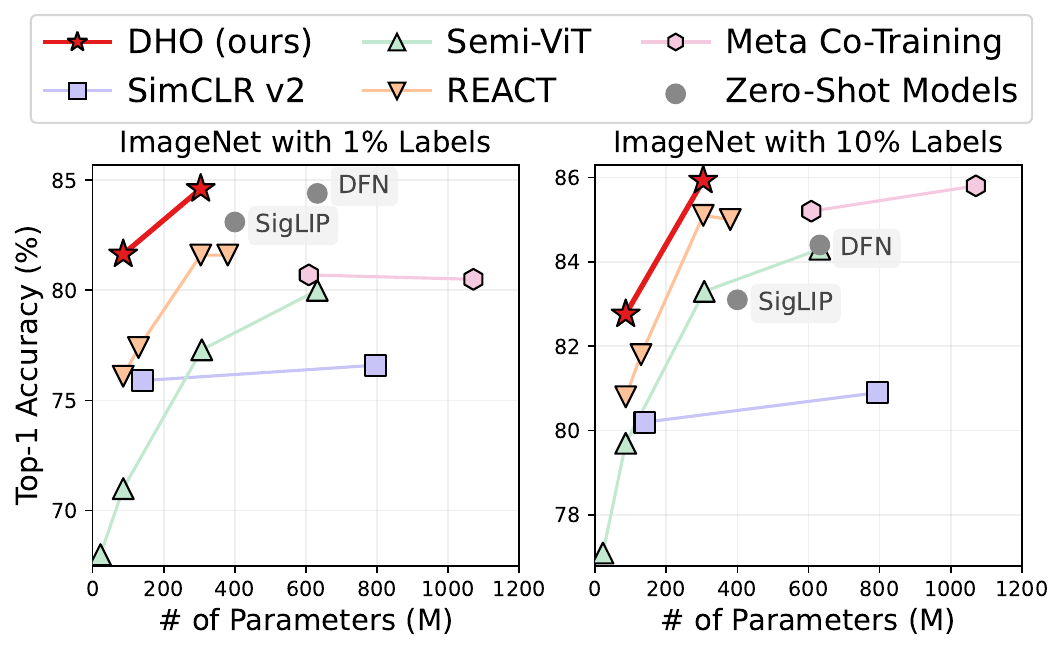}}
    \vspace{-0.25in}
    \caption{\small
     \textbf{(Left)}: \textbf{\texttt{DHO}} consistently outperforms single-head baselines on 11 datasets under 16-shot semi-supervised setting.
     The improvements are evaluated in comparison to the second-best one.
     \textbf{(Right)}: \textbf{\texttt{DHO}} achieves new SoTA on ImageNet in both 1\% and 10\% labeled data setting, with fewer parameters.
    }
    \vspace{-0.3in}
    \label{fig:both}
\end{figure}

Knowledge distillation \citep[KD;][]{hinton2015distilling} emerges as a natural solution to the challenge, as it offeres an efficient framework to transfer knowledge from large VLMs to student models while simultaneously leveraging task-specific patterns from limited labeled data.
However, existing VLM distillation methods primarily focus on general-purpose training \citep{yang2024clipkd, vasu2024mobileclip, udandarao2024active, yang2024clipcid} or employ multi-stage pipelines with unsupervised pre-distillation \citep{knowledgetransfer2024, wu2025cascade}, which require additional task-specific fine-tuning.
Conventional single-stage KD methods, such as logit distillation~\citep{hinton2015distilling, chen2020big}, offer a more direct approach, but we find them \emph{suboptimal} in semi-supervised settings.

Through analysis, we identify that this stems from a fundamental problem: \textit{gradient conflicts} between the supervised loss (from limited labeled data) and the distillation loss (from teacher predictions).
This misalignment is particularly severe in semi-supervised settings, where the strong and consistent distillation signal from the teacher can overwhelm the weak and potentially noisy signal from scarce labeled data.
Such gradient conflicts (\legendline{pink}{}{rectangle} and \legendline{red}{}{circle} in \Cref{fig:average_gradient_conflict}) are well-documented to impair effective feature learning~\citep{gradient_conflict1, gradient_conflict2, gradient_conflict3}, preventing the model from finding an optimal balance between task-specific adaptation and general knowledge transfer.

To address the above issue, we propose a \textit{simple yet effective} distillation framework, \textbf{\texttt{DHO}} (\textbf{\texttt{D}}ual-\textbf{\texttt{H}}ead \textbf{\texttt{O}}ptimization), which jointly leverages labeled samples and the probabilistic outputs of the teacher model. Specifically, it learns two distinct heads, each optimized with a separate loss: the supervised and the KD loss, respectively.
Our analysis reveals that \textbf{\texttt{DHO}} \textbf{mitigates gradient conflicts} both in the classification head and the shared feature extractor (\legendline{blue}{}{circle} in \Cref{fig:average_gradient_conflict}), arising from the two different training signals and \textbf{improves feature representations} compared to baselines (\Cref{tab:linear_evaluation,fig:tsne}).
Our framework also enables controlling the relative influence of supervised and teacher predictions through \textbf{linear combination} of both head outputs, whose effectiveness is demonstrated in \Cref{fig:F3_head_interpolation}.
Additionally, \textbf{\texttt{DHO}} enables \textbf{post-training hyperparameter adjustment} by tuning the linear combination weights at inference (\Cref{fig:ablation_alpha_beta}), eliminating costly training-time hyperparameter search.

We conduct extensive experiments across 15 different datasets including ImageNet~\citep{russakovsky2015imagenet}.
The experimental results demonstrate the \textbf{consistent improvement of \textbf{\texttt{DHO}}} over conventional KD methods across all evaluated datasets (\Cref{fig:both}-Left). \textbf{\texttt{DHO}} sometimes even \textbf{outperforms the zero/few-shot teacher models} with smaller student models with the same labeled data, demonstrating effective task-specific enhancement beyond teacher capabilities (\Cref{fig:F1_10_datasets_zeroshot,fig:F1_10_datasets_fewshot}).
Furthermore, \textbf{\texttt{DHO}} achieves \textbf{new state-of-the-art (SoTA)} performance on ImageNet semi-supervised setting, improving accuracy by 3\% and 0.1\% with 1\% and 10\% labeled data, respectively, while using fewer parameters (\Cref{fig:both}-Right).
Notably, \textbf{\texttt{DHO}} can be seamlessly \textbf{integrated with existing adaptation techniques} with minimal computational overhead, achieving new \textbf{SoTA on Out-of-Distribution (OOD) tasks} across ImageNet distribution-shifted variants (\Cref{tab:domain_generalization_adaptation} and \Cref{sec:appendix_robustness_full}).

Our contributions and findings are summarized as follows:
\begin{itemize}
[itemsep=1mm,parsep=1pt,topsep=2pt,leftmargin=*]
\vspace{-0.05in}

\item We firstly \textbf{identify} \emph{gradient conflict} when integrating VLMs’ general knowledge with task-specific supervision from limited data. To address this, we propose \textbf{\texttt{D}}ual-\textbf{\texttt{H}}ead \textbf{\texttt{O}}ptimization (\textbf{\texttt{DHO}}), which optimizes the supervised and distillation objectives in separate heads.

\item \textbf{\texttt{DHO}} effectively resolves \emph{gradient conflicts} both in the classification head and the shared feature extractor, leading to improved \textbf{feature representations}.
Our framework enables flexible \textbf{post-training adjustment} of dual head output weights with \textbf{minimal computational overhead}.

\item In extensive experiments on 15 datasets, \textbf{\texttt{DHO}} \textbf{consistently outperforms} baselines and sometimes \textbf{surpasses teacher model} performance.
It establishes new \textbf{SoTA} for \textbf{in-distribution ImageNet} (\eg, +3\%/+0.1\% at 1\%/10\% labels) with fewer parameters (76M/767M), and also achieves \textbf{SoTA OOD generalization} across ImageNet variants when integrated with adaptation methods.
\vspace{-0.05in}
\end{itemize}



\vspace{-0.05in}
\section{Method}
\vspace{-0.05in}

\subsection{Preliminaries}
\label{sec:preliminary}
\vspace{-0.05in}
We begin with preliminaries: a brief background on VLMs, the problem formulation for few-/low-shot learning, and single-head KD baselines. We defer related work---\textbf{vision–language pretraining, data-limited adaptation of VLMs, knowledge distillation, and dual-head methods}---to \Cref{sec:related_work}.


\vspace{-0.1in}
\paragraph{Background on VLMs.}
Our work is based on VLMs such as CLIP~\citep{radford2021learning} and ALIGN~\citep{jia2021scaling}.
These models consist of multimodal encoders: an image encoder $f_\mathcal{X}: \mathcal{X} \to \mathbb{R}^d$ and a text encoder $f_\mathcal{T}: \mathcal{T} \to \mathbb{R}^d$ where $\mathcal{X}$ and $\mathcal{T}$ denote the domains of images and texts, respectively.
For zero-shot classification of VLMs across $C$ classes, we use predefined prompt templates, \eg, ``a photo of a \texttt{[CLASS]}'', where \texttt{[CLASS]} is the name of class.
Given a set of $C$ target classes, \ie, $y\in\{1,\ldots,C\}$, we generate prompted text descriptions $\{t_1, t_2, \dots, t_C\}$.
We obtain the categorical probability vector $p$ using the cosine similarity $\mathtt{CosSim}(x, y)=\frac{x^\top y}{||x||_2\cdot||y||_2}$ over $\{t_1, t_2, \dots, t_C\}$, \ie, $p\coloneq\sigma([\mathtt{CosSim}(f_\mathcal{T}(x), f_\mathcal{T}(t_1))/\zeta,\ldots,\mathtt{CosSim}(f_\mathcal{T}(x), f_\mathcal{T}(t_C))/\zeta])$, where $\sigma$ is the softmax function, $\zeta\in\mathbb{R}_{>0}$ is the temperature scaling~\citep{hinton2015distilling}, and final classification is determined by $\arg\max_{c\in\{1,\ldots,C\}}p_c$.

\myparagraph{Problem Formulation.}
We focus on transferring knowledge from VLMs to task-specific models under few-shot or low-shot semi-supervised learning scenarios, where both labeled and unlabeled data are utilized.
Specifically, given a $K$-shot and $C$-class classification problem, we are provided with a labeled dataset $\mathcal{D}^{(l)}=\{(x^{(l)}_n, y_n)\}_{n=1}^{N}$, where $N = K \times C$ is the total number of labeled examples, and $y_n \in \{1,\ldots,C\}$ denotes the class labels.
Additionally, we have access to an unlabeled dataset $\mathcal{D}^{(u)}=\{x^{(u)}_m\}_{m=1}^M$ consisting of $M$ unlabeled images. Low-shot learning represents a more realistic setting than traditional few-shot learning where only a small fraction, \eg, 1\% ($\frac{N}{N+M}\approx0.01$) or 10\% ($\frac{N}{N+M}\approx0.1$) of the total dataset is labeled.
Our goal is then to develop a student model by leveraging \( \mathcal{D}^{(l)} \) and \( \mathcal{D}^{(u)} \), guided by the knowledge of the VLM encoders \( f_\mathcal{X} \) and \( f_\mathcal{T} \).
The student model consists of a \textbf{feature extractor $g(x)$ parameterzied by $\theta$} and \textbf{a linear prediction head} $h(z)=Wz+b\in\mathbb{R}^c$, where $W\in\mathbb{R}^{C\times d},b\in\mathbb{R}^{C}$, followed by \textbf{the softmax function} $\sigma$.

\begin{figure}[t]
    \centering
    \vspace{-0.25in}
    \includegraphics[width=1.0\linewidth]{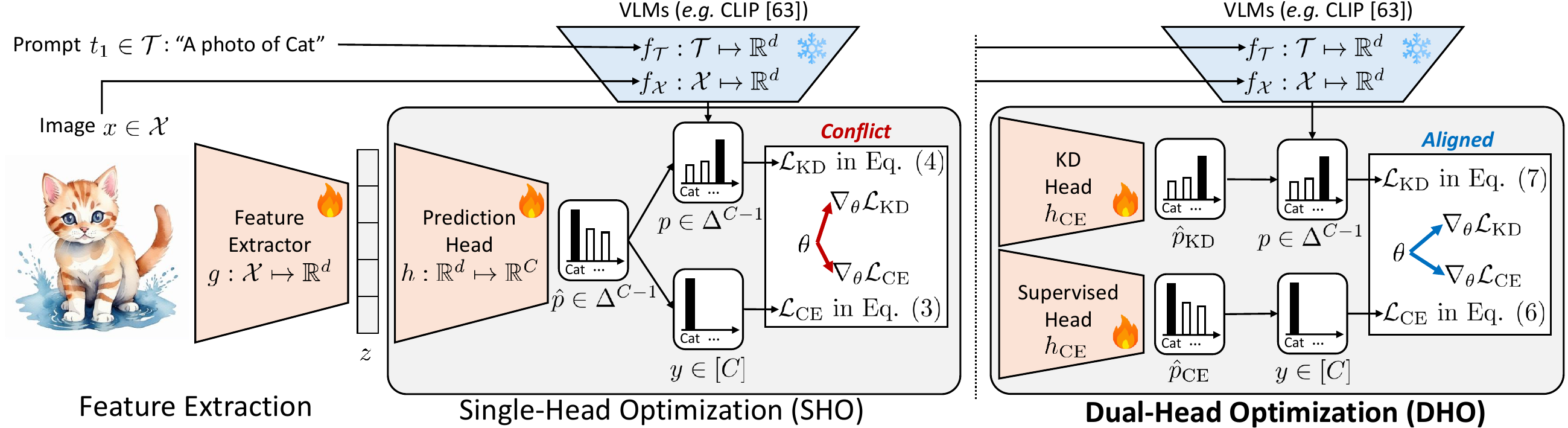}
    \vspace{-0.2in}
    \caption{\small
        \textbf{Conceptual illustration} on KD frameworks, Single-Head Optimization (SHO) and \textbf{\texttt{D}}ual-\textbf{\texttt{H}}ead \textbf{\texttt{O}}ptimization (\textbf{\texttt{DHO}}), for semi-supervised settings.
        As demonstrated in \Cref{fig:average_gradient_conflict}, we observe \emph{gradient conflict} of SHO.
        In contrast, \textbf{\texttt{DHO}} \textbf{mitigates such conflicts} by leveraging dual-head architectures in \Cref{fig:average_gradient_conflict}.
    }
\vspace{-0.15in}
\end{figure}

\myparagraph{Single-head optimization (SHO) of KD.}
Our method builds on \textit{logit distillation} in semi-supervised settings~\citep{hinton2015distilling, chen2020big} that combines supervised loss $\mathcal{L}_{\text{CE}}$ on the labeled dataset $\mathcal{D}^{(l)}$ with KD loss $\mathcal{L}_{\text{KD}}$ on both labeled and unlabeled datasets $\mathcal{D}^{(l)}\cup\mathcal{D}^{(u)}$, \ie, $\lambda\mathcal{L}_{\text{CE}} + (1-\lambda) \mathcal{L}_{\text{KD}}$.
Specifically, the supervised loss $\mathcal{L}_{\text{CE}}$ and the KD loss $\mathcal{L}_{\text{KD}}$ are defined as follows:
\vspace{-0.05in}
\begin{align}
    \mathcal{L}_{\text{CE}} &= \frac{1}{N}\sum_{n}\ell\left(\sigma (h(z_n^{(l)})), y_n\right), \label{eq:ce}\\
    \mathcal{L}_{\text{KD}} &= \frac{1}{N}\sum_{n}\KL\left[p_n^{(l)} \Vert \sigma(h(z_n^{(l)}))\right] + \frac{1}{M}\sum_{m}\KL\left[p_m^{(u)} \Vert \sigma(h(z_m^{(u)}))\right].  \label{eq:logit}
\end{align}
where $\ell$ denotes $\KL$ represent the cross-entropy and Kullback-Leibler divergence, respectively.
$z_n^{(l)}=g(x_n^{(l)})$ and $z_m^{(u)}=g(x_m^{(u)})\in\mathbb{R}^d$ are feature representations obtained by the feature extractor $g$. $p_n^{(l)}$ and $p_m^{(u)}$ are the categorical probability vectors of labeled $x_n^{(l)}$ and unlabeled data $x_m^{(u)}$, respectively, obtained by teacher VLM encoders $f_\mathcal{X}$ and $f_\mathcal{T}$.
Another well-studied single-head KD baseline is \textit{feature distillation}, which leverages mean squared error (MSE) loss to directly align feature representations extracted by the student encoder $g$ and the teacher image encoder $f_\mathcal{X}$.
We defer the details of feature distillation for VLMs to CLIP-KD~\citep{yang2024clipkd}.


\vspace{-0.05in}
\subsection{\textbf{\texttt{D}}ual-\textbf{\texttt{H}}ead \textbf{\texttt{O}}ptimization (\textbf{\texttt{DHO}})}\label{sec:dho}
\vspace{-0.05in}

\begin{wrapfigure}{r}{0.4\textwidth}
    \vspace{-0.2in}
    \includegraphics[width=0.4\textwidth]{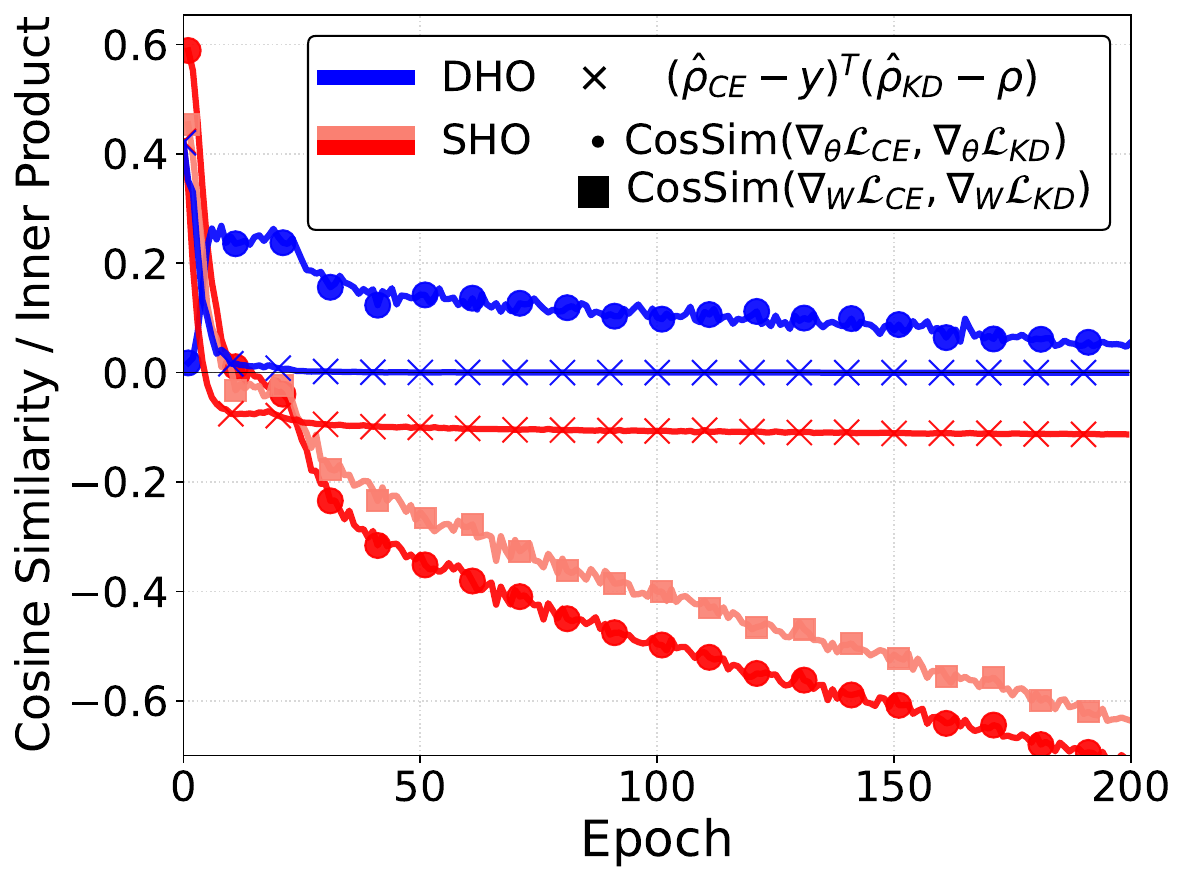}
    \vspace{-0.3in}
    \caption{\small \textbf{The average cosine similarity and inner product} over 10 datasets.}
    \vspace{-0.1in}
    \label{fig:average_gradient_conflict}
\end{wrapfigure}

\paragraph{Gradient conflicts in SHO.}

Logit distillation in SHO, as described in \Cref{sec:preliminary}, provides a simple approach for transferring knowledge from VLMs to task-specific models.
However, we find that its performance gain is \textit{suboptimal}, which we attribute to \textbf{gradient conflicts} between the supervised and KD loss signals which hinder effective feature learning~\citep{gradient_conflict1, gradient_conflict2, gradient_conflict3}.
As illustrated in \legendline{pink}{}{rectangle} and \legendline{red}{}{circle} of \Cref{fig:average_gradient_conflict}, both \textbf{the classifier weight $W$ of classifier $h$ and the parameter vector $\theta$ of feature extractor $g$} suffer from gradient conflicts:
the cosine similarity between their respective gradients turns negative, \ie, $\mathtt{CosSim}(\nabla_{\theta} \mathcal{L}_{\text{CE}}, \nabla_\theta \mathcal{L}_{\text{KD}})<0$ and $\mathtt{CosSim}(\nabla_{W} \mathcal{L}_{\text{CE}}, \nabla_W \mathcal{L}_{\text{KD}})<0$, indicating misaligned optimization directions. 

To understand this, we first analyze the gradient with respect to (w.r.t) the weight $W$ of a linear head $h$.
The gradients w.r.t the classifier weight $W$ are $\nabla_{W} \mathcal{L}_{\text{CE}} = (\hat{p} - y) z^\top$ and $\nabla_{W} \mathcal{L}_{\text{KD}} = (\hat{p} - p) z^\top$, respectively.
Their cosine similarity is proportional to $(\hat{p} - y)^\top (\hat{p} - p) \cdot \|z\|^2$:
\begin{gather}
\mathtt{CosSim}(\nabla_{W} \mathcal{L}_{\text{CE}}, \nabla_{W} \mathcal{L}_{\text{KD}}) \propto (\hat{p} - y)^\top (\hat{p} - p) \cdot \|z\|^2,
\end{gather}
which misaligns when $(\hat{p} - y)^\top (\hat{p} - p) < 0$ due to prediction mismatch, falling below zero during training (\legendxline{red}{} in \Cref{fig:average_gradient_conflict}), leading to gradient conflicts (\legendline{pink}{}{rectangle} in \Cref{fig:average_gradient_conflict}).

We observe that \textbf{similar gradient conflicts happen on the parameters $\theta$ in the feature extractor $g$}.
Let the gradients w.r.t to the feature representation $z$ be $\nabla_z \mathcal{L}_{\text{CE}} = W^\top (\hat{p} - y)$ and $\nabla_z \mathcal{L}_{\text{KD}} = W^\top (\hat{p} - p)$,
Then, applying the chain rule, the gradients with respect to the feature extractor parameters $\theta$ become $\nabla_\theta \mathcal{L}_{\text{CE}} = \nabla_z \mathcal{L}_{\text{CE}} \cdot \frac{\partial z}{\partial \theta}$ and $\nabla_\theta \mathcal{L}_{\text{KD}} = \nabla_z \mathcal{L}_{\text{KD}} \cdot \frac{\partial z}{\partial \theta}$.
Similarly, the cosine similarity for the feature extractor parameters becomes:
\begin{gather}
\mathtt{CosSim}(\nabla_{\theta} \mathcal{L}_{\text{CE}}, \nabla_\theta \mathcal{L}_{\text{KD}}) \propto (\hat{p} - y)^\top W \left(\frac{\partial z}{\partial \theta}\right)^\top \frac{\partial z}{\partial \theta} W^\top (\hat{p} - p)
\end{gather}
While the complexity of the Jacobian $\frac{\partial z}{\partial \theta}$ makes it difficult to theoretically guarantee gradient conflicts, we empirically observe that \textbf{gradient conflicts occur in the parameters $\theta$ of the feature extractor $g$} (\legendline{red}{}{circle} in \Cref{fig:average_gradient_conflict}), with $\mathtt{CosSim}(\nabla_{\theta} \mathcal{L}_{\text{CE}}, \nabla_\theta \mathcal{L}_{\text{KD}}) < 0$ .
Thus, we hypothesize that the gradient conflicts in the feature extractor $g$ are \textbf{propagated from the classification head} $h$.

\myparagraph{Dual-head architecture.}
To mitigate this issue, we propose \textbf{\texttt{D}}ual \textbf{\texttt{H}}ead \textbf{\texttt{O}}ptimization (\textbf{\texttt{DHO}}) to decouple $\mathcal{L}_\text{CE}$ and $\mathcal{L}_\text{KD}$ via \textit{two independent prediction heads}: \( h_{\text{CE}}(z) = W_{\text{CE}} z + b_{\text{CE}} \) and \( h_{\text{KD}}(z) = W_{\text{KD}} z + b_{\text{KD}} \), with \( W_{\text{CE}}, W_{\text{KD}} \in \mathbb{R}^{C \times d} \) and \( b_{\text{CE}}, b_{\text{KD}} \in \mathbb{R}^{C} \).
The corresponding losses are:
\begin{align}
\mathcal{L}_{\text{CE}} &= \frac{1}{N}\sum_{n} \ell\left(\sigma(h_{\text{CE}}(z_n^{(l)})), y_n\right), \\
\mathcal{L}_{\text{KD}} &= \frac{1}{N}\sum_{n} \KL\left[ p_n^{(l)} \Vert \sigma(h_{\text{KD}}(z_n^{(l)}))\right] + \frac{1}{M} \sum_{m} \KL\left[ p_m^{(u)} \Vert  \sigma(h_{\text{KD}}(z_m^{(u)}))\right],
\end{align}
where the final loss combines both objectives as \( \lambda\mathcal{L}_{\text{CE}} + (1 - \lambda)\mathcal{L}_{\text{KD}} \). 


\myparagraph{Mitigation of gradient conflict in \textbf{\texttt{DHO}}.}

In \textbf{\texttt{DHO}}, gradient conflicts in \textbf{the classification head naturally disappear by decoupling the optimization} of $W_{\text{CE}}$ and $W_{\text{KD}}$.
This separation enables each head to learn distinct signals without interference, reducing prediction mismatch (\legendxline{blue}{} of \Cref{fig:average_gradient_conflict}).
Let \( \hat{p}_{\text{CE}}{=}\sigma(h_{\text{CE}}(z)) \) and \( \hat{p}_{\text{KD}}{=}\sigma(h_{\text{KD}}(z)) \), then $\nabla_z \mathcal{L}_{\text{CE}}{=}W_{\text{CE}}^\top (\hat{p}_{\text{CE}} - y)$ and $\nabla_z \mathcal{L}_{\text{KD}}{=}W_{\text{KD}}^\top (\hat{p}_{\text{KD}} - p)$. The cosine similarity between gradients w.r.t $\theta$ in \textbf{\texttt{DHO}} is defined as:
\begin{gather}
\mathtt{CosSim}(\nabla_{\theta} \mathcal{L}_{\text{CE}}, \nabla_\theta \mathcal{L}_{\text{KD}}) \propto (\hat{p}_{\text{CE}} - y)^\top W_{\text{CE}} \left(\frac{\partial z}{\partial \theta}\right)^\top \frac{\partial z}{\partial \theta} W_{\text{KD}}^\top (\hat{p}_{\text{KD}} - p),\label{eq:cossim_dho}
\end{gather}
where we empirically find that \textbf{gradient conflicts in $\theta$ of the feature extractor $g$} for \textbf{\texttt{DHO}} are also resolved, \ie, maintaining positive gradient alignment throughout training (\ie, \Cref{eq:cossim_dho} > 0; as shown in \legendline{blue}{}{circle} of \Cref{fig:average_gradient_conflict}). It enables conflict-free representation learning, and leads to \textbf{better feature representation} compared to SHO, which empirically validated by linear evaluation in \Cref{tab:linear_evaluation}.
We defer an algorithm that describes the full training procedure of \textbf{\texttt{DHO}} to \Cref{alg:training}.


\vspace{-0.05in}
\subsection{Dual-head Interpolation}\label{sec:dual_head_interpolation}
\vspace{-0.05in}
After training, we find that using only one of the two heads at inference is \emph{suboptimal} (see \Cref{fig:F3_head_interpolation}).
Motivated by the mixture-of-experts paradigm~\citep{jacobs1991adaptive}, we adopt a \emph{simple yet effective} inference rule for \textbf{\texttt{DHO}} that linearly interpolates the output probability vectors of two heads:
\begin{align}
\hat{p}_{\textbf{\texttt{DHO}}}
= \alpha\, \sigma\!\big(h_{\text{CE}}(z)\big)
+ (1-\alpha)\, \sigma\!\big(h_{\text{KD}}(z)/\beta\big),
\label{eq:inference}
\end{align}
where $\alpha\in[0,1]$ balances the supervised and KD heads, $\beta>0$ is a temperature that softens the KD logits, and the final prediction is $\arg\max_{c\in[C]} \hat{p}_{\textbf{\texttt{DHO}}}$.
In practice, we tune $\alpha$ and $\beta$ on a validation set to reflect dataset-specific supervision quality and teacher accuracy, allowing the model to weight the more reliable source.
See \Cref{alg:inference} for the full inference procedure.

\myparagraph{Effect of $\alpha$ and $\beta$.}
We now demonstrate the effect of tuning $\alpha$ and $\beta$ using a validation set.
Under mild assumptions, \textbf{\texttt{DHO}} $\varepsilon$-approximates SHO in $\ell_1$ by setting $\alpha=\lambda$ and $\beta=1$:

\begin{assumption}[$\varepsilon$-convergence]

\label{main_assump:convergence}
\textit{Assume that, after sufficient training, both heads converge to their respective target distributions with $\ell_1$-bounded error:}
\begin{equation}
\sup_x \|\sigma(h_{\mathrm{CE}}(z)) - y\|_1 \le \varepsilon,\quad
\sup_x \|\sigma\!(h_{\mathrm{KD}}(z)) - p\|_1 \le \varepsilon, \quad \text{where} \quad \varepsilon\in\mathbb{R}_{>0}.
\end{equation}
\end{assumption}

\begin{theorem}[Inference equivalence]
\label{main_thm:inference_equiv}
Under \Cref{main_assump:convergence}, by setting $\alpha=\lambda$ and $\beta=1$, then
$\|\hat{p}_{\textbf{\texttt{DHO}}}-\hat{p}_{\mathrm{SHO}}\|_1 \le \varepsilon,$
where $\hat{p}_{\mathrm{SHO}}$ is the output of SHO optimally trained with $\lambda$.
\end{theorem}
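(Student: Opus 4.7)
The plan is to chain together two short calculations: first identify the pointwise minimizer of the SHO loss, then invoke the convergence assumption on each DHO head and close with the triangle inequality.

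First I would compute the optimal SHO output explicitly. For any fixed feature $z$, the combined SHO objective $\lambda\mathcal{L}_{\mathrm{CE}} + (1-\lambda)\mathcal{L}_{\mathrm{KD}}$ contributes, up to $\hat p$-independent terms, a pointwise expression of the form $-\sum_c[\lambda y_c + (1-\lambda)p_c]\log\hat p_c$, since cross-entropy and $\KL(p\Vert\hat p)$ agree in their $\hat p$-dependent part. Minimising this over the simplex by Gibbs' inequality (or a Lagrange-multiplier argument) gives
\begin{equation}
\hat p_{\mathrm{SHO}} \;=\; \lambda\, y \;+\; (1-\lambda)\, p.
\end{equation}
This is the interpretation of "optimally trained with $\lambda$'' that I would adopt and state up front.

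Next I would substitute $\alpha=\lambda$ and $\beta=1$ into \Cref{eq:inference} to get $\hat p_{\textbf{\texttt{DHO}}} = \lambda\,\sigma(h_{\mathrm{CE}}(z)) + (1-\lambda)\,\sigma(h_{\mathrm{KD}}(z))$. Writing $\sigma(h_{\mathrm{CE}}(z)) = y + e_{\mathrm{CE}}$ and $\sigma(h_{\mathrm{KD}}(z)) = p + e_{\mathrm{KD}}$ with $\|e_{\mathrm{CE}}\|_1,\|e_{\mathrm{KD}}\|_1\le\varepsilon$ by \Cref{main_assump:convergence}, the difference collapses to
\begin{equation}
\hat p_{\textbf{\texttt{DHO}}} - \hat p_{\mathrm{SHO}} \;=\; \lambda\, e_{\mathrm{CE}} + (1-\lambda)\, e_{\mathrm{KD}}.
\end{equation}
Applying the triangle inequality and convexity of the weights yields $\|\hat p_{\textbf{\texttt{DHO}}} - \hat p_{\mathrm{SHO}}\|_1 \le \lambda\varepsilon + (1-\lambda)\varepsilon = \varepsilon$, which is the claim.

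I do not expect any serious obstacle; the argument is essentially bookkeeping once the optimal SHO output is identified. The only mildly delicate step is justifying that $\lambda y + (1-\lambda)p$ actually is the SHO minimizer: this requires enough expressiveness for the single head to realise any distribution on the simplex at each $z$ (analogous to the implicit assumption made for the two heads in \Cref{main_assump:convergence}), and noting that both $\mathcal L_{\mathrm{CE}}$ with label $y$ and $\mathcal L_{\mathrm{KD}}$ with target $p$ reduce, up to $\hat p$-independent entropy terms, to cross-entropies whose weighted sum is again a cross-entropy against $\lambda y + (1-\lambda)p$. I would state this expressiveness assumption explicitly in the proof to avoid ambiguity, after which the rest is a one-line triangle-inequality bound.
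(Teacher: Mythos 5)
Your proposal is correct and follows essentially the same route as the paper: first identifying the optimal SHO output as the convex combination $\lambda y + (1-\lambda)p$ by minimizing the pointwise objective over the simplex (the paper's Theorem~\ref{thm:single_head_opt}, done via Lagrange multipliers), then bounding $\|\hat{p}_{\textbf{\texttt{DHO}}}-\hat{p}_{\mathrm{SHO}}\|_1$ by the triangle inequality with the $\varepsilon$-convergence assumption, exactly as in the paper's proof of Theorem~\ref{thm:inference_equiv}. Your explicit remark about the head-expressiveness needed to realize the minimizer is a reasonable clarification but does not change the argument.
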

Details and proofs are deferred to \Cref{sec:theoretical_analysis}. \Cref{main_thm:inference_equiv} implies that $\hat{p}_{\text{SHO}}$ trained with any $\lambda$ can be approximated by the dual-head interpolation in \Cref{eq:inference}, by setting $\alpha=\lambda$ and $\beta=1$. Here, $\lambda$ is a \textbf{training} hyperparameter of SHO, while $\alpha$ and $\beta$ are \textbf{inference} hyperparameters of \textbf{\texttt{DHO}}, allowing it to \textbf{emulate SHO hyperparameter tuning} without retraining.

\myparagraph{Language-aware initialization for VLM students}
In the case of VLM-to-VLM distillation, we leverage the text encoder $f_{\mathcal{T}}$ of teachers when initializing the dual heads $h_{\text{CE}}$ and $h_{\text{KD}}$.
Following prior work~\citep{li2022elevater}, we initialize the weights as $W_{\text{CE}}, W_{\text{KD}} \leftarrow [f_{\mathcal{T}}(t_1), \ldots, f_{\mathcal{T}}(t_C)]^\top \in \mathbb{R}^{C \times d}$.
We further align the prediction logic of KD head $h_\text{KD}$, with the cosine similarity-based approach of the teacher VLMs as follows:
\begin{align}
h_{\text{KD}} = \frac{1}{\zeta}[\mathtt{CosSim}(g(x), w_{1}), \ldots, \mathtt{CosSim}(g(x), w_C)]^\top \in \mathbb{R}^{C},
\label{eq:align}
\end{align}
where $w_c \in \mathbb{R}^d$ denotes the $c$-th row of $W_{\text{KD}}$.



\vspace{-0.05in}
\section{Experiments}\label{sec:experiments}
\vspace{-0.05in}


\subsection{Experimental Setups}
\label{sec:experimental_setups}
\vspace{-0.05in}

\paragraph{Datasets.}
For in-distribution evaluation, we use ImageNet~\citep{russakovsky2015imagenet} and 10 fine-grained datasets~\citep{fei2004learning,parkhi2012cats,krause20133d,nilsback2008automated,bossard2014food,maji2013fine,xiao2010sun,cimpoi2014describing,helber2019eurosat,soomro2012ucf101}. To assess out-of-distribution (OOD) generalization, we use four ImageNet variants~\citep{recht2019imagenet,wang2019learning,hendrycks2021many,hendrycks2021natural}. See \Cref{sec:appendix_datasets} for details.

\myparagraph{Baselines.}
We compare \textbf{\texttt{DHO}} with conventional single-head KD baselines;
\textbf{CE}: training only on labeled dataset $\mathcal{D}^{(l)}$ with cross entropy loss (\Cref{eq:ce}),
\textbf{KD (logit)}: on unlabeled dataset $\mathcal{D}^{(u)}$ with logit distillation (\Cref{eq:logit}), and
\textbf{KD (feature)}: on $\mathcal{D}^{(u)}$ with feature distillation~\citep{yang2024clipkd}.
We train on both $\mathcal{D}^{(l)}$ and $\mathcal{D}^{(u)}$ with \textbf{CE+KD (logit)} or \textbf{CE+KD (feature)}: combining CE with each KD variant using balancing hyperparameter $\lambda$.
We also consider dual-head KD approaches \textbf{SSKD}~\citep{he2021semi} and \textbf{DHKD}~\citep{yang2024dual}, though for different purposes as detailed in \Cref{sec:related_work}.

For in-distribution evaluation on ImageNet with low-shot settings, we compare against \textbf{self and semi-supervised learning}~\citep{chen2020big,assran2021semi,assran2022masked,cai2022semi,zheng2023simmatchv2}, \textbf{CLIP-based-training}~\citep{li2022elevater,liu2023learning}, \textbf{co-training}~\citep{rothenberger2023meta}, \textbf{KD}~\citep{chen2020big}, and \textbf{zero-shot VLMs}~\citep{zhai2023sigmoid,fang2023data}.

For OOD evaluation, we compare against VLM adaptation methods, including \textbf{VPT}~\citep{jia2022visual}, \textbf{CoOp}~\citep{zhou2022learning}, \textbf{PromptSRC}~\citep{khattak2023self}, and \textbf{CasPL}~\citep{wu2025cascade}.

\myparagraph{Implementation details.}
We evaluate across few-shot (1/2/4/8/16-shot) and low-shot (1\%/10\%) settings, treating remaining data as unlabeled.
We adopt Tip-Adapter-F~\citep{zhang2021tip} for few-shot teachers, denoting this variant as \textbf{\texttt{DHO-F}}.
For a fair comparison, we use the same hyperparameters for \textbf{\texttt{DHO}} and SHO baselines (except $\alpha, \beta$); for other methods, we report the published results. 
When validation data is unavailable (\eg, ImageNet), we fix $\beta = 0.5$ across all settings. We heuristically set $\alpha = 0.4$ for zero-shot teachers and $\alpha = 0.2$ for few-shot teachers, reflecting the latter’s higher reliability. In low-shot settings, we use $\alpha = 0.5$ due to increased label availability.  
See \Cref{sec:algorithms_and_implementation} for details.


\begin{table}[t]
\centering
\vspace{-0.25in}
\caption{\small Results on \textbf{ImageNet} under few-shot semi-supervision using \textbf{ResNet-18} and 
\textbf{ResNet-50}. \textbf{\texttt{DHO}} \textbf{consistently outperforms all baselines and even the teacher} with ResNet-50 (\eg, +0.7/1.9/2.2/3.4/4.4\% with a zero-shot teacher; +1.3/1.5/1.4/1.7/1.4\% with a few-shot teacher).
}
\vspace{-0.12in}
\resizebox{\textwidth}{!}{%
    \begin{tabular}{lcccccc|cccccc}
        \toprule
        & \multicolumn{6}{c|}{\textbf{ResNet-18 trained from scratch}} & \multicolumn{6}{c}{\textbf{Self-supervised ResNet-50}} \\

        \textbf{Method} & 0-shot & 1-shot & 2-shot & 4-shot & 8-shot & 16-shot & 0-shot & 1-shot & 2-shot & 4-shot & 8-shot & 16-shot \\
        \midrule
        \rowcolor{gray!20}\multicolumn{13}{l}{\textit{Single-head KD methods}} \\
        KD & 51.0 & - & - & - & - & - & 61.0 & - & - & - & - & - \\
        CE & - & 0.7 & 1.1 & 1.8 & 3.4 & 8.2 & - & 11.4 & 17.3 & 26.4 & 36.7 & 47.0 \\
        CE+KD (feature) & - & 17.1 & 23.5 & 28.0 & 32.2 & 33.8 & - & 23.0 & 32.3 & 41.3 & 48.2 & 54.3 \\
        CE+KD (logit) & - & 50.5 & 50.6 & 50.6 & 51.0 & 51.2 & - & 60.4 & 60.8 & 61.2 & 61.6 & 62.3 \\
        \midrule
        \rowcolor{gray!20}\multicolumn{13}{l}{\textit{Dual-head KD methods}} \\
        SSKD~\citep{he2021semi} & - & 42.5 & 46.2 & 48.0 & 50.6 & 52.0 & - & 55.2 & 58.1 & 60.0 & 62.3 & 64.0 \\
        DHKD~\citep{yang2024dual} & - & 19.7 & 23.5 & 23.4 & 23.7 & 26.8 & - & 25.6 & 34.8 & 42.7 & 49.2 & 55.2 \\
        \textbf{\texttt{DHO}} (Ours) & - & \cellcolor{yellow!15}\underline{51.8} & \cellcolor{yellow!15}\underline{52.4} & \cellcolor{yellow!15}\underline{52.6} & \cellcolor{yellow!15}\underline{53.3} & \cellcolor{yellow!15}\underline{54.5} & - & \cellcolor{yellow!15}\underline{61.0} & \cellcolor{yellow!15}\underline{62.1} & \cellcolor{yellow!15}\underline{62.5} & \cellcolor{yellow!15}\underline{63.7} & \cellcolor{yellow!15}\underline{64.7} \\
        \textbf{\texttt{DHO-F}} (Ours) & - & \cellcolor{green!15}\textbf{53.7} & \cellcolor{green!15}\textbf{54.2} & \cellcolor{green!15}\textbf{54.8} & \cellcolor{green!15}\textbf{56.2} & \cellcolor{green!15}\textbf{57.7} & - & \cellcolor{green!15}\textbf{62.3} & \cellcolor{green!15}\textbf{63.1} & \cellcolor{green!15}\textbf{63.9} & \cellcolor{green!15}\textbf{65.5} & \cellcolor{green!15}\textbf{66.8} \\
        \midrule
        \rowcolor{gray!20}\multicolumn{13}{l}{\textit{Teacher Models (Resnet-50)}} \\
        CLIP & 60.3 & - & - & - & - & - & 60.3 & - & - & - & - & - \\
        Tip-Adapter-F & - & 61.0 & 61.6 & 62.5 & 63.8 & 65.4 & - & 61.0 & 61.6 & 62.5 & 63.8 & 65.4 \\
        \bottomrule
    \end{tabular}
}
\label{tab:F1_imagenet}
\vspace{-0.15in}
\end{table}

\begin{figure}[!t]
    \centering
    \includegraphics[width=\textwidth, scale=0.8]{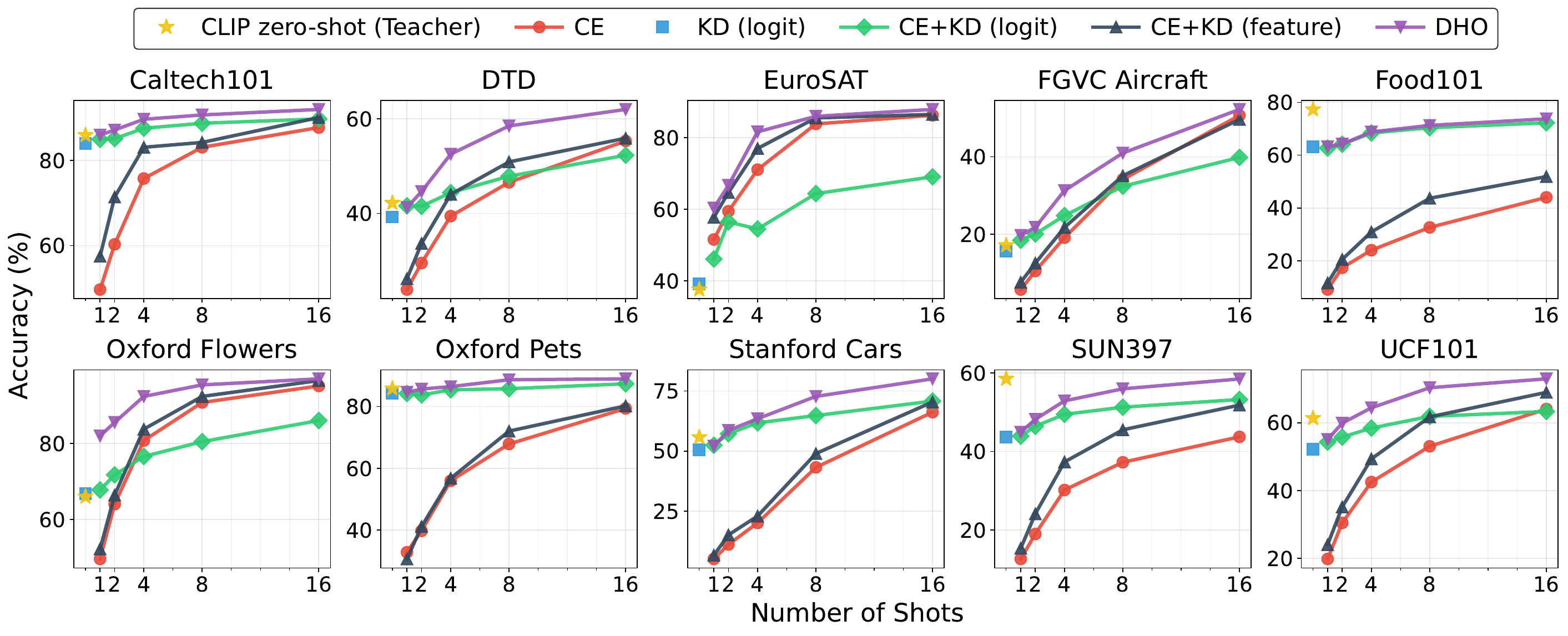}
    \vspace{-0.3in}
    \caption{\small
    Results on \textbf{10 datasets} under few-shot semi-supervision using \textbf{ResNet-18} with \textbf{zero-shot teacher}.
    }
    \label{fig:F1_10_datasets_zeroshot}
    \vspace{-0.15in}
\end{figure}

\begin{figure}[!t]
    \centering
    \includegraphics[width=\textwidth, scale=0.8]{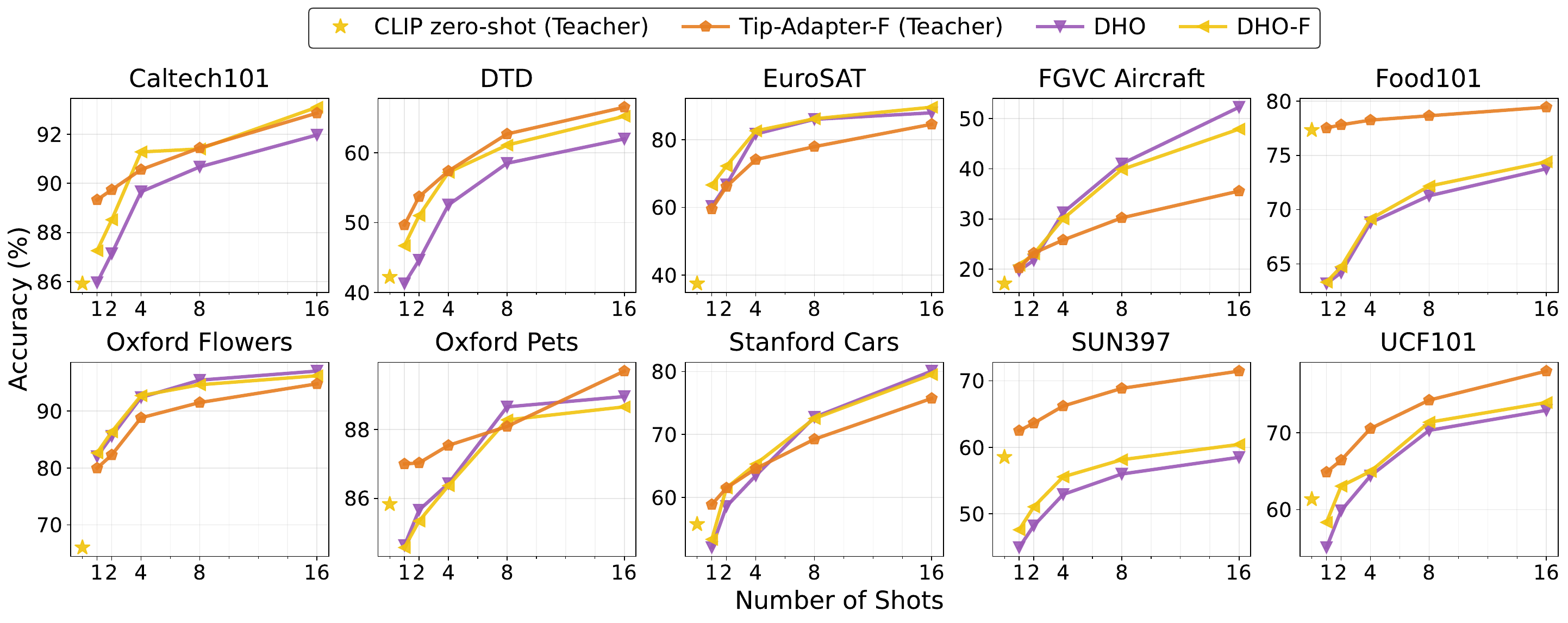}
    \vspace{-0.3in}
    \caption{\small Results on \textbf{10 datasets} using \textbf{ResNet-18} with either zero- or \textbf{few-shot teacher}.
    }
    \label{fig:F1_10_datasets_fewshot}
    \vspace{-0.2in}
\end{figure}

\begin{table}[t]
\vspace{-0.25in}
    \centering
    \caption{
        Results on \textbf{ImageNet and its distribution-shifted variants}. $\dagger$: backbone frozen during training. \textbf{\texttt{DHO}} \textbf{consistently improves} VLM adaptation methods and \textbf{achieves SoTA results} on OOD benchmarks.
    }
    \vspace{-0.1in}
    \label{tab:domain_generalization_adaptation}
    \resizebox{0.9\textwidth}{!}{
        \begin{tabular}{lcccccccccc}
            \toprule
            \textbf{Method} & \textbf{Labeled Data} & \textbf{Teacher Model} & \textbf{Unlabeled Data} & \textbf{Val} & \textbf{V2} & \textbf{Sketch} & \textbf{R} & \textbf{A} & \textbf{Avg} \\
            \midrule
            \rowcolor{gray!20}\multicolumn{10}{l}{\textit{CLIP Zero-shot}} \\
            CLIP & - & - & - & 66.7 & 60.8 & 46.2 & 74.0 & 47.8 & 59.1 \\
            \midrule
            \rowcolor{gray!20}\multicolumn{10}{l}{\textit{Linear Evaluation}} \\
            Linear Evaluation & 1\% & - & - & 72.8 & 64.2 & 47.0 & 74.6 & 48.6 & 61.4 \\
            \textbf{\texttt{DHO}}$^\dagger$ & 1\% & \checkmark & - & \cellcolor{yellow!15}\underline{73.4} & \cellcolor{yellow!15}\underline{65.3} & \cellcolor{yellow!15}\underline{48.3} & \cellcolor{yellow!15}\underline{75.8} & \cellcolor{yellow!15}\underline{49.4} & \cellcolor{yellow!15}\underline{62.4} \\
            \textbf{\texttt{DHO}}$^\dagger$ & 1\% & \checkmark & \checkmark & \cellcolor{green!15}\textbf{74.6} & \cellcolor{green!15}\textbf{66.4} & \cellcolor{green!15}\textbf{49.2} & \cellcolor{green!15}\textbf{76.1} & \cellcolor{green!15}\textbf{49.8} & \cellcolor{green!15}\textbf{63.2} \\
            \midrule
            \rowcolor{gray!20}\multicolumn{10}{l}{\textit{Visual Prompt Tuning}} \\
            VPT & 1.25\% & - & - & 73.6 & 64.6 & 47.7 & 75.2 & 48.7 & 62.0 \\
            VPT+\textbf{\texttt{DHO}} & 1.25\% & \checkmark & - & \cellcolor{yellow!15}\underline{73.6} & \cellcolor{yellow!15}\underline{65.3} & \cellcolor{yellow!15}\underline{48.7} & \cellcolor{yellow!15}\underline{75.9} & \cellcolor{yellow!15}\underline{49.1} & \cellcolor{yellow!15}\underline{62.5} \\
            VPT+\textbf{\texttt{DHO}} & 1.25\% & \checkmark & \checkmark & \cellcolor{green!15}\textbf{75.1} & \cellcolor{green!15}\textbf{66.8} & \cellcolor{green!15}\textbf{50.0} & \cellcolor{green!15}\textbf{76.9} & \cellcolor{green!15}\textbf{50.5} & \cellcolor{green!15}\textbf{63.9} \\
            \midrule
            \rowcolor{gray!20}\multicolumn{10}{l}{\textit{VLM Text-encoder Prompt Tuning}} \\
            CoOp & 1.25\% & - & - & 71.5 & 64.2 & 48.0 & 75.2 & \cellcolor{yellow!15}\underline{49.7} & 61.7 \\
            CoOp+CasPL & 1.25\% & \checkmark & \checkmark & 71.9 & 64.3 & 48.3 & 76.0 & - & - \\
            CoOp+\textbf{\texttt{DHO}} & 1.25\% & \checkmark & - & \cellcolor{yellow!15}\underline{72.8} & \cellcolor{yellow!15}\underline{65.5} & \cellcolor{yellow!15}\underline{49.3} & \cellcolor{yellow!15}\underline{76.4} & 49.5 & \cellcolor{yellow!15}\underline{62.7} \\
            CoOp+\textbf{\texttt{DHO}} & 1.25\% & \checkmark & \checkmark & \cellcolor{green!15}\textbf{73.4} & \cellcolor{green!15}\textbf{66.2} & \cellcolor{green!15}\textbf{49.5} & \cellcolor{green!15}\textbf{77.0} & \cellcolor{green!15}\textbf{50.5} & \cellcolor{green!15}\textbf{63.3} \\
            \midrule
            \rowcolor{gray!20}\multicolumn{10}{l}{\textit{VLM Multimodal Prompt Tuning}} \\
            PromptSRC & 1.25\% & - & - & 71.3 & 64.4 & 49.6 & 77.8 & 50.9 & 62.8 \\
            PromptSRC+CasPL & 1.25\% & \checkmark & \checkmark & 72.8 & \cellcolor{yellow!15}\underline{65.7} & \cellcolor{yellow!15}\underline{49.7} & \cellcolor{yellow!15}\underline{77.9} & - & - \\
            PromptSRC+\textbf{\texttt{DHO}} & 1.25\% & \checkmark & - & \cellcolor{yellow!15}\underline{73.0} & 65.3 & 49.5 & 77.8 & \cellcolor{green!15}\textbf{51.3} & \cellcolor{yellow!15}\underline{63.4} \\
            PromptSRC+\textbf{\texttt{DHO}} & 1.25\% & \checkmark & \checkmark & \cellcolor{green!15}\textbf{73.6} & \cellcolor{green!15}\textbf{66.1} & \cellcolor{green!15}\textbf{49.8} & \cellcolor{green!15}\textbf{78.1} & \cellcolor{yellow!15}\underline{51.0} & \cellcolor{green!15}\textbf{63.7} \\
            \bottomrule
        \end{tabular}
    }
    \vspace{-0.2in}
\end{table}

\vspace{-0.05in}
\subsection{Main Results}
\label{sec:experimental_results}
\vspace{-0.05in}

\paragraph{Effectiveness of \textbf{\texttt{DHO}} compared to conventional KD baselines.}
\Cref{tab:F1_imagenet} presents results on ImageNet under few-shot semi-supervision using ResNet-18/ResNet-50 student models and ResNet-50 VLM teachers.
\textbf{\texttt{DHO}} \textbf{consistently outperforms} all baselines across all settings. Other dual-head methods (\ie, SSKD/DHKD), not designed for few-shot semi-supervision, even underperform CE+KD on logits.
Notably, with ResNet-50, \textbf{\texttt{DHO}} \textbf{outperforms the teacher in every few-shot setting} (\eg, +0.7/1.8/2.2/3.4/4.4\% or +1.3/1.5/1.4/1.7/1.4\% with zero-shot or few-shot teachers).

\vspace{-0.02in}
Next, we evaluate \textbf{\texttt{DHO}} on 10 additional datasets using the ResNet-18 student and ResNet-50 VLM teachers.
In \Cref{fig:F1_10_datasets_zeroshot}, we observe that \textbf{\texttt{DHO}} also \textbf{consistently outperforms all baseline methods} across 10 datasets, while relative rankings between baselines vary across datasets.
\Cref{fig:F1_10_datasets_fewshot} further demonstrates that the few-shot teacher is more effective than using a zero-shot teacher for \textbf{\texttt{DHO}}.
Remarkably, \textbf{the ResNet-18 student model trained with \texttt{DHO} achieves better performance than the ResNet-50 teacher model} in most cases, demonstrating knowledge transfer capability of \textbf{\texttt{DHO}}.

\begin{wraptable}{r}{0.51\textwidth}
\centering
\vspace{-0.1in}
\caption{\small
    Results on \textbf{ImageNet under low-shot settings}. For CT and MCT methods, numbers in parentheses indicate the number of different architectures for co-training.
}
\label{tab:semisup_results}
\vspace{-0.15in}
\scriptsize
\setlength{\tabcolsep}{1.5pt}
\resizebox{\linewidth}{!}{
\begin{tabular}{l@{\hspace{1.5pt}}l@{\hspace{1pt}}r@{\hspace{2pt}}r@{\hspace{2pt}}r}
    \toprule
    \textbf{Method} & \textbf{Architecture} & \textbf{Params} & \textbf{1\%} & \textbf{10\%} \\
    &  & \textbf{(M)} & \textbf{(\%)} & \textbf{(\%)} \\
    \midrule
    \rowcolor{gray!20}\multicolumn{5}{l}{\textit{Self and Semi-supervised Learning}} \\
    MSN & ViT-B/4 & 86 & 75.7 & 80.2 \\
    Semi-ViT & ViT-L/14 & 307 & 77.3 & 83.3 \\
    Semi-ViT & ViT-H/14 & 632 & 80.0 & 84.3 \\
    \midrule
    \rowcolor{gray!20}\multicolumn{5}{l}{\textit{CLIP-based Training}} \\
    CLIP & ViT-B/16 & 86 & 74.3 & 80.4 \\
    REACT & ViT-B/16 & 86 & 76.1 & 80.8 \\
    REACT (Gated-Image) & ViT-B/16 & 129 & 77.4 & 81.8 \\
    CLIP & ViT-L/14 & 304 & 80.5 & 84.7 \\
    REACT & ViT-L/14 & 304 & \cellcolor{yellow!15}\underline{81.6} & 85.1 \\
    REACT (Gated-Image) & ViT-L/14 & 380 & \cellcolor{yellow!15}\underline{81.6} & 85.0 \\
    \midrule
    \rowcolor{gray!20}\multicolumn{5}{l}{\textit{Co-training based Methods}} \\
    CT & Multi-arch (2) & 608 & 80.1 & 85.1 \\
    MCT & Multi-arch (2) & 608 & 80.7 & 85.2 \\
    CT & Multi-arch (4) & 1071 & 80.0 & 84.8 \\
    MCT & Multi-arch (4) & 1071 & 80.5 & \cellcolor{yellow!15}\underline{85.8} \\
    \midrule
    \rowcolor{gray!20}\multicolumn{5}{l}{\textit{Knowledge Distillation}} \\
    SimCLR v2 distill & ResNet-50 (2×+SK) & 140 & 75.9 & 80.2 \\
    SimCLR v2 self-distill & ResNet-154 (3×+SK) & 795 & 76.6 & 80.9 \\
    CE + KD (logit) & ViT-B/16 & 86 & 79.8 & 80.4 \\
    CE + KD (logit) & ViT-L/14 & 304 & 83.1 & 83.6 \\
    \textbf{\texttt{DHO}}  & ViT-B/16 & 86 & \cellcolor{yellow!15}\underline{81.6} & 82.8 \\
    \textbf{\texttt{DHO}}  & ViT-L/14 & 304 & \cellcolor{green!15}\textbf{84.6} & \cellcolor{green!15}\textbf{85.9} \\
    \midrule
    \rowcolor{gray!20}\multicolumn{5}{l}{\textit{Zero-shot VLMs}} \\
    SigLIP & ViT-SO400M/14 & 400 & 83.1 & - \\
    DFN & ViT-H/14 & 632 & 84.4 & - \\
    \bottomrule
\end{tabular}
}
\vspace{-0.15in}
\end{wraptable}

\myparagraph{\textbf{\texttt{DHO}} achieves SoTA performance on ImageNet under low-shot settings.}
We compare \textbf{\texttt{DHO}} to previous state-of-the-art (SoTA) methods on ImageNet under 1\% and 10\% labeled data. All results are taken from published papers, except for \textbf{\texttt{DHO}} and CE+KD (logit).
As shown in \Cref{tab:semisup_results}, \textbf{\texttt{DHO}} with ViT-L/14 \textbf{surpasses the previous SoTA (\eg, +3.0\%/+0.1\%), while using fewer parameters (\eg, 76M/767M)}, on both 1\%/10\% labeled data.
Notably, CE + KD (logit) outperforms semi-supervised methods with the same parameters, demonstrating that they are \textbf{suboptimal compared to methods leveraging the rich representations of VLMs}.

\myparagraph{\textbf{\texttt{DHO}} achieves SoTA performance on ImageNet OOD benchmarks.}
We evaluate \textbf{\texttt{DHO}} across various VLM adaptation approaches.
As shown in \Cref{tab:domain_generalization_adaptation}, \textbf{\texttt{DHO}} consistently improves different adaptation methods.
Compared to CasPL, \textbf{\texttt{DHO}} exhibits superior performance across both in-distribution (Val) and out-of-distribution (V2/Sketch/R/A) benchmarks, \textbf{establishing new SoTA OOD results} in semi-supervised setting.
This suggests that \textbf{joint training with labeled supervision and teacher distillation provides a more effective strategy} than the sequential approach of CasPL, aligning with our hypothesis on \textbf{\texttt{DHO}} for resolving gradient conflicts.
See \Cref{sec:appendix_robustness_full} for results with fully trained models.

\myparagraph{Addtional results.}
See \Cref{sec:additional_experiments} for additional results of \textbf{MobileNetV2}, or comparison to \textbf{PCGrad}~\citep{yu2020gradient} and \textbf{category-aware KD methods}~\citep{zhao2022decoupled,lv2024wasserstein}.

\vspace{-0.05in}
\subsection{Analysis}
\label{sec:experimental_analysis}
\vspace{-0.05in}

\begin{wraptable}{r}{0.5\textwidth}
    \vspace{-0.2in}
    \scriptsize
    \caption{\small
        \textbf{Inference overhead} using RTX 4090.
    }
    \label{tab:computation_cost}
    \vspace{-0.1in}
    \resizebox{0.5\textwidth}{!}{%
    \begin{tabular}{lrrr}
        \toprule
        \textbf{Model} & \textbf{Params} & \textbf{FLOPs} & \textbf{Throughput} \\
         & \textbf{(M)} & \textbf{(G)} & \textbf{(im/s)} \\
        \midrule
        ResNet-18 & 11.69 & 1.83 & 3525.7 \\
        \rowcolor{gray!20} + \textbf{\texttt{DHO}} & 12.20 {\scriptsize\color{red}(+4.4\%)} & 1.83 {\scriptsize\color{red}(+0.0\%)} & 3518.6 {\scriptsize\color{red}(-0.20\%)} \\
        \midrule
        ResNet-50 & 25.56 & 4.14 & 1018.4 \\
        \rowcolor{gray!20} + \textbf{\texttt{DHO}} & 27.61 {\scriptsize\color{red}(+8.0\%)} & 4.15 {\scriptsize\color{red}(+0.2\%)} & 1016.4 {\scriptsize\color{red}(-0.20\%)} \\
        \bottomrule
    \end{tabular}%
    }
    \vspace{-0.15in}
\end{wraptable}

\paragraph{Minimal computational overhead of \textbf{\texttt{DHO}}.}
As shown in \Cref{tab:computation_cost}, \textbf{\texttt{DHO}} introduces \textbf{negligible computational overhead with minimal parameter increase} on the ImageNet with 1000 classes, which can be further reduced down for datasets with fewer classes ($C<1000$).
We further provide inference computational overhead for other models in \Cref{tab:full_computation_cost}.

\begin{figure}[t]
    \vspace{-0.25in}
    \centering
    \includegraphics[width=\textwidth, scale=0.8]{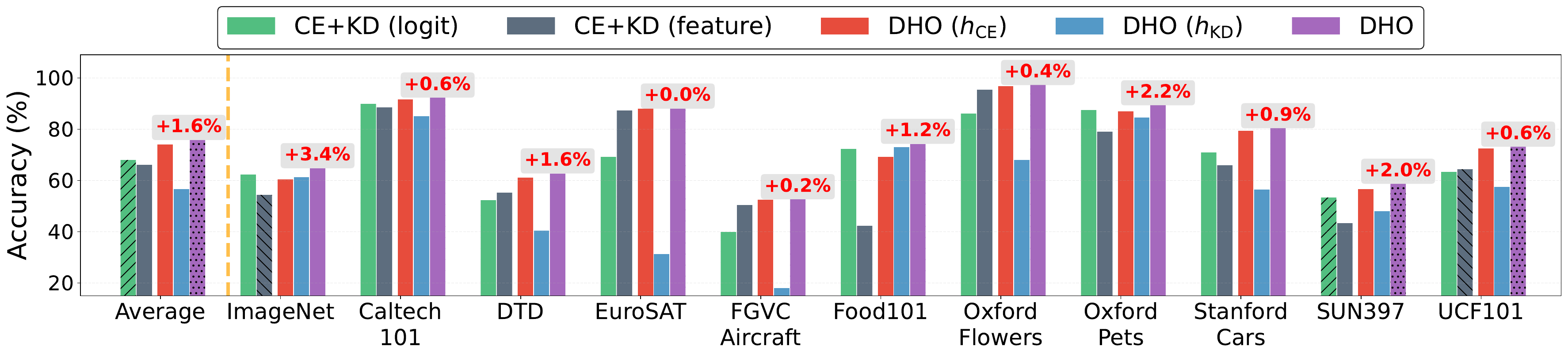}
    \vspace{-0.25in}
    \caption{\small Results of ablation studies on \textbf{dual-heads interpolation strategy} in \Cref{eq:inference} of \textbf{\texttt{DHO}}.
    }
    \label{fig:F3_head_interpolation}
    \vspace{-0.2in}
\end{figure}

\begin{wrapfigure}{r}{0.53\textwidth}
    \vspace{-0.1in}
    \centering
    \vspace{-0.1in} 
    \includegraphics[width=\linewidth]{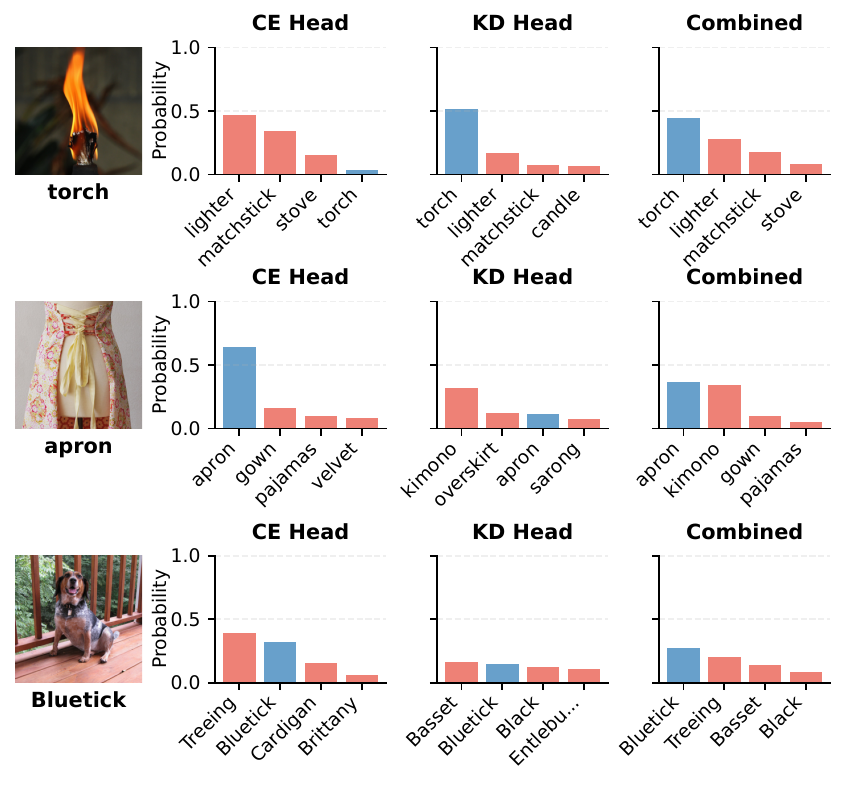}
    \vspace{-0.3in}
    \caption{\small Qualitative results on challenging cases.}
    \label{fig:qualitative_results}
    \vspace{-0.1in}
\end{wrapfigure}

\myparagraph{Effectiveness of dual-head interpolation.}
We evaluate the effectiveness of dual-head interpolation (\Cref{eq:inference}) by comparing \textbf{\texttt{DHO}} with CE+KD (logit), CE+KD (feature), and ablations \textbf{\texttt{DHO}} ($h_{\text{CE}}$) and \textbf{\texttt{DHO}} ($h_{\text{KD}}$), which predict using only one head (\ie, $\alpha=1$ or $0$).
As shown in \Cref{fig:F3_head_interpolation}, \textbf{\texttt{DHO}} outperforms \textbf{\texttt{DHO}} ($h_{\text{CE}}$) by an average of 1.6\% across 11 datasets, with a maximum gain of +3.4\% on ImageNet and no degradation on any dataset.
Since $\alpha$ and $\beta$ are inference-time hyperparameters, dual-head interpolation introduces minimal overhead while consistently \textbf{improving or maintaining performance}.
We also investigate the effectiveness of our adaptive weighting strategy in \Cref{sec:appendix_adaptive_weighting}.
\Cref{fig:qualitative_results} illustrates three challenging examples:
CE head ($h_{\text{CE}}$) is correct in the first, KD head ($h_{\text{KD}}$) in the second, and both fail in the third case, yet the proposed combined prediction is correct—demonstrating the ability of \textbf{\texttt{DHO}} to resolve individual head failures.
See \Cref{sec:appendix_qualitative_results} for additional analysis on theses challenging examples.

\begin{wrapfigure}{r}{0.49\textwidth}
    \vspace{-0.2in}
    \centering
    \captionof{table}{\small Linear evaluation results.}
    \vspace{-0.1in}
    \begin{minipage}[t]{0.49\textwidth}
        \centering
        \scriptsize
        \resizebox{\textwidth}{!}{%
        \begin{tabular}{l@{\hspace{5pt}}c@{\hspace{5pt}}c@{\hspace{5pt}}}
            \toprule
            \textbf{Method} & \textbf{Top-1 (\%)} & \textbf{Top-5 (\%)} \\
            \midrule
            CE+KD (feature)~\citep{yang2024clipkd} & 62.3 & 85.0 \\
            CE+KD (logit)~\citep{chen2020big} & \cellcolor{yellow!15}\underline{66.2} & \cellcolor{yellow!15}\underline{88.8} \\
            \rowcolor{gray!20} \textbf{\texttt{DHO}} & \cellcolor{green!15}\textbf{67.1} & \cellcolor{green!15}\textbf{89.3} \\
            \bottomrule
        \end{tabular}
        }
        \label{tab:linear_evaluation}
    \end{minipage}
    \hfill
    \begin{minipage}[t]{0.49\textwidth}
        \centering
        \includegraphics[width=\linewidth]{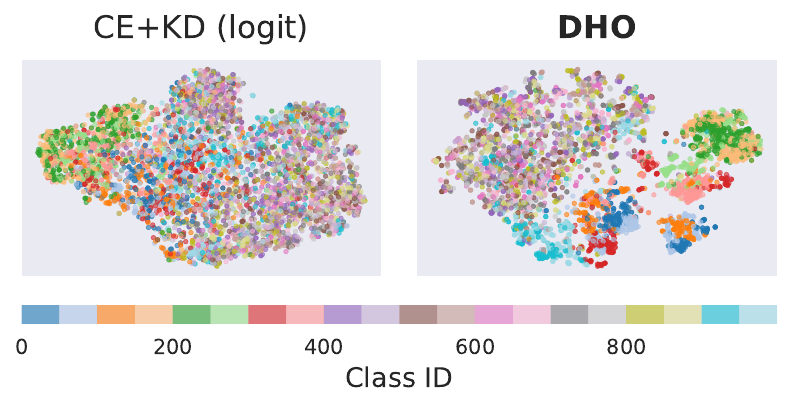}
        \vspace{-0.3in}
        \captionof{figure}{\small t-SNE visualization.}
        \label{fig:tsne}
    \end{minipage}    \label{fig:F2_feature_representation}
    \vspace{-0.18in}
\end{wrapfigure}

\myparagraph{Enhanced feature representation of \textbf{\texttt{DHO}}.}
To validate our claim that mitigating gradient conflicts improves feature representations, we evaluate features using the standard \emph{linear evaluation} protocol~\citep{chen2020big}. We train CE+KD (feature), CE+KD (logit), and \textbf{\texttt{DHO}} under the 16-shot semi-supervised setting on ImageNet, freeze the feature extractor $g$, and train a new prediction linear head $h_{\text{LE}}$ on the top of $g$ using fully labeled data.
As shown in \Cref{tab:linear_evaluation}, \textbf{\texttt{DHO}} achieves higher Top-1 and Top-5 accuracy than other methods (\eg, +0.9\% and 0.5\%, respectively).
To further assess feature quality, we visualize embeddings $z$ using t-SNE~\citep{van2008visualizing} in \Cref{fig:tsne}.
Compared to the CE+KD (logit) baseline, \textbf{\texttt{DHO}} produces more compact and class-separated feature clusters.
These results support our claim that \textbf{\texttt{DHO}} \textbf{enhances feature representations} by mitigating gradient conflicts; this improvement leads to better performance of \textbf{\texttt{DHO}} compared to SHO, as discussed through \Cref{tab:F1_imagenet}, \Cref{fig:F1_10_datasets_zeroshot,fig:F1_10_datasets_fewshot,fig:mobilenet_baseline,fig:mobilenet_fewshot}.

\begin{wraptable}{r}{0.4\textwidth}
    \caption{
        \small Results on \textbf{language-aware initialization} and \textbf{KD-head alignment} for VLM students on ImageNet with 1\% labeled data.
    }
    \vspace{-0.1in}
    \centering
    \scriptsize
    \setlength{\tabcolsep}{3pt} 
    \resizebox{0.4\textwidth}{!}{
        \begin{tabular}{ccr}
            \toprule
            \textbf{Init ($h_{\text{CE}}$/$h_{\text{KD}}$).} & \textbf{Align.} & \textbf{Accuracy (\%)} \\
            \midrule
            \xmark\ /\ \xmark & \xmark & 78.3 \\
            \cmark\ /\ \xmark & \xmark & 78.5 {\scriptsize\color{blue}(+0.2)} \\
            \cmark\ /\ \cmark & \xmark & \cellcolor{yellow!15}\underline{78.6} {\scriptsize\color{blue}(+0.3)} \\
            \rowcolor{gray!20}\cmark\ /\ \cmark & \cmark & \cellcolor{green!15}\textbf{78.7} {\scriptsize\color{blue}(+0.4)} \\
            \bottomrule
        \end{tabular}
    }            
    \label{tab:language_init}
    \vspace{-0.1in}
\end{wraptable}

\myparagraph{Effectiveness of Init. and Align in \Cref{sec:dual_head_interpolation}.}
To assess the effect of the proposed language-aware initialization and KD-head alignment in \Cref{sec:dual_head_interpolation}, we run ablations that apply language-aware initialization (\textbf{Init.}) to either $h_{\text{CE}}$ or $h_{\text{KD}}$, and optionally enable KD-head alignment (\textbf{Align.}). We evaluate on ImageNet with 1\% labels using a ViT-B/16 student and ViT-L/14 teacher for computational efficiency. As shown in \Cref{tab:language_init}, applying Init. to either head independently improves accuracy (\eg, +0.2/+0.3\%), and adding Align. yields additional gains (\eg, +0.4\%).

\begin{wrapfigure}{r}{0.43\textwidth}
    \vspace{-0.2in}
    \centering
    \includegraphics[width=\linewidth]{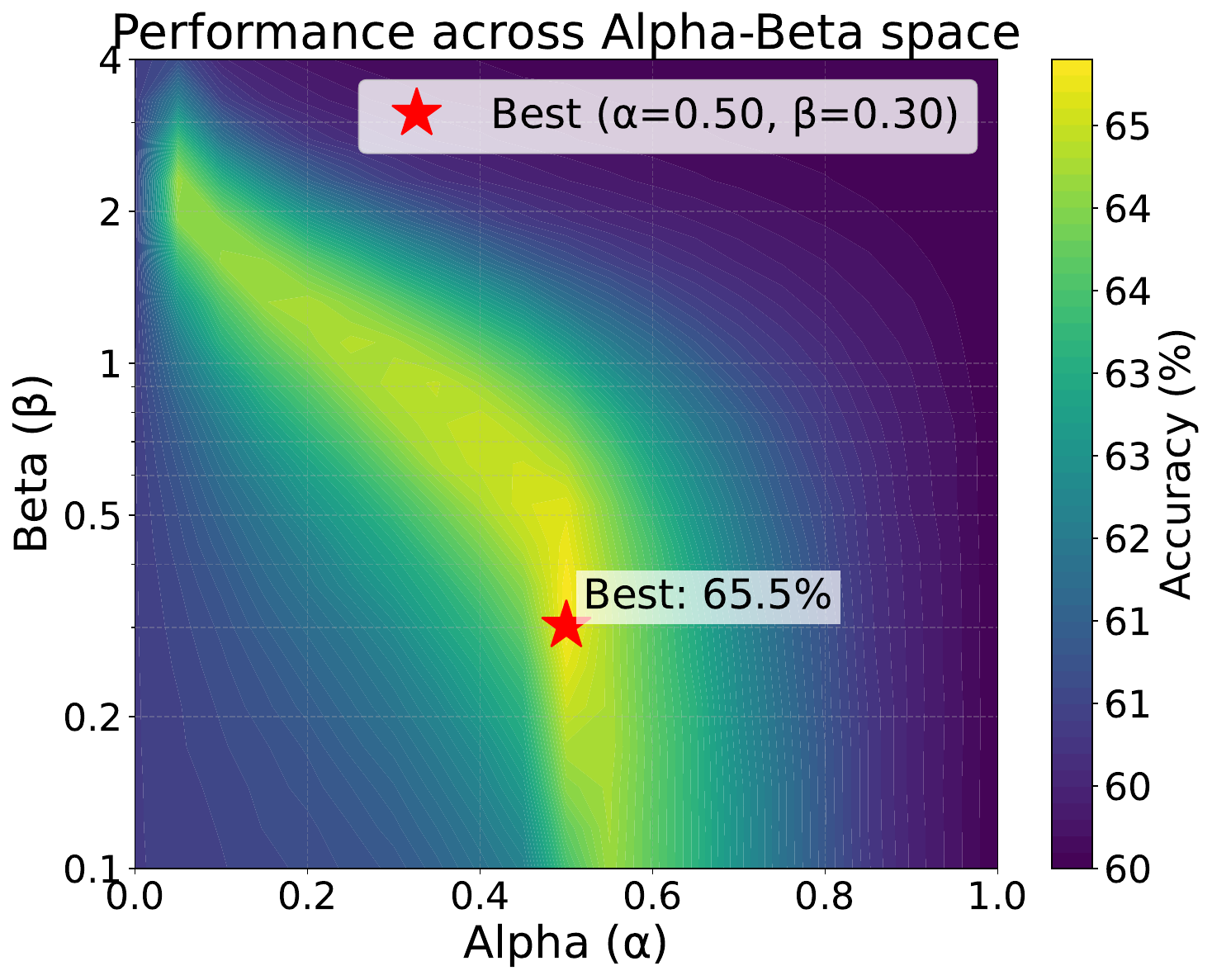}
    \vspace{-0.25in}
    \caption{
        \small \textbf{Grid search results for $\alpha$ and $\beta$}.
    }
    \label{fig:ablation_alpha_beta}
    \vspace{-0.2in}
\end{wrapfigure}

\myparagraph{Effect of $\alpha$ and $\beta$.} In \Cref{main_thm:inference_equiv}, we show that under mild assumptions (\Cref{main_assump:convergence}), \textbf{\texttt{DHO}} \(\epsilon\)-approximates the SHO baseline, namely CE+KD (logit), in the \(\ell_1\) norm by setting \(\alpha{=}\lambda\) and \(\beta{=}1\). This enables \textbf{\texttt{DHO}} to emulate SHO hyperparameter tuning at inference time without retraining. To study how \(\alpha\) and \(\beta\) affect performance, we visualize a grid search on ImageNet with ResNet-50 under the 16-shot setting. We use ImageNet because it lacks a validation set; thus $\alpha$ and $\beta$ are heuristically set to 0.4 and 0.5 in \Cref{sec:experimental_setups}. As shown in \Cref{fig:ablation_alpha_beta}, with balanced heads ($\alpha\approx0.5)$, performance remains stable for \(\beta\in[0.1,1]\), and it degrades at extreme \(\alpha\) values (\(\alpha\approx0\) or $1$) regardless of \(\beta\).   The accuracy peaks at 65.5\% with balanced heads (\(\alpha\approx0.5\)) and a modest temperature (\(\beta\approx0.3\)), close to the performance of our heuristic setting (64.7\%). Importantly, \textbf{these gains require no additional training}, demonstrating the \textbf{efficiency of post-training hyperparameter search} for the proposed dual-head interpolation in \Cref{sec:dual_head_interpolation}.

\vspace{-0.05in}
\section{Conclusion, Limitation, and Future Work}
\label{sec:conclusion_future_work}
\vspace{-0.05in}

We identify the \emph{fundamental challenge} of transferring the zero-/few-shot capabilities of large vision-language models (VLMs) to task-specific models using limited labeled data in semi-supervised settings.
Conventional knowledge distillation (KD) methods suffer from \textit{gradient conflicts} where teacher signals and labeled data signals interfere with each other.
We propose \textbf{\texttt{DHO}} (\textbf{\texttt{D}}ual-\textbf{\texttt{H}}ead \textbf{\texttt{O}}ptimization), a \emph{plug-and-play} framework using two classification heads with separate objectives.
It \textbf{mitigates gradient conflicts} in both heads and feature extractor, \textbf{improving feature representations}, while \textbf{enabling flexible post-training hyperparameter adjustment} via linear combination of outputs.
Experiments across 15 datasets show \textbf{\texttt{DHO}} \textbf{outperforms conventional KD methods}, achieving \textbf{state-of-the-art results on both} ImageNet semi-supervised learning with fewer parameters and on out-of-distribution tasks when combined with existing adaptation techniques.

\vspace{-0.12in}
\paragraph{Limitations and future work.}
While our main focus is to address the core problem of \emph{gradient conflicts} arising from \textbf{general knowledge of VLMs and task-specific patterns from labeled data}, we acknowledge several limitations that present opportunities for future research.

We primarily focus on VLMs as general knowledge source, as they provide strong zero-shot and few-shot capabilities in visual recognition tasks.
However, we believe the fundamental problem of gradient conflicts between general foundational knowledge and task-specific objectives extends beyond VLMs.
This conflict likely emerges in various scenarios where large pre-trained foundation models (such as instruction-tuned language models~\citep{wei2021finetuned, liu2023visual,bai2023qwen,achiam2023gpt,team2023gemini}) are adapted to specialized downstream tasks.
Exploring how \textbf{\texttt{DHO}} performs across diverse foundation models and modalities remains an important direction for future work.

Our implementation is limited to visual recognition tasks, as they represent the most fundamental domain in computer vision and provide an ideal testbed for analyzing gradient conflicts in knowledge transfer.
Also, VLMs' strong zero-shot and few-shot capabilities in visual recognition tasks make them natural candidates for knowledge distillation.
However, extending our approach to more complex visual understanding tasks such as object detection and segmentation would be a promising direction with dedicated architectural adaptations.


\section*{Reproducibility Statement}

We ensure reproducibility by conducting all experiments on publicly available datasets including ImageNet~\citep{russakovsky2015imagenet}, Caltech101~\citep{fei2004learning}, and nine other standard benchmarks detailed in \Cref{sec:appendix_datasets}.
All experimental configurations are fully specified in \Cref{sec:experimental_setups}, including exact hyperparameters, learning rate and optimizer settings, model architectures (ResNet-18, ResNet-50, MobileNetV2, ViT-B/16, ViT-L/14), and training procedures with dual-head optimization detailed in Algorithm~\ref{alg:training}.
We use publicly available pre-trained models (CLIP ResNet-50, CLIP ViT variants from OpenAI, DINO ResNet-50) with exact checkpoint specifications provided in \Cref{tab:implementation_details}.
The inference procedure with hyperparameters $\alpha$ and $\beta$ is fully documented in Algorithm~\ref{alg:inference}, with specific values for each setting ($\alpha=0.4$ for zero-shot, $\alpha=0.2$ for few-shot teachers, $\beta=0.5$ when validation unavailable).
We commit to releasing our complete codebase, training scripts, pretrained checkpoints for ImageNet, and evaluation protocols upon acceptance.
All experiments are conducted using PyTorch with fixed random seeds on NVIDIA RTX 4090 GPUs (4× for ImageNet, 8× for VLM distillation, single GPU for other benchmarks).

\section*{Ethics Statement}

Our work presents no new ethical concerns as \textbf{\texttt{DHO}} is a purely technical contribution for knowledge distillation using existing publicly available datasets (ImageNet, Caltech101, and standard computer vision benchmarks) that contain no personally identifiable information.
No additional data collection, human subjects research, or sensitive information processing is involved in this work.
We acknowledge that vision-language models may contain biases from their pre-training data, which our distillation framework preserves without amplification.
The computational requirements vary by dataset scale (single GPU for most benchmarks, 4× GPUs for ImageNet, 8× GPUs for VLM distillation), which remains modest compared to training large vision-language models from scratch, promoting research accessibility while minimizing environmental impact.


\bibliographystyle{iclr2026_conference}
\bibliography{reference}

\begin{thebibliography}{125}
\providecommand{\natexlab}[1]{#1}
\providecommand{\url}[1]{\texttt{#1}}
\expandafter\ifx\csname urlstyle\endcsname\relax
  \providecommand{\doi}[1]{doi: #1}\else
  \providecommand{\doi}{doi: \begingroup \urlstyle{rm}\Url}\fi

\bibitem[Abbasi~Koohpayegani et~al.(2020)Abbasi~Koohpayegani, Tejankar, and Pirsiavash]{abbasi2020compress}
Soroush Abbasi~Koohpayegani, Ajinkya Tejankar, and Hamed Pirsiavash.
\newblock Compress: Self-supervised learning by compressing representations.
\newblock \emph{Advances in Neural Information Processing Systems}, 33:\penalty0 12980--12992, 2020.

\bibitem[Achiam et~al.(2023)Achiam, Adler, Agarwal, Ahmad, Akkaya, Aleman, Almeida, Altenschmidt, Altman, Anadkat, et~al.]{achiam2023gpt}
Josh Achiam, Steven Adler, Sandhini Agarwal, Lama Ahmad, Ilge Akkaya, Florencia~Leoni Aleman, Diogo Almeida, Janko Altenschmidt, Sam Altman, Shyamal Anadkat, et~al.
\newblock Gpt-4 technical report.
\newblock \emph{arXiv preprint arXiv:2303.08774}, 2023.

\bibitem[Assran et~al.(2021)Assran, Caron, Misra, Bojanowski, Joulin, Ballas, and Rabbat]{assran2021semi}
Mahmoud Assran, Mathilde Caron, Ishan Misra, Piotr Bojanowski, Armand Joulin, Nicolas Ballas, and Michael Rabbat.
\newblock Semi-supervised learning of visual features by non-parametrically predicting view assignments with support samples.
\newblock In \emph{Proceedings of the IEEE/CVF International Conference on Computer Vision}, pp.\  8443--8452, 2021.

\bibitem[Assran et~al.(2022)Assran, Caron, Misra, Bojanowski, Bordes, Vincent, Joulin, Rabbat, and Ballas]{assran2022masked}
Mahmoud Assran, Mathilde Caron, Ishan Misra, Piotr Bojanowski, Florian Bordes, Pascal Vincent, Armand Joulin, Mike Rabbat, and Nicolas Ballas.
\newblock Masked siamese networks for label-efficient learning.
\newblock In \emph{European Conference on Computer Vision}, pp.\  456--473. Springer, 2022.

\bibitem[Ba(2016)]{ba2016layer}
Jimmy~Lei Ba.
\newblock Layer normalization.
\newblock \emph{arXiv preprint arXiv:1607.06450}, 2016.

\bibitem[Bai et~al.(2023)Bai, Bai, Chu, Cui, Dang, Deng, Fan, Ge, Han, Huang, et~al.]{bai2023qwen}
Jinze Bai, Shuai Bai, Yunfei Chu, Zeyu Cui, Kai Dang, Xiaodong Deng, Yang Fan, Wenbin Ge, Yu~Han, Fei Huang, et~al.
\newblock Qwen technical report.
\newblock \emph{arXiv preprint arXiv:2309.16609}, 2023.

\bibitem[Bolya et~al.(2022)Bolya, Fu, Dai, Zhang, Feichtenhofer, and Hoffman]{bolya2022token}
Daniel Bolya, Cheng-Yang Fu, Xiaoliang Dai, Peizhao Zhang, Christoph Feichtenhofer, and Judy Hoffman.
\newblock Token merging: Your vit but faster.
\newblock \emph{arXiv preprint arXiv:2210.09461}, 2022.

\bibitem[Bossard et~al.(2014)Bossard, Guillaumin, and Van~Gool]{bossard2014food}
Lukas Bossard, Matthieu Guillaumin, and Luc Van~Gool.
\newblock Food-101--mining discriminative components with random forests.
\newblock In \emph{Computer vision--ECCV 2014: 13th European conference, zurich, Switzerland, September 6-12, 2014, proceedings, part VI 13}, pp.\  446--461. Springer, 2014.

\bibitem[Cai et~al.(2022)Cai, Ravichandran, Favaro, Wang, Modolo, Bhotika, Tu, and Soatto]{cai2022semi}
Zhaowei Cai, Avinash Ravichandran, Paolo Favaro, Manchen Wang, Davide Modolo, Rahul Bhotika, Zhuowen Tu, and Stefano Soatto.
\newblock Semi-supervised vision transformers at scale.
\newblock \emph{Advances in Neural Information Processing Systems}, 35:\penalty0 25697--25710, 2022.

\bibitem[Caron et~al.(2021)Caron, Touvron, Misra, J{\'e}gou, Mairal, Bojanowski, and Joulin]{caron2021emerging}
Mathilde Caron, Hugo Touvron, Ishan Misra, Herv{\'e} J{\'e}gou, Julien Mairal, Piotr Bojanowski, and Armand Joulin.
\newblock Emerging properties in self-supervised vision transformers.
\newblock In \emph{Proceedings of the IEEE/CVF international conference on computer vision}, pp.\  9650--9660, 2021.

\bibitem[Chen et~al.(2023)Chen, Zhang, Han, Chen, Shi, Xu, and Xu]{chen2023vlp}
Fei-Long Chen, Du-Zhen Zhang, Ming-Lun Han, Xiu-Yi Chen, Jing Shi, Shuang Xu, and Bo~Xu.
\newblock Vlp: A survey on vision-language pre-training.
\newblock \emph{Machine Intelligence Research}, 20\penalty0 (1):\penalty0 38--56, 2023.

\bibitem[Chen et~al.(2019)Chen, Wang, Xu, Yang, Liu, Shi, Xu, Xu, and Tian]{chen2019data}
Hanting Chen, Yunhe Wang, Chang Xu, Zhaohui Yang, Chuanjian Liu, Boxin Shi, Chunjing Xu, Chao Xu, and Qi~Tian.
\newblock Data-free learning of student networks.
\newblock In \emph{Proceedings of the IEEE/CVF international conference on computer vision}, pp.\  3514--3522, 2019.

\bibitem[Chen \& Er(2025)Chen and Er]{gradient_conflict3}
Jie Chen and Meng~Joo Er.
\newblock Mitigating gradient conflicts via expert squads in multi-task learning.
\newblock \emph{Neurocomputing}, 614:\penalty0 128832, 2025.

\bibitem[Chen et~al.(2020)Chen, Kornblith, Swersky, Norouzi, and Hinton]{chen2020big}
Ting Chen, Simon Kornblith, Kevin Swersky, Mohammad Norouzi, and Geoffrey~E Hinton.
\newblock Big self-supervised models are strong semi-supervised learners.
\newblock \emph{Advances in neural information processing systems}, 33:\penalty0 22243--22255, 2020.

\bibitem[Chen et~al.(2024)Chen, Qiao, Sun, and Li]{chen2024comkd}
Yifan Chen, Xiaozhen Qiao, Zhe Sun, and Xuelong Li.
\newblock Comkd-clip: Comprehensive knowledge distillation for contrastive language-image pre-traning model.
\newblock \emph{arXiv preprint arXiv:2408.04145}, 2024.

\bibitem[Cherti et~al.(2023)Cherti, Beaumont, Wightman, Wortsman, Ilharco, Gordon, Schuhmann, Schmidt, and Jitsev]{cherti2023reproducible}
Mehdi Cherti, Romain Beaumont, Ross Wightman, Mitchell Wortsman, Gabriel Ilharco, Cade Gordon, Christoph Schuhmann, Ludwig Schmidt, and Jenia Jitsev.
\newblock Reproducible scaling laws for contrastive language-image learning.
\newblock In \emph{Proceedings of the IEEE/CVF Conference on Computer Vision and Pattern Recognition}, pp.\  2818--2829, 2023.

\bibitem[Cho \& Hariharan(2019)Cho and Hariharan]{cho2019efficacy}
Jang~Hyun Cho and Bharath Hariharan.
\newblock On the efficacy of knowledge distillation.
\newblock In \emph{Proceedings of the IEEE/CVF international conference on computer vision}, pp.\  4794--4802, 2019.

\bibitem[Cimpoi et~al.(2014)Cimpoi, Maji, Kokkinos, Mohamed, and Vedaldi]{cimpoi2014describing}
Mircea Cimpoi, Subhransu Maji, Iasonas Kokkinos, Sammy Mohamed, and Andrea Vedaldi.
\newblock Describing textures in the wild.
\newblock In \emph{Proceedings of the IEEE conference on computer vision and pattern recognition}, pp.\  3606--3613, 2014.

\bibitem[Dehghani et~al.(2023)Dehghani, Djolonga, Mustafa, Padlewski, Heek, Gilmer, Steiner, Caron, Geirhos, Alabdulmohsin, et~al.]{dehghani2023scaling}
Mostafa Dehghani, Josip Djolonga, Basil Mustafa, Piotr Padlewski, Jonathan Heek, Justin Gilmer, Andreas~Peter Steiner, Mathilde Caron, Robert Geirhos, Ibrahim Alabdulmohsin, et~al.
\newblock Scaling vision transformers to 22 billion parameters.
\newblock In \emph{International Conference on Machine Learning}, pp.\  7480--7512. PMLR, 2023.

\bibitem[Dong et~al.(2023)Dong, Bao, Zheng, Zhang, Chen, Yang, Zeng, Zhang, Yuan, Chen, et~al.]{dong2023maskclip}
Xiaoyi Dong, Jianmin Bao, Yinglin Zheng, Ting Zhang, Dongdong Chen, Hao Yang, Ming Zeng, Weiming Zhang, Lu~Yuan, Dong Chen, et~al.
\newblock Maskclip: Masked self-distillation advances contrastive language-image pretraining.
\newblock In \emph{Proceedings of the IEEE/CVF Conference on Computer Vision and Pattern Recognition}, pp.\  10995--11005, 2023.

\bibitem[Dosovitskiy(2020)]{dosovitskiy2020image}
Alexey Dosovitskiy.
\newblock An image is worth 16x16 words: Transformers for image recognition at scale.
\newblock \emph{arXiv preprint arXiv:2010.11929}, 2020.

\bibitem[Du et~al.(2023)Du, Zhao, Sheng, Li, and Chen]{du2023semi}
Pan Du, Suyun Zhao, Zisen Sheng, Cuiping Li, and Hong Chen.
\newblock Semi-supervised learning via weight-aware distillation under class distribution mismatch.
\newblock In \emph{Proceedings of the IEEE/CVF International Conference on Computer Vision}, pp.\  16410--16420, 2023.

\bibitem[Fang et~al.(2023)Fang, Jose, Jain, Schmidt, Toshev, and Shankar]{fang2023data}
Alex Fang, Albin~Madappally Jose, Amit Jain, Ludwig Schmidt, Alexander Toshev, and Vaishaal Shankar.
\newblock Data filtering networks.
\newblock \emph{arXiv preprint arXiv:2309.17425}, 2023.

\bibitem[Fang et~al.(2021{\natexlab{a}})Fang, Song, Wang, Shen, Wang, and Song]{fang2021contrastive}
Gongfan Fang, Jie Song, Xinchao Wang, Chengchao Shen, Xingen Wang, and Mingli Song.
\newblock Contrastive model inversion for data-free knowledge distillation.
\newblock \emph{arXiv preprint arXiv:2105.08584}, 2021{\natexlab{a}}.

\bibitem[Fang et~al.(2021{\natexlab{b}})Fang, Wang, Hu, Wang, Yang, and Liu]{fang2021compressing}
Zhiyuan Fang, Jianfeng Wang, Xiaowei Hu, Lijuan Wang, Yezhou Yang, and Zicheng Liu.
\newblock Compressing visual-linguistic model via knowledge distillation.
\newblock In \emph{Proceedings of the IEEE/CVF International Conference on Computer Vision}, pp.\  1428--1438, 2021{\natexlab{b}}.

\bibitem[Fang et~al.(2021{\natexlab{c}})Fang, Wang, Wang, Zhang, Yang, and Liu]{fang2021seed}
Zhiyuan Fang, Jianfeng Wang, Lijuan Wang, Lei Zhang, Yezhou Yang, and Zicheng Liu.
\newblock Seed: Self-supervised distillation for visual representation.
\newblock \emph{arXiv preprint arXiv:2101.04731}, 2021{\natexlab{c}}.

\bibitem[Fei-Fei et~al.(2004)Fei-Fei, Fergus, and Perona]{fei2004learning}
Li~Fei-Fei, Rob Fergus, and Pietro Perona.
\newblock Learning generative visual models from few training examples: An incremental bayesian approach tested on 101 object categories.
\newblock In \emph{2004 conference on computer vision and pattern recognition workshop}, pp.\  178--178. IEEE, 2004.

\bibitem[Gan et~al.(2022)Gan, Li, Li, Wang, Liu, Gao, et~al.]{gan2022vision}
Zhe Gan, Linjie Li, Chunyuan Li, Lijuan Wang, Zicheng Liu, Jianfeng Gao, et~al.
\newblock Vision-language pre-training: Basics, recent advances, and future trends.
\newblock \emph{Foundations and Trends{\textregistered} in Computer Graphics and Vision}, 14\penalty0 (3--4):\penalty0 163--352, 2022.

\bibitem[Gao et~al.(2024)Gao, Geng, Zhang, Ma, Fang, Zhang, Li, and Qiao]{gao2024clip}
Peng Gao, Shijie Geng, Renrui Zhang, Teli Ma, Rongyao Fang, Yongfeng Zhang, Hongsheng Li, and Yu~Qiao.
\newblock Clip-adapter: Better vision-language models with feature adapters.
\newblock \emph{International Journal of Computer Vision}, 132\penalty0 (2):\penalty0 581--595, 2024.

\bibitem[Gao et~al.(2022)Gao, Liu, Xu, Zhang, Li, Ji, and Shen]{gao2022pyramidclip}
Yuting Gao, Jinfeng Liu, Zihan Xu, Jun Zhang, Ke~Li, Rongrong Ji, and Chunhua Shen.
\newblock Pyramidclip: Hierarchical feature alignment for vision-language model pretraining.
\newblock \emph{Advances in neural information processing systems}, 35:\penalty0 35959--35970, 2022.

\bibitem[Guo et~al.(2024)Guo, Xu, Zhang, Ren, Ma, Ju, Wang, Chen, and Yang]{guo2024boldsymbol}
Qingpei Guo, Furong Xu, Hanxiao Zhang, Wang Ren, Ziping Ma, Lin Ju, Jian Wang, Jingdong Chen, and Ming Yang.
\newblock {$\mathbf{M}^2$}-encoder: {A}dvancing {B}ilingual {I}mage-{T}ext {U}nderstanding by {L}arge-scale {E}fficient {P}retraining.
\newblock \emph{arXiv preprint arXiv:2401.15896}, 2024.

\bibitem[He et~al.(2016)He, Zhang, Ren, and Sun]{he2016deep}
Kaiming He, Xiangyu Zhang, Shaoqing Ren, and Jian Sun.
\newblock Deep residual learning for image recognition.
\newblock In \emph{Proceedings of the IEEE conference on computer vision and pattern recognition}, pp.\  770--778, 2016.

\bibitem[He et~al.(2021)He, Liu, Liang, Zheng, Liao, Cheng, and Mei]{he2021semi}
Lingxiao He, Wu~Liu, Jian Liang, Kecheng Zheng, Xingyu Liao, Peng Cheng, and Tao Mei.
\newblock Semi-supervised domain generalizable person re-identification.
\newblock \emph{arXiv preprint arXiv:2108.05045}, 2021.

\bibitem[Helber et~al.(2019)Helber, Bischke, Dengel, and Borth]{helber2019eurosat}
Patrick Helber, Benjamin Bischke, Andreas Dengel, and Damian Borth.
\newblock Eurosat: A novel dataset and deep learning benchmark for land use and land cover classification.
\newblock \emph{IEEE Journal of Selected Topics in Applied Earth Observations and Remote Sensing}, 12\penalty0 (7):\penalty0 2217--2226, 2019.

\bibitem[Hendrycks \& Gimpel(2016)Hendrycks and Gimpel]{hendrycks2016gaussian}
Dan Hendrycks and Kevin Gimpel.
\newblock Gaussian error linear units (gelus).
\newblock \emph{arXiv preprint arXiv:1606.08415}, 2016.

\bibitem[Hendrycks et~al.(2021{\natexlab{a}})Hendrycks, Basart, Mu, Kadavath, Wang, Dorundo, Desai, Zhu, Parajuli, Guo, Song, Steinhardt, and Gilmer]{hendrycks2021many}
Dan Hendrycks, Steven Basart, Norman Mu, Saurav Kadavath, Frank Wang, Evan Dorundo, Rahul Desai, Tyler Zhu, Samyak Parajuli, Mike Guo, Dawn Song, Jacob Steinhardt, and Justin Gilmer.
\newblock The many faces of robustness: A critical analysis of out-of-distribution generalization.
\newblock \emph{ICCV}, 2021{\natexlab{a}}.

\bibitem[Hendrycks et~al.(2021{\natexlab{b}})Hendrycks, Zhao, Basart, Steinhardt, and Song]{hendrycks2021natural}
Dan Hendrycks, Kevin Zhao, Steven Basart, Jacob Steinhardt, and Dawn Song.
\newblock Natural adversarial examples.
\newblock In \emph{Proceedings of the IEEE/CVF conference on computer vision and pattern recognition}, pp.\  15262--15271, 2021{\natexlab{b}}.

\bibitem[Hinton(2015)]{hinton2015distilling}
Geoffrey Hinton.
\newblock Distilling the knowledge in a neural network.
\newblock \emph{arXiv preprint arXiv:1503.02531}, 2015.

\bibitem[Huang et~al.(2022)Huang, You, Wang, Qian, and Xu]{huang2022knowledge}
Tao Huang, Shan You, Fei Wang, Chen Qian, and Chang Xu.
\newblock Knowledge distillation from a stronger teacher.
\newblock \emph{Advances in Neural Information Processing Systems}, 35:\penalty0 33716--33727, 2022.

\bibitem[Huang et~al.(2024)Huang, Shakeri, Dolz, Boudiaf, Bahig, and Ben~Ayed]{huang2024lp++}
Yunshi Huang, Fereshteh Shakeri, Jose Dolz, Malik Boudiaf, Houda Bahig, and Ismail Ben~Ayed.
\newblock Lp++: A surprisingly strong linear probe for few-shot clip.
\newblock In \emph{Proceedings of the IEEE/CVF Conference on Computer Vision and Pattern Recognition}, pp.\  23773--23782, 2024.

\bibitem[Jacobs et~al.(1991)Jacobs, Jordan, Nowlan, and Hinton]{jacobs1991adaptive}
Robert~A Jacobs, Michael~I Jordan, Steven~J Nowlan, and Geoffrey~E Hinton.
\newblock Adaptive mixtures of local experts.
\newblock \emph{Neural computation}, 3\penalty0 (1):\penalty0 79--87, 1991.

\bibitem[Jia et~al.(2021)Jia, Yang, Xia, Chen, Parekh, Pham, Le, Sung, Li, and Duerig]{jia2021scaling}
Chao Jia, Yinfei Yang, Ye~Xia, Yi-Ting Chen, Zarana Parekh, Hieu Pham, Quoc Le, Yun-Hsuan Sung, Zhen Li, and Tom Duerig.
\newblock Scaling up visual and vision-language representation learning with noisy text supervision.
\newblock In \emph{International conference on machine learning}, pp.\  4904--4916. PMLR, 2021.

\bibitem[Jia et~al.(2022)Jia, Tang, Chen, Cardie, Belongie, Hariharan, and Lim]{jia2022visual}
Menglin Jia, Luming Tang, Bor-Chun Chen, Claire Cardie, Serge Belongie, Bharath Hariharan, and Ser-Nam Lim.
\newblock Visual prompt tuning.
\newblock In \emph{European Conference on Computer Vision}, pp.\  709--727. Springer, 2022.

\bibitem[Khattak et~al.(2023{\natexlab{a}})Khattak, Rasheed, Maaz, Khan, and Khan]{khattak2023maple}
Muhammad~Uzair Khattak, Hanoona Rasheed, Muhammad Maaz, Salman Khan, and Fahad~Shahbaz Khan.
\newblock Maple: Multi-modal prompt learning.
\newblock In \emph{Proceedings of the IEEE/CVF Conference on Computer Vision and Pattern Recognition}, pp.\  19113--19122, 2023{\natexlab{a}}.

\bibitem[Khattak et~al.(2023{\natexlab{b}})Khattak, Wasim, Naseer, Khan, Yang, and Khan]{khattak2023self}
Muhammad~Uzair Khattak, Syed~Talal Wasim, Muzammal Naseer, Salman Khan, Ming-Hsuan Yang, and Fahad~Shahbaz Khan.
\newblock Self-regulating prompts: Foundational model adaptation without forgetting.
\newblock In \emph{Proceedings of the IEEE/CVF International Conference on Computer Vision}, pp.\  15190--15200, 2023{\natexlab{b}}.

\bibitem[Kim et~al.(2024)Kim, Jang, and Yang]{kim2024promptkd}
Gyeongman Kim, Doohyuk Jang, and Eunho Yang.
\newblock Promptkd: Distilling student-friendly knowledge for generative language models via prompt tuning.
\newblock \emph{arXiv preprint arXiv:2402.12842}, 2024.

\bibitem[Kimura et~al.(2018)Kimura, Ghahramani, Takeuchi, Iwata, and Ueda]{kimura2018few}
Akisato Kimura, Zoubin Ghahramani, Koh Takeuchi, Tomoharu Iwata, and Naonori Ueda.
\newblock Few-shot learning of neural networks from scratch by pseudo example optimization.
\newblock \emph{arXiv preprint arXiv:1802.03039}, 2018.

\bibitem[Krause et~al.(2013)Krause, Stark, Deng, and Fei-Fei]{krause20133d}
Jonathan Krause, Michael Stark, Jia Deng, and Li~Fei-Fei.
\newblock 3d object representations for fine-grained categorization.
\newblock In \emph{Proceedings of the IEEE international conference on computer vision workshops}, pp.\  554--561, 2013.

\bibitem[Lafon et~al.(2025)Lafon, Ramzi, Rambour, Audebert, and Thome]{lafon2025gallop}
Marc Lafon, Elias Ramzi, Cl{\'e}ment Rambour, Nicolas Audebert, and Nicolas Thome.
\newblock Gallop: Learning global and local prompts for vision-language models.
\newblock In \emph{European Conference on Computer Vision}, pp.\  264--282. Springer, 2025.

\bibitem[Lester et~al.(2021)Lester, Al-Rfou, and Constant]{lester2021power}
Brian Lester, Rami Al-Rfou, and Noah Constant.
\newblock The power of scale for parameter-efficient prompt tuning.
\newblock \emph{arXiv preprint arXiv:2104.08691}, 2021.

\bibitem[Li et~al.(2022)Li, Liu, Li, Zhang, Aneja, Yang, Jin, Hu, Liu, Lee, et~al.]{li2022elevater}
Chunyuan Li, Haotian Liu, Liunian Li, Pengchuan Zhang, Jyoti Aneja, Jianwei Yang, Ping Jin, Houdong Hu, Zicheng Liu, Yong~Jae Lee, et~al.
\newblock Elevater: A benchmark and toolkit for evaluating language-augmented visual models.
\newblock \emph{Advances in Neural Information Processing Systems}, 35:\penalty0 9287--9301, 2022.

\bibitem[Li et~al.(2024)Li, Fan, Tao, Gan, and Zhan]{li2024exploring}
Xin-Chun Li, Wen-Shu Fan, Bowen Tao, Le~Gan, and De-Chuan Zhan.
\newblock Exploring dark knowledge under various teacher capacities and addressing capacity mismatch.
\newblock \emph{arXiv preprint arXiv:2405.13078}, 2024.

\bibitem[Li et~al.(2023)Li, Fan, Hu, Feichtenhofer, and He]{li2023scaling}
Yanghao Li, Haoqi Fan, Ronghang Hu, Christoph Feichtenhofer, and Kaiming He.
\newblock Scaling language-image pre-training via masking.
\newblock In \emph{Proceedings of the IEEE/CVF Conference on Computer Vision and Pattern Recognition}, pp.\  23390--23400, 2023.

\bibitem[Liu et~al.(2021)Liu, Liu, Jin, Stone, and Liu]{gradient_conflict2}
Bo~Liu, Xingchao Liu, Xiaojie Jin, Peter Stone, and Qiang Liu.
\newblock Conflict-averse gradient descent for multi-task learning.
\newblock \emph{Advances in Neural Information Processing Systems}, 34:\penalty0 18878--18890, 2021.

\bibitem[Liu et~al.(2023{\natexlab{a}})Liu, Li, Wu, and Lee]{liu2023visual}
Haotian Liu, Chunyuan Li, Qingyang Wu, and Yong~Jae Lee.
\newblock Visual instruction tuning.
\newblock \emph{Advances in neural information processing systems}, 36:\penalty0 34892--34916, 2023{\natexlab{a}}.

\bibitem[Liu et~al.(2023{\natexlab{b}})Liu, Son, Yang, Liu, Gao, Lee, and Li]{liu2023learning}
Haotian Liu, Kilho Son, Jianwei Yang, Ce~Liu, Jianfeng Gao, Yong~Jae Lee, and Chunyuan Li.
\newblock Learning customized visual models with retrieval-augmented knowledge.
\newblock In \emph{Proceedings of the IEEE/CVF Conference on Computer Vision and Pattern Recognition}, pp.\  15148--15158, 2023{\natexlab{b}}.

\bibitem[Liu et~al.(2024)Liu, Wang, Liu, Sun, and Yao]{liu2024small}
He~Liu, Yikai Wang, Huaping Liu, Fuchun Sun, and Anbang Yao.
\newblock Small scale data-free knowledge distillation.
\newblock In \emph{Proceedings of the IEEE/CVF Conference on Computer Vision and Pattern Recognition}, pp.\  6008--6016, 2024.

\bibitem[Lopes et~al.(2017)Lopes, Fenu, and Starner]{lopes2017data}
Raphael~Gontijo Lopes, Stefano Fenu, and Thad Starner.
\newblock Data-free knowledge distillation for deep neural networks.
\newblock \emph{arXiv preprint arXiv:1710.07535}, 2017.

\bibitem[Lv et~al.(2024)Lv, Yang, and Li]{lv2024wasserstein}
Jiaming Lv, Haoyuan Yang, and Peihua Li.
\newblock Wasserstein distance rivals kullback-leibler divergence for knowledge distillation.
\newblock \emph{Advances in Neural Information Processing Systems}, 37:\penalty0 65445--65475, 2024.

\bibitem[Maji et~al.(2013)Maji, Rahtu, Kannala, Blaschko, and Vedaldi]{maji2013fine}
Subhransu Maji, Esa Rahtu, Juho Kannala, Matthew Blaschko, and Andrea Vedaldi.
\newblock Fine-grained visual classification of aircraft.
\newblock \emph{arXiv preprint arXiv:1306.5151}, 2013.

\bibitem[Menghini et~al.(2023)Menghini, Delworth, and Bach]{menghini2023enhancing}
Cristina Menghini, Andrew Delworth, and Stephen Bach.
\newblock Enhancing clip with clip: Exploring pseudolabeling for limited-label prompt tuning.
\newblock \emph{Advances in Neural Information Processing Systems}, 36:\penalty0 60984--61007, 2023.

\bibitem[Micikevicius et~al.(2017)Micikevicius, Narang, Alben, Diamos, Elsen, Garcia, Ginsburg, Houston, Kuchaiev, Venkatesh, et~al.]{micikevicius2017mixed}
Paulius Micikevicius, Sharan Narang, Jonah Alben, Gregory Diamos, Erich Elsen, David Garcia, Boris Ginsburg, Michael Houston, Oleksii Kuchaiev, Ganesh Venkatesh, et~al.
\newblock Mixed precision training.
\newblock \emph{arXiv preprint arXiv:1710.03740}, 2017.

\bibitem[Mirzadeh et~al.(2020)Mirzadeh, Farajtabar, Li, Levine, Matsukawa, and Ghasemzadeh]{mirzadeh2020improved}
Seyed~Iman Mirzadeh, Mehrdad Farajtabar, Ang Li, Nir Levine, Akihiro Matsukawa, and Hassan Ghasemzadeh.
\newblock Improved knowledge distillation via teacher assistant.
\newblock In \emph{Proceedings of the AAAI conference on artificial intelligence}, volume~34, pp.\  5191--5198, 2020.

\bibitem[Mistretta et~al.(2025)Mistretta, Baldrati, Bertini, and Bagdanov]{mistretta2025improving}
Marco Mistretta, Alberto Baldrati, Marco Bertini, and Andrew~D Bagdanov.
\newblock Improving zero-shot generalization of learned prompts via unsupervised knowledge distillation.
\newblock In \emph{European Conference on Computer Vision}, pp.\  459--477. Springer, 2025.

\bibitem[Navaneet et~al.(2021)Navaneet, Koohpayegani, Tejankar, and Pirsiavash]{navaneet2021simreg}
K~L Navaneet, Soroush~Abbasi Koohpayegani, Ajinkya Tejankar, and Hamed Pirsiavash.
\newblock Simreg: Regression as a simple yet effective tool for self-supervised knowledge distillation.
\newblock In \emph{British Machine Vision Conference (BMVC)}, 2021.

\bibitem[Nayak et~al.(2019)Nayak, Mopuri, Shaj, Radhakrishnan, and Chakraborty]{nayak2019zero}
Gaurav~Kumar Nayak, Konda~Reddy Mopuri, Vaisakh Shaj, Venkatesh~Babu Radhakrishnan, and Anirban Chakraborty.
\newblock Zero-shot knowledge distillation in deep networks.
\newblock In \emph{International Conference on Machine Learning}, pp.\  4743--4751. PMLR, 2019.

\bibitem[Nguyen et~al.(2022)Nguyen, Gupta, Do, and Venkatesh]{nguyen2022black}
Dang Nguyen, Sunil Gupta, Kien Do, and Svetha Venkatesh.
\newblock Black-box few-shot knowledge distillation.
\newblock In \emph{European Conference on Computer Vision}, pp.\  196--211. Springer, 2022.

\bibitem[Nilsback \& Zisserman(2008)Nilsback and Zisserman]{nilsback2008automated}
Maria-Elena Nilsback and Andrew Zisserman.
\newblock Automated flower classification over a large number of classes.
\newblock In \emph{2008 Sixth Indian conference on computer vision, graphics \& image processing}, pp.\  722--729. IEEE, 2008.

\bibitem[Papyan et~al.(2020)Papyan, Han, and Donoho]{papyan2020prevalence}
Vardan Papyan, XY~Han, and David~L Donoho.
\newblock Prevalence of neural collapse during the terminal phase of deep learning training.
\newblock \emph{Proceedings of the National Academy of Sciences}, 117\penalty0 (40):\penalty0 24652--24663, 2020.

\bibitem[Parkhi et~al.(2012)Parkhi, Vedaldi, Zisserman, and Jawahar]{parkhi2012cats}
Omkar~M Parkhi, Andrea Vedaldi, Andrew Zisserman, and CV~Jawahar.
\newblock Cats and dogs.
\newblock In \emph{2012 IEEE conference on computer vision and pattern recognition}, pp.\  3498--3505. IEEE, 2012.

\bibitem[Patel et~al.(2023)Patel, Mopuri, and Qiu]{patel2023learning}
Gaurav Patel, Konda~Reddy Mopuri, and Qiang Qiu.
\newblock Learning to retain while acquiring: Combating distribution-shift in adversarial data-free knowledge distillation.
\newblock In \emph{Proceedings of the IEEE/CVF Conference on Computer Vision and Pattern Recognition}, pp.\  7786--7794, 2023.

\bibitem[Pham et~al.(2023)Pham, Dai, Ghiasi, Kawaguchi, Liu, Yu, Yu, Chen, Luong, Wu, et~al.]{pham2023combined}
Hieu Pham, Zihang Dai, Golnaz Ghiasi, Kenji Kawaguchi, Hanxiao Liu, Adams~Wei Yu, Jiahui Yu, Yi-Ting Chen, Minh-Thang Luong, Yonghui Wu, et~al.
\newblock Combined scaling for zero-shot transfer learning.
\newblock \emph{Neurocomputing}, 555:\penalty0 126658, 2023.

\bibitem[Radford et~al.(2021)Radford, Kim, Hallacy, Ramesh, Goh, Agarwal, Sastry, Askell, Mishkin, Clark, et~al.]{radford2021learning}
Alec Radford, Jong~Wook Kim, Chris Hallacy, Aditya Ramesh, Gabriel Goh, Sandhini Agarwal, Girish Sastry, Amanda Askell, Pamela Mishkin, Jack Clark, et~al.
\newblock Learning transferable visual models from natural language supervision.
\newblock In \emph{International conference on machine learning}, pp.\  8748--8763. PMLR, 2021.

\bibitem[Recht et~al.(2019)Recht, Roelofs, Schmidt, and Shankar]{recht2019imagenet}
Benjamin Recht, Rebecca Roelofs, Ludwig Schmidt, and Vaishaal Shankar.
\newblock Do imagenet classifiers generalize to imagenet?
\newblock In \emph{International conference on machine learning}, pp.\  5389--5400. PMLR, 2019.

\bibitem[Rothenberger \& Diochnos(2023)Rothenberger and Diochnos]{rothenberger2023meta}
Jay~C Rothenberger and Dimitrios~I Diochnos.
\newblock Meta co-training: Two views are better than one.
\newblock \emph{arXiv preprint arXiv:2311.18083}, 2023.

\bibitem[Roy \& Etemad(2023)Roy and Etemad]{roy2023consistency}
Shuvendu Roy and Ali Etemad.
\newblock Consistency-guided prompt learning for vision-language models.
\newblock \emph{arXiv preprint arXiv:2306.01195}, 2023.

\bibitem[Russakovsky et~al.(2015)Russakovsky, Deng, Su, Krause, Satheesh, Ma, Huang, Karpathy, Khosla, Bernstein, et~al.]{russakovsky2015imagenet}
Olga Russakovsky, Jia Deng, Hao Su, Jonathan Krause, Sanjeev Satheesh, Sean Ma, Zhiheng Huang, Andrej Karpathy, Aditya Khosla, Michael Bernstein, et~al.
\newblock Imagenet large scale visual recognition challenge.
\newblock \emph{International journal of computer vision}, 115:\penalty0 211--252, 2015.

\bibitem[Sandler et~al.(2018)Sandler, Howard, Zhu, Zhmoginov, and Chen]{sandler2018mobilenetv2}
Mark Sandler, Andrew Howard, Menglong Zhu, Andrey Zhmoginov, and Liang-Chieh Chen.
\newblock Mobilenetv2: Inverted residuals and linear bottlenecks.
\newblock In \emph{Proceedings of the IEEE conference on computer vision and pattern recognition}, pp.\  4510--4520, 2018.

\bibitem[Silva-Rodriguez et~al.(2024)Silva-Rodriguez, Hajimiri, Ben~Ayed, and Dolz]{silva2024closer}
Julio Silva-Rodriguez, Sina Hajimiri, Ismail Ben~Ayed, and Jose Dolz.
\newblock A closer look at the few-shot adaptation of large vision-language models.
\newblock In \emph{Proceedings of the IEEE/CVF Conference on Computer Vision and Pattern Recognition}, pp.\  23681--23690, 2024.

\bibitem[Singh \& Wang(2025)Singh and Wang]{singh2025simple}
Aditya Singh and Haohan Wang.
\newblock Simple unsupervised knowledge distillation with space similarity.
\newblock In \emph{European Conference on Computer Vision}, pp.\  147--164. Springer, 2025.

\bibitem[Sohn et~al.(2020)Sohn, Berthelot, Carlini, Zhang, Zhang, Raffel, Cubuk, Kurakin, and Li]{sohn2020fixmatch}
Kihyuk Sohn, David Berthelot, Nicholas Carlini, Zizhao Zhang, Han Zhang, Colin~A Raffel, Ekin~Dogus Cubuk, Alexey Kurakin, and Chun-Liang Li.
\newblock Fixmatch: Simplifying semi-supervised learning with consistency and confidence.
\newblock \emph{Advances in neural information processing systems}, 33:\penalty0 596--608, 2020.

\bibitem[Soomro(2012)]{soomro2012ucf101}
K~Soomro.
\newblock Ucf101: A dataset of 101 human actions classes from videos in the wild.
\newblock \emph{arXiv preprint arXiv:1212.0402}, 2012.

\bibitem[Srivastava et~al.(2014)Srivastava, Hinton, Krizhevsky, Sutskever, and Salakhutdinov]{srivastava2014dropout}
Nitish Srivastava, Geoffrey Hinton, Alex Krizhevsky, Ilya Sutskever, and Ruslan Salakhutdinov.
\newblock Dropout: a simple way to prevent neural networks from overfitting.
\newblock \emph{The journal of machine learning research}, 15\penalty0 (1):\penalty0 1929--1958, 2014.

\bibitem[Sun et~al.(2023{\natexlab{a}})Sun, Fang, Wu, Wang, and Cao]{sun2023eva}
Quan Sun, Yuxin Fang, Ledell Wu, Xinlong Wang, and Yue Cao.
\newblock Eva-clip: Improved training techniques for clip at scale.
\newblock \emph{arXiv preprint arXiv:2303.15389}, 2023{\natexlab{a}}.

\bibitem[Sun et~al.(2024)Sun, Wang, Yu, Cui, Zhang, Zhang, and Wang]{sun2024eva}
Quan Sun, Jinsheng Wang, Qiying Yu, Yufeng Cui, Fan Zhang, Xiaosong Zhang, and Xinlong Wang.
\newblock Eva-clip-18b: Scaling clip to 18 billion parameters.
\newblock \emph{arXiv preprint arXiv:2402.04252}, 2024.

\bibitem[Sun et~al.(2023{\natexlab{b}})Sun, Zhang, Zhang, Shah, Saenko, and Xia]{sun2023dime}
Ximeng Sun, Pengchuan Zhang, Peizhao Zhang, Hardik Shah, Kate Saenko, and Xide Xia.
\newblock Dime-fm: Distilling multimodal and efficient foundation models.
\newblock In \emph{Proceedings of the IEEE/CVF International Conference on Computer Vision}, pp.\  15521--15533, 2023{\natexlab{b}}.

\bibitem[Team et~al.(2023)Team, Anil, Borgeaud, Alayrac, Yu, Soricut, Schalkwyk, Dai, Hauth, Millican, et~al.]{team2023gemini}
Gemini Team, Rohan Anil, Sebastian Borgeaud, Jean-Baptiste Alayrac, Jiahui Yu, Radu Soricut, Johan Schalkwyk, Andrew~M Dai, Anja Hauth, Katie Millican, et~al.
\newblock Gemini: a family of highly capable multimodal models.
\newblock \emph{arXiv preprint arXiv:2312.11805}, 2023.

\bibitem[Tran et~al.(2024)Tran, Le, Le, Cai, Harandi, and Phung]{tran2024large}
Minh-Tuan Tran, Trung Le, Xuan-May Le, Jianfei Cai, Mehrtash Harandi, and Dinh Phung.
\newblock Large-scale data-free knowledge distillation for imagenet via multi-resolution data generation.
\newblock \emph{arXiv preprint arXiv:2411.17046}, 2024.

\bibitem[Udandarao et~al.(2024)Udandarao, Parthasarathy, Naeem, Evans, Albanie, Tombari, Xian, Tonioni, and H{\'e}naff]{udandarao2024active}
Vishaal Udandarao, Nikhil Parthasarathy, Muhammad~Ferjad Naeem, Talfan Evans, Samuel Albanie, Federico Tombari, Yongqin Xian, Alessio Tonioni, and Olivier~J H{\'e}naff.
\newblock Active data curation effectively distills large-scale multimodal models.
\newblock \emph{arXiv preprint arXiv:2411.18674}, 2024.

\bibitem[Van~der Maaten \& Hinton(2008)Van~der Maaten and Hinton]{van2008visualizing}
Laurens Van~der Maaten and Geoffrey Hinton.
\newblock Visualizing data using t-sne.
\newblock \emph{Journal of machine learning research}, 9\penalty0 (11), 2008.

\bibitem[Vasu et~al.(2024)Vasu, Pouransari, Faghri, Vemulapalli, and Tuzel]{vasu2024mobileclip}
Pavan Kumar~Anasosalu Vasu, Hadi Pouransari, Fartash Faghri, Raviteja Vemulapalli, and Oncel Tuzel.
\newblock Mobileclip: Fast image-text models through multi-modal reinforced training.
\newblock In \emph{Proceedings of the IEEE/CVF Conference on Computer Vision and Pattern Recognition}, pp.\  15963--15974, 2024.

\bibitem[Vemulapalli et~al.(2024)Vemulapalli, Pouransari, Faghri, Mehta, Farajtabar, Rastegari, and Tuzel]{knowledgetransfer2024}
Raviteja Vemulapalli, Hadi Pouransari, Fartash Faghri, Sachin Mehta, Mehrdad Farajtabar, Mohammad Rastegari, and Oncel Tuzel.
\newblock Knowledge transfer from vision foundation models for efficient training of small task-specific models.
\newblock In \emph{International Conference on Machine Learning (ICML)}, 2024.

\bibitem[Wang et~al.(2019)Wang, Ge, Lipton, and Xing]{wang2019learning}
Haohan Wang, Songwei Ge, Zachary Lipton, and Eric~P Xing.
\newblock Learning robust global representations by penalizing local predictive power.
\newblock In \emph{Advances in Neural Information Processing Systems}, pp.\  10506--10518, 2019.

\bibitem[Wang et~al.(2022)Wang, Yang, and van~de Weijer]{wang2022attention}
Kai Wang, Fei Yang, and Joost van~de Weijer.
\newblock Attention distillation: self-supervised vision transformer students need more guidance.
\newblock \emph{arXiv preprint arXiv:2210.00944}, 2022.

\bibitem[Wang et~al.(2023)Wang, Chen, Qian, Gao, Wei, Wang, Tian, and Gao]{wang2023large}
Xiao Wang, Guangyao Chen, Guangwu Qian, Pengcheng Gao, Xiao-Yong Wei, Yaowei Wang, Yonghong Tian, and Wen Gao.
\newblock Large-scale multi-modal pre-trained models: A comprehensive survey.
\newblock \emph{Machine Intelligence Research}, 20\penalty0 (4):\penalty0 447--482, 2023.

\bibitem[Wang et~al.(2024)Wang, Yang, Chen, Liu, Liu, Zhang, Zhang, and Qi]{wang2024confounded}
Yuzheng Wang, Dingkang Yang, Zhaoyu Chen, Yang Liu, Siao Liu, Wenqiang Zhang, Lihua Zhang, and Lizhe Qi.
\newblock De-confounded data-free knowledge distillation for handling distribution shifts.
\newblock In \emph{Proceedings of the IEEE/CVF Conference on Computer Vision and Pattern Recognition}, pp.\  12615--12625, 2024.

\bibitem[Wei et~al.(2021)Wei, Bosma, Zhao, Guu, Yu, Lester, Du, Dai, and Le]{wei2021finetuned}
Jason Wei, Maarten Bosma, Vincent~Y Zhao, Kelvin Guu, Adams~Wei Yu, Brian Lester, Nan Du, Andrew~M Dai, and Quoc~V Le.
\newblock Finetuned language models are zero-shot learners.
\newblock \emph{arXiv preprint arXiv:2109.01652}, 2021.

\bibitem[Wu et~al.(2025)Wu, Zhang, Li, Chen, Liang, Yang, and Li]{wu2025cascade}
Ge~Wu, Xin Zhang, Zheng Li, Zhaowei Chen, Jiajun Liang, Jian Yang, and Xiang Li.
\newblock Cascade prompt learning for vision-language model adaptation.
\newblock In \emph{European Conference on Computer Vision}, pp.\  304--321. Springer, 2025.

\bibitem[Wu et~al.(2023)Wu, Peng, Zhou, Xiao, Liu, Yuan, Xuan, Valenzuela, Chen, Wang, et~al.]{wu2023tinyclip}
Kan Wu, Houwen Peng, Zhenghong Zhou, Bin Xiao, Mengchen Liu, Lu~Yuan, Hong Xuan, Michael Valenzuela, Xi~Stephen Chen, Xinggang Wang, et~al.
\newblock Tinyclip: Clip distillation via affinity mimicking and weight inheritance.
\newblock In \emph{Proceedings of the IEEE/CVF International Conference on Computer Vision}, pp.\  21970--21980, 2023.

\bibitem[Xiao et~al.(2010)Xiao, Hays, Ehinger, Oliva, and Torralba]{xiao2010sun}
Jianxiong Xiao, James Hays, Krista~A Ehinger, Aude Oliva, and Antonio Torralba.
\newblock Sun database: Large-scale scene recognition from abbey to zoo.
\newblock In \emph{2010 IEEE computer society conference on computer vision and pattern recognition}, pp.\  3485--3492. IEEE, 2010.

\bibitem[Xu et~al.(2021)Xu, Fang, Zhang, Xie, Wang, Dai, Xiong, and Tian]{xu2021bag}
Haohang Xu, Jiemin Fang, Xiaopeng Zhang, Lingxi Xie, Xinggang Wang, Wenrui Dai, Hongkai Xiong, and Qi~Tian.
\newblock Bag of instances aggregation boosts self-supervised distillation.
\newblock \emph{arXiv preprint arXiv:2107.01691}, 2021.

\bibitem[Yang et~al.(2024{\natexlab{a}})Yang, An, Huang, Bi, Yu, Yang, Diao, and Xu]{yang2024clipkd}
Chuanguang Yang, Zhulin An, Libo Huang, Junyu Bi, Xinqiang Yu, Han Yang, Boyu Diao, and Yongjun Xu.
\newblock Clip-kd: An empirical study of clip model distillation.
\newblock In \emph{Proceedings of the IEEE/CVF Conference on Computer Vision and Pattern Recognition}, pp.\  15952--15962, 2024{\natexlab{a}}.

\bibitem[Yang et~al.(2024{\natexlab{b}})Yang, Zhu, Bulat, Martinez, and Tzimiropoulos]{yang2024knowledge}
Jing Yang, Xiatian Zhu, Adrian Bulat, Brais Martinez, and Georgios Tzimiropoulos.
\newblock Knowledge distillation meets open-set semi-supervised learning.
\newblock \emph{International Journal of Computer Vision}, pp.\  1--20, 2024{\natexlab{b}}.

\bibitem[Yang et~al.(2024{\natexlab{c}})Yang, Gu, An, Jiang, Dai, Feng, Cai, and Deng]{yang2024clipcid}
Kaicheng Yang, Tiancheng Gu, Xiang An, Haiqiang Jiang, Xiangzi Dai, Ziyong Feng, Weidong Cai, and Jiankang Deng.
\newblock Clip-cid: Efficient clip distillation via cluster-instance discrimination.
\newblock \emph{arXiv preprint arXiv:2408.09441}, 2024{\natexlab{c}}.

\bibitem[Yang et~al.(2024{\natexlab{d}})Yang, Zong, Huang, Feng, and An]{yang2024dual}
Penghui Yang, Chen-Chen Zong, Sheng-Jun Huang, Lei Feng, and Bo~An.
\newblock Dual-head knowledge distillation: Enhancing logits utilization with an auxiliary head.
\newblock \emph{arXiv preprint arXiv:2411.08937}, 2024{\natexlab{d}}.

\bibitem[Yin et~al.(2020)Yin, Molchanov, Alvarez, Li, Mallya, Hoiem, Jha, and Kautz]{yin2020dreaming}
Hongxu Yin, Pavlo Molchanov, Jose~M Alvarez, Zhizhong Li, Arun Mallya, Derek Hoiem, Niraj~K Jha, and Jan Kautz.
\newblock Dreaming to distill: Data-free knowledge transfer via deepinversion.
\newblock In \emph{Proceedings of the IEEE/CVF conference on computer vision and pattern recognition}, pp.\  8715--8724, 2020.

\bibitem[Yoo et~al.(2019)Yoo, Cho, Kim, and Kang]{yoo2019knowledge}
Jaemin Yoo, Minyong Cho, Taebum Kim, and U~Kang.
\newblock Knowledge extraction with no observable data.
\newblock \emph{Advances in Neural Information Processing Systems}, 32, 2019.

\bibitem[Yu et~al.(2022)Yu, Wang, Vasudevan, Yeung, Seyedhosseini, and Wu]{yu2022coca}
Jiahui Yu, Zirui Wang, Vijay Vasudevan, Legg Yeung, Mojtaba Seyedhosseini, and Yonghui Wu.
\newblock Coca: Contrastive captioners are image-text foundation models.
\newblock \emph{arXiv preprint arXiv:2205.01917}, 2022.

\bibitem[Yu et~al.(2023{\natexlab{a}})Yu, Chen, Han, and Jiang]{yu2023data}
Shikang Yu, Jiachen Chen, Hu~Han, and Shuqiang Jiang.
\newblock Data-free knowledge distillation via feature exchange and activation region constraint.
\newblock In \emph{Proceedings of the IEEE/CVF Conference on Computer Vision and Pattern Recognition}, pp.\  24266--24275, 2023{\natexlab{a}}.

\bibitem[Yu et~al.(2023{\natexlab{b}})Yu, Lu, Jin, Chen, and Wang]{yu2023task}
Tao Yu, Zhihe Lu, Xin Jin, Zhibo Chen, and Xinchao Wang.
\newblock Task residual for tuning vision-language models.
\newblock In \emph{Proceedings of the IEEE/CVF Conference on Computer Vision and Pattern Recognition}, pp.\  10899--10909, 2023{\natexlab{b}}.

\bibitem[Yu et~al.(2020{\natexlab{a}})Yu, Kumar, Gupta, Levine, Hausman, and Finn]{gradient_conflict1}
Tianhe Yu, Saurabh Kumar, Abhishek Gupta, Sergey Levine, Karol Hausman, and Chelsea Finn.
\newblock Gradient surgery for multi-task learning.
\newblock \emph{Advances in neural information processing systems}, 33:\penalty0 5824--5836, 2020{\natexlab{a}}.

\bibitem[Yu et~al.(2020{\natexlab{b}})Yu, Kumar, Gupta, Levine, Hausman, and Finn]{yu2020gradient}
Tianhe Yu, Saurabh Kumar, Abhishek Gupta, Sergey Levine, Karol Hausman, and Chelsea Finn.
\newblock Gradient surgery for multi-task learning.
\newblock \emph{Advances in neural information processing systems}, 33:\penalty0 5824--5836, 2020{\natexlab{b}}.

\bibitem[Zhai et~al.(2022)Zhai, Wang, Mustafa, Steiner, Keysers, Kolesnikov, and Beyer]{zhai2022lit}
Xiaohua Zhai, Xiao Wang, Basil Mustafa, Andreas Steiner, Daniel Keysers, Alexander Kolesnikov, and Lucas Beyer.
\newblock Lit: Zero-shot transfer with locked-image text tuning.
\newblock In \emph{Proceedings of the IEEE/CVF conference on computer vision and pattern recognition}, pp.\  18123--18133, 2022.

\bibitem[Zhai et~al.(2023)Zhai, Mustafa, Kolesnikov, and Beyer]{zhai2023sigmoid}
Xiaohua Zhai, Basil Mustafa, Alexander Kolesnikov, and Lucas Beyer.
\newblock Sigmoid loss for language image pre-training.
\newblock In \emph{Proceedings of the IEEE/CVF International Conference on Computer Vision}, pp.\  11975--11986, 2023.

\bibitem[Zhang et~al.(2024{\natexlab{a}})Zhang, Wu, Gao, Shen, and Song]{zhang2024dept}
Ji~Zhang, Shihan Wu, Lianli Gao, Heng~Tao Shen, and Jingkuan Song.
\newblock Dept: Decoupled prompt tuning.
\newblock In \emph{Proceedings of the IEEE/CVF Conference on Computer Vision and Pattern Recognition}, pp.\  12924--12933, 2024{\natexlab{a}}.

\bibitem[Zhang et~al.(2024{\natexlab{b}})Zhang, Huang, Jin, and Lu]{zhang2024vision}
Jingyi Zhang, Jiaxing Huang, Sheng Jin, and Shijian Lu.
\newblock Vision-language models for vision tasks: A survey.
\newblock \emph{IEEE Transactions on Pattern Analysis and Machine Intelligence}, 2024{\natexlab{b}}.

\bibitem[Zhang et~al.(2021)Zhang, Fang, Zhang, Gao, Li, Dai, Qiao, and Li]{zhang2021tip}
Renrui Zhang, Rongyao Fang, Wei Zhang, Peng Gao, Kunchang Li, Jifeng Dai, Yu~Qiao, and Hongsheng Li.
\newblock Tip-adapter: Training-free clip-adapter for better vision-language modeling.
\newblock \emph{arXiv preprint arXiv:2111.03930}, 2021.

\bibitem[Zhang et~al.(2023)Zhang, Shen, Liu, Liu, Bendersky, Najork, and Zhang]{zhang2023not}
Rongzhi Zhang, Jiaming Shen, Tianqi Liu, Jialu Liu, Michael Bendersky, Marc Najork, and Chao Zhang.
\newblock Do not blindly imitate the teacher: Using perturbed loss for knowledge distillation.
\newblock \emph{arXiv preprint arXiv:2305.05010}, 2023.

\bibitem[Zhao et~al.(2022)Zhao, Cui, Song, Qiu, and Liang]{zhao2022decoupled}
Borui Zhao, Quan Cui, Renjie Song, Yiyu Qiu, and Jiajun Liang.
\newblock Decoupled knowledge distillation.
\newblock In \emph{Proceedings of the IEEE/CVF Conference on computer vision and pattern recognition}, pp.\  11953--11962, 2022.

\bibitem[Zhao et~al.(2024)Zhao, Wang, Jiang, Shen, Song, Li, and Miao]{zhao2024learning}
Cairong Zhao, Yubin Wang, Xinyang Jiang, Yifei Shen, Kaitao Song, Dongsheng Li, and Duoqian Miao.
\newblock Learning domain invariant prompt for vision-language models.
\newblock \emph{IEEE Transactions on Image Processing}, 2024.

\bibitem[Zheng et~al.(2023)Zheng, You, Huang, Luo, Wang, Qian, and Xu]{zheng2023simmatchv2}
Mingkai Zheng, Shan You, Lang Huang, Chen Luo, Fei Wang, Chen Qian, and Chang Xu.
\newblock Simmatchv2: Semi-supervised learning with graph consistency.
\newblock In \emph{Proceedings of the IEEE/CVF International Conference on Computer Vision}, pp.\  16432--16442, 2023.

\bibitem[Zhou et~al.(2022{\natexlab{a}})Zhou, Yang, Loy, and Liu]{zhou2022conditional}
Kaiyang Zhou, Jingkang Yang, Chen~Change Loy, and Ziwei Liu.
\newblock Conditional prompt learning for vision-language models.
\newblock In \emph{Proceedings of the IEEE/CVF conference on computer vision and pattern recognition}, pp.\  16816--16825, 2022{\natexlab{a}}.

\bibitem[Zhou et~al.(2022{\natexlab{b}})Zhou, Yang, Loy, and Liu]{zhou2022learning}
Kaiyang Zhou, Jingkang Yang, Chen~Change Loy, and Ziwei Liu.
\newblock Learning to prompt for vision-language models.
\newblock \emph{International Journal of Computer Vision}, 130\penalty0 (9):\penalty0 2337--2348, 2022{\natexlab{b}}.

\bibitem[Zhu et~al.(2023)Zhu, Niu, Han, Wu, and Zhang]{zhu2023prompt}
Beier Zhu, Yulei Niu, Yucheng Han, Yue Wu, and Hanwang Zhang.
\newblock Prompt-aligned gradient for prompt tuning.
\newblock In \emph{Proceedings of the IEEE/CVF International Conference on Computer Vision}, pp.\  15659--15669, 2023.

\bibitem[Zhu \& Wang(2021)Zhu and Wang]{zhu2021student}
Yichen Zhu and Yi~Wang.
\newblock Student customized knowledge distillation: Bridging the gap between student and teacher.
\newblock In \emph{Proceedings of the IEEE/CVF International Conference on Computer Vision}, pp.\  5057--5066, 2021.

\end{thebibliography}

\clearpage
\appendix

\section*{Appendix Overview}

This appendix provides supplementary material to support the main paper and is organized as follows:

\begin{itemize}[itemsep=1mm,parsep=1pt,topsep=2pt,leftmargin=*]
    \item \textbf{Related Work} (\Cref{sec:related_work}) discusses previous work relevant to ours, such as vision-language pre-training, data-limited adaptation of VLMs, knowledge Distillation (KD), and dual head approaches.

    \item \textbf{Theoretical Analysis} (\Cref{sec:theoretical_analysis}): provides mathematical foundations and theoretical guarantees for our approach.

    \item \textbf{Algorithms and Implementation} (\Cref{sec:algorithms_and_implementation}): presents detailed pseudocode (\Cref{sec:appendix_algorithm}), implementation specifics (\Cref{sec:appendix_implementation_details}), and computational overhead analysis (\Cref{sec:appendix_computational_overhead}).

    \item \textbf{Datasets} (\Cref{sec:appendix_datasets}): describes the datasets used in our experiments, including statistics and preprocessing details.

    \item \textbf{Additional Experiments} (\Cref{sec:additional_experiments}): presents MobileNet experiments (\Cref{sec:appendix_mobilenet}), additional results with KD methods (\Cref{sec:appendix_dkd_wkd}), additional results with gradient surgery methods (\Cref{sec:appendix_pcgrad}), additional results with adaptive weighting (\Cref{sec:appendix_adaptive_weighting}), and results of out-of-distribution generalization with fully-trained models (\Cref{sec:appendix_robustness_full}).

    \item \textbf{Additional Analyses} (\Cref{sec:additional_analyses}): contains non-linear head design studies (\Cref{sec:appendix_nonlinear}), and further dual-head investigations (\Cref{sec:appendix_qualitative_results}).
\end{itemize}





\vspace{-0.05in}
\section{Related Work}
\label{sec:related_work}
\vspace{-0.05in}

\paragraph{Vision-language pre-training.}
The emergence of vision-language pre-training has marked a significant breakthrough, enabling the use of extensive image-text pairs collected from the web~\citep{wang2023large, chen2023vlp} to train powerful vision encoders transferable to various vision tasks~\citep{gan2022vision, zhang2024vision}.
Early works such as CLIP~\citep{radford2021learning} and ALIGN~\citep{jia2021scaling} leveraged contrastive learning techniques to align images and text into a joint representation space, facilitating zero-shot transfer via language prompts.
Building on these foundations, subsequent research has focused on improving vision-language models through enhanced training methodologies~\citep{dong2023maskclip, gao2022pyramidclip, yu2022coca, zhai2023sigmoid}, as well as scaling models and datasets~\citep{yu2022coca, li2023scaling, dehghani2023scaling, sun2023eva, cherti2023reproducible, fang2023data, sun2024eva, guo2024boldsymbol} with their zero-shot transfer capabilities~\citep{jia2021scaling, zhai2022lit, pham2023combined, liu2023learning}.
In contrast, our work focuses specifically on target tasks with compact models, aiming to distill knowledge from these large VLMs effectively.


\myparagraph{Data-limited adaptation of VLMs.}
To preserve pretrained semantic features of VLMs during adaptation with limited data, several approaches have been proposed.
Prompt tuning~\citep{lester2021power}, initially designed for language models, has been successfully extended to vision tasks.
Various methods~\citep{jia2022visual, zhou2022learning, zhou2022conditional, khattak2023maple, zhu2023prompt, khattak2023self, menghini2023enhancing, zhao2024learning, roy2023consistency, zhang2024dept, lafon2025gallop} have demonstrated the effectiveness of training learnable prompts while keeping the base model frozen.
Adapters~\citep{gao2024clip, zhang2021tip, yu2023task, silva2024closer} provide an alternative approach by introducing lightweight, trainable modules while maintaining the pre-trained backbone intact.
LP++~\citep{huang2024lp++} has shown that simple linear layers can effectively adapt CLIP representations in data-limited settings.
Note that our work is orthogonal to these approaches:
we aim to distill the knowledge of pretrained VLMs into compact models under data-scarce scenarios, making these adaptation methods complementary and applicable to both teacher VLMs in our framework and student models when they are also VLMs.


\textbf{Knowledge Distillation} \citep[\textbf{KD};][]{hinton2015distilling} enables transferring knowledge from large teacher models to compact student architectures, particularly in data-constrained settings.
Researchers have explored synthetic data generation~\citep{lopes2017data, kimura2018few, nayak2019zero, yoo2019knowledge,
chen2019data, yin2020dreaming, fang2021contrastive, nguyen2022black, patel2023learning, yu2023data, liu2024small, tran2024large, wang2024confounded}, semi-supervised~\citep{chen2020big, he2021semi, du2023semi, yang2024knowledge}, and unsupervised KD using self-supervised teachers~\citep{fang2021seed, abbasi2020compress, navaneet2021simreg, wang2022attention, xu2021bag, singh2025simple}.
In the VLM domain, recent works~\citep{fang2021compressing, wu2023tinyclip, sun2023dime, yang2024clipkd, vasu2024mobileclip, udandarao2024active, yang2024clipcid} distill from large-scale vision-language models into smaller architectures, often using transductive~\citep{kim2024promptkd, chen2024comkd} or multi-stage unsupervised strategies~\citep{knowledgetransfer2024, wu2025cascade, mistretta2025improving}.
Meanwhile, KD remains challenging due to numerous issues, including model capacity gaps~\citep{cho2019efficacy, mirzadeh2020improved, zhu2021student, huang2022knowledge, li2024exploring} and inconsistencies between soft and hard targets~\citep{zhang2023not}.
These challenges are further complicated by misalignment between labeled data and foundational knowledge, especially in few-shot learning scenarios where limited labeled examples may not fully capture the rich semantic understanding of foundation models.


\myparagraph{Dual head approaches.}
We also consider existing dual-head KD approaches:
\textbf{SSKD}~\citep{he2021semi} uses dual-heads, with each head trained on labeled and unlabeled sets respectively, assuming different data distributions between them, and
\textbf{DHKD}~\citep{yang2024dual} introduces binary KD loss working on logits before softmax to prevent neural collapse~\citep{papyan2020prevalence},
but both methods inference using only single prediction head $h_{\text{CE}}$.
While these previous KD methods adopt dual-head architectures, they do not target distillation from foundation models or combine predictions at inference.
In contrast, \textbf{\texttt{DHO}} addresses gradient conflicts in this setting and leverages dual-head aggregation at inference to enhance performance with minimal hyperparameter tuning costs.



\section{Theoretical Analysis}
\label{sec:theoretical_analysis}

In this section, we provide a theoretical analysis of our \textbf{\texttt{D}}ual-\textbf{\texttt{H}}ead \textbf{\texttt{O}}ptimization (\textbf{\texttt{DHO}}) framework.
We establish that \textbf{\texttt{DHO}} effectively addresses single-head logit distillation~\cite{hinton2015distilling, chen2020big} by decoupling conflicting gradients through specialized heads during training.
We prove that post-training, the optimal prediction from our dual-head model—formulated as a weighted combination of the heads' outputs—is mathematically equivalent to the optimal solution of conventional single-head distillation.
This equivalence provides theoretical justification for our approach while eliminating gradient conflicts.
Furthermore, \textbf{\texttt{DHO}} enables efficient adaptation to various datasets through tunable hyperparameters ($\alpha$ and $\beta$) without requiring model retraining. Note that in this section we slightly abuse the notation of the main paper for clarity, \eg, we denote $p_\tau$ as teacher predictions with temperature scaling $\tau$.


\subsection{Single-Head Optimization}

We begin by considering two target probability distributions: the ground truth label distribution $y$ and the teacher's softened distribution $p_\tau$ for input $x \in \mathcal{X}$, where:
\begin{itemize}[itemsep=1mm,parsep=1pt,topsep=2pt,leftmargin=*]
    \item $y$ represents the ground truth label distribution, typically one-hot encoded vectors where $y_c = 1$ for the true class $c$ and 0 elsewhere
    \item $p_\tau$ denotes the teacher's softened distribution with temperature scaling: $p_\tau = \sigma(z_t / \tau)$, where $z_t$ represents the teacher's logits and $\sigma$ is the softmax function
\end{itemize}

\begin{theorem}[Optimal Distribution for Single-Head Optimization]
\label{thm:single_head_opt}
The distribution $\hat{p}^*$ that minimizes the weighted combination of cross-entropy loss with respect to $y$ and Kullback-Leibler divergence with respect to $p_\tau$:

\begin{equation}
\mathcal{L}(\hat{p}) = \lambda \ell(\hat{p}, y) + (1-\lambda) \KL(p_\tau\|\hat{p})
\end{equation}

is given by the weighted arithmetic mean:

\begin{equation}
\hat{p}^* = \lambda y + (1-\lambda) p_\tau
\end{equation}
where $\lambda \in [0,1]$ is the weighting hyperparameter.
\end{theorem}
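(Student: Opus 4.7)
The plan is to expand both terms of $\mathcal{L}(\hat{p})$ into a sum of $\log\hat{p}_c$ terms, absorb all constants, and recognize the result as a single cross-entropy whose minimizer is known in closed form.

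First I would write out the two losses componentwise. The cross-entropy term gives $\ell(\hat p,y) = -\sum_c y_c \log \hat p_c$, and the KL term expands as
\begin{equation}
\KL(p_\tau\|\hat p) \;=\; \sum_c p_{\tau,c}\log p_{\tau,c} \;-\; \sum_c p_{\tau,c}\log \hat p_c.
\end{equation}
Substituting into $\mathcal{L}(\hat p)$ and collecting the $\log \hat p_c$ terms, I would obtain
\begin{equation}
\mathcal{L}(\hat p) \;=\; -\sum_c \bigl[\lambda y_c + (1-\lambda) p_{\tau,c}\bigr]\log \hat p_c \;+\; (1-\lambda)\sum_c p_{\tau,c}\log p_{\tau,c},
\end{equation}
where the second summand is independent of $\hat p$.

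Next, defining $q \coloneq \lambda y + (1-\lambda) p_\tau$, I would note that $q$ is a valid probability vector (nonnegative and summing to $1$, since both $y$ and $p_\tau$ are), so minimizing $\mathcal{L}(\hat p)$ over the simplex reduces to minimizing the cross-entropy $H(q,\hat p) = -\sum_c q_c \log \hat p_c$ subject to $\sum_c \hat p_c = 1$ and $\hat p_c \geq 0$. By the standard identity $H(q,\hat p) = H(q) + \KL(q\|\hat p)$ together with Gibbs' inequality $\KL(q\|\hat p) \geq 0$, with equality iff $\hat p = q$, the unique minimizer is $\hat p^* = q = \lambda y + (1-\lambda) p_\tau$. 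Alternatively I could derive the same conclusion directly through a Lagrange-multiplier computation: differentiating $-\sum_c q_c \log \hat p_c + \mu(\sum_c \hat p_c - 1)$ with respect to $\hat p_c$ yields $\hat p_c = q_c/\mu$, and the constraint forces $\mu = 1$.

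No step here is truly hard, as the proof is essentially a one-line consequence of Gibbs' inequality. The only subtle points worth flagging are (i) verifying that $q$ lies in the probability simplex, which requires $\lambda \in [0,1]$ as stated, and (ii) handling the degenerate case where some $y_c = 0$ but $p_{\tau,c} > 0$ (or vice versa), which is harmless because the combined coefficient $q_c$ still multiplies $\log \hat p_c$ in a well-defined way as long as $\hat p_c > 0$ at the minimizer. I would mention these observations briefly to ensure the proof is airtight.
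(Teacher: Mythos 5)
Your proposal is correct. The expansion step is identical to the paper's: both write the objective as $-\sum_c[\lambda y_c+(1-\lambda)p_{\tau,c}]\log\hat p_c$ plus a constant in $\hat p$. Where you diverge is the closing argument: the paper runs a Lagrange-multiplier computation on the constrained problem, sets the derivative to zero to get $\hat p_c=[\lambda y_c+(1-\lambda)p_{\tau,c}]/\mu$, and uses the normalization of $y$ and $p_\tau$ to force $\mu=1$; you instead recognize the remaining term as the cross-entropy $H(q,\hat p)$ with $q=\lambda y+(1-\lambda)p_\tau$ and invoke the identity $H(q,\hat p)=H(q)+\KL(q\|\hat p)$ together with Gibbs' inequality. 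Your route buys a bit more rigor for free: it certifies that $q$ is the \emph{global} and \emph{unique} minimizer over the simplex, whereas the paper's first-order stationarity condition alone does not distinguish a minimum from other critical points (it is rescued only implicitly by the convexity of the objective, which the paper does not mention), and it sidesteps any concern about the nonnegativity constraints being active. Your check that $q$ lies in the simplex for $\lambda\in[0,1]$ and your remark about coordinates with $q_c=0$ are also worthwhile points the paper glosses over; note that when $q_c=0$ the minimizer puts $\hat p^*_c=0$, which is consistent with the stated formula. Since you also sketch the Lagrange alternative, your write-up effectively subsumes the paper's proof.
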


\begin{proof}
We begin by expanding the objective function:

\begin{align}
\mathcal{L}(\hat{p}) &= \lambda \ell(\hat{p}, y) + (1-\lambda) \KL(p_\tau\|\hat{p}) \\
&= -\lambda \sum_{c=1}^C y_c \log \hat{p}_c + (1-\lambda) \sum_{c=1}^C p_{\tau,c} \log \frac{p_{\tau,c}}{\hat{p}_c} \\
&= -\lambda \sum_{c=1}^C y_c \log \hat{p}_c + (1-\lambda) \sum_{c=1}^C p_{\tau,c} \log p_{\tau,c} - (1-\lambda) \sum_{c=1}^C p_{\tau,c} \log \hat{p}_c \\
&= -\sum_{c=1}^C [\lambda y_c + (1-\lambda) p_{\tau,c}] \log \hat{p}_c + (1-\lambda) \sum_{c=1}^C p_{\tau,c} \log p_{\tau,c}
\end{align}

Since the last term is constant with respect to $\hat{p}$, the optimization problem reduces to minimizing:

\begin{equation}
\mathcal{L}'(\hat{p}) = -\sum_{c=1}^C [\lambda y_c + (1-\lambda) p_{\tau,c}] \log \hat{p}_c
\end{equation}

Subject to the probability constraints:
\begin{equation}
\sum_{c=1}^C \hat{p}_c = 1, \quad \hat{p}_c \geq 0 \quad \forall c \in \{1,2,\ldots,C\}
\end{equation}

Applying the method of Lagrange multipliers with multiplier $\mu$:
\begin{equation}
\mathcal{L}(\hat{p}, \mu) = -\sum_{c=1}^C [\lambda y_c + (1-\lambda) p_{\tau,c}] \log \hat{p}_c + \mu \left( \sum_{c=1}^C \hat{p}_c - 1 \right)
\end{equation}

Taking the partial derivative with respect to $\hat{p}_c$ and setting it to zero:
\begin{equation}
-\frac{\lambda y_c + (1-\lambda) p_{\tau,c}}{\hat{p}_c} + \mu = 0
\end{equation}

Solving for $\hat{p}_c$:
\begin{equation}
\hat{p}_c = \frac{\lambda y_c + (1-\lambda) p_{\tau,c}}{\mu}
\end{equation}

Using the constraint $\sum_{c=1}^C \hat{p}_c = 1$, and observing that $\sum_{c=1}^C y_c = 1$ and $\sum_{c=1}^C p_{\tau,c} = 1$ (both being probability distributions):
\begin{gather}
\sum_{c=1}^C \hat{p}_c = \sum_{c=1}^C \frac{\lambda y_c + (1-\lambda) p_{\tau,c}}{\mu} = 1 \\
\frac{1}{\mu} \sum_{c=1}^C [\lambda y_c + (1-\lambda) p_{\tau,c}] = 1 \\
\frac{1}{\mu} [\lambda \sum_{c=1}^C y_c + (1-\lambda) \sum_{c=1}^C p_{\tau,c}] = 1 \\
\frac{1}{\mu} [\lambda + (1-\lambda)] = 1 \\
\mu = 1
\end{gather}

Therefore, the optimal solution is:
\begin{equation}
\hat{p}^*_c = \lambda y_c + (1-\lambda) p_{\tau,c}
\end{equation}

This weighted arithmetic mean of the two target distributions is the optimal solution that minimizes our objective function.
\end{proof}


\subsection{\texttt{D}ual-\texttt{H}ead \texttt{O}ptimization}

In our proposed \texttt{D}ual-\texttt{H}ead \texttt{O}ptimization (\textbf{\texttt{DHO}}) framework, we extract shared features $g(x)$ from input $x$ and apply two specialized classification heads:
\begin{itemize}[itemsep=1mm,parsep=1pt,topsep=2pt,leftmargin=*]
    \item $h_{\text{CE}}(z) = W_{\text{CE}} z + b_{\text{CE}}$: optimized exclusively to match ground truth labels using cross-entropy loss $\ell(\sigma(h_{\text{CE}}(z)), y)$
    \item $h_{\text{KD}}(z) = W_{\text{KD}} z + b_{\text{KD}}$: optimized exclusively to match teacher predictions using KL divergence $\KL(p_\tau \| \sigma(h_{\text{KD}}(z)/\beta))$
\end{itemize}
where $z = g(x)$ is the feature representation, and the parameter $\beta$ controls the temperature during inference, while a fixed temperature of 1 is used during training of the knowledge distillation head.

\begin{assumption}[$\varepsilon$-Convergence]
\label{assump:convergence}
We assume that after sufficient training, both heads have converged to their respective target distributions with bounded error:
\begin{equation}
\sup_x \|\sigma(h_{\text{CE}}(z)) - y\|_1 \leq \varepsilon, \quad \sup_x \|\sigma(h_{\text{KD}}(z)/\beta) - p_\tau\|_1 \leq \varepsilon
\end{equation}
where $\|\cdot\|_1$ denotes the $\ell_1$ norm and $\varepsilon > 0$ is a small constant.
\end{assumption}

\begin{theorem}[Inference Equivalence Under $\varepsilon$-Convergence]
\label{thm:inference_equiv}
Under \Cref{assump:convergence}, by combining the outputs of both heads as:
\begin{equation}
\hat{p}_{\textbf{\texttt{DHO}}} = \alpha \cdot \sigma(h_{\text{CE}}(z)) + (1-\alpha) \cdot \sigma(h_{\text{KD}}(z)/\beta), \quad \text{where } \alpha = \lambda
\end{equation}

we obtain a prediction that approximates the optimal single-head solution with bounded error:
\begin{equation}
\|\hat{p}_{\textbf{\texttt{DHO}}} - \hat{p}^*\|_1 \leq \varepsilon
\end{equation}
\end{theorem}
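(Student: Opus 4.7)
The plan is to bound the $\ell_1$ difference directly by exploiting the structural parallel between $\hat{p}_{\textbf{\texttt{DHO}}}$ and the closed-form optimum $\hat{p}^* = \lambda y + (1-\lambda)p_\tau$ established in \Cref{thm:single_head_opt}. Substituting $\alpha = \lambda$ into the inference rule, I would first rewrite the difference as a convex combination of the two per-head residuals:
$$\hat{p}_{\textbf{\texttt{DHO}}} - \hat{p}^* = \lambda\bigl(\sigma(h_{\text{CE}}(z)) - y\bigr) + (1-\lambda)\bigl(\sigma(h_{\text{KD}}(z)/\beta) - p_\tau\bigr).$$
This decomposition is the entire conceptual step: choosing $\alpha = \lambda$ is exactly what makes the mixing weights on the predictions match the weights that define $\hat{p}^*$ in \Cref{thm:single_head_opt}, so the residuals separate cleanly and no cross-terms survive.

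Next I would apply the triangle inequality, together with the fact that the $\ell_1$ norm is absolutely homogeneous and the coefficients $\lambda,\, 1-\lambda$ are nonnegative, to obtain
$$\|\hat{p}_{\textbf{\texttt{DHO}}} - \hat{p}^*\|_1 \le \lambda\,\|\sigma(h_{\text{CE}}(z)) - y\|_1 + (1-\lambda)\,\|\sigma(h_{\text{KD}}(z)/\beta) - p_\tau\|_1.$$
Invoking \Cref{assump:convergence} to bound each residual by $\varepsilon$ and using $\lambda + (1-\lambda) = 1$ then yields the target inequality $\|\hat{p}_{\textbf{\texttt{DHO}}} - \hat{p}^*\|_1 \le \varepsilon$, uniformly in $x$ because the assumption is stated with a supremum.

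There is no real obstacle in the manipulation itself; the substantive work was already done by \Cref{thm:single_head_opt}, which identifies the SHO optimum as an arithmetic mixture with weight $\lambda$. The one subtlety worth flagging in the write-up is that the temperature $\beta$ is absorbed into the statement of \Cref{assump:convergence}: the assumption already posits $\ell_1$-convergence of $\sigma(h_{\text{KD}}(z)/\beta)$ to $p_\tau$ at the same $\beta$ used at inference, so no separate Lipschitz argument on the softmax with respect to $\beta$ is required. If one later wanted a version of the theorem that decouples the training and inference temperatures, that would require an additional softmax-stability bound, but for the stated result the proof is essentially a one-line application of triangle inequality after the decomposition above.
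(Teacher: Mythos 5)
Your proposal is correct and matches the paper's own proof essentially verbatim: the same substitution $\alpha=\lambda$, the same decomposition into per-head residuals, and the same triangle-inequality bound using \Cref{assump:convergence} with $\lambda+(1-\lambda)=1$. Your remark that the inference temperature $\beta$ is already baked into the assumption's statement of KD-head convergence is also consistent with how the paper handles it (with the separate temperature-matching issue deferred to \Cref{lemma:temp_match}).
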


\begin{proof}
We analyze the $\ell_1$ distance between the \textbf{\texttt{DHO}} prediction and the optimal solution:

\begin{align}
\|\hat{p}_{\textbf{\texttt{DHO}}} - \hat{p}^*\|_1 &= \| \alpha \cdot \sigma(h_{\text{CE}}(z)) + (1-\alpha) \cdot \sigma(h_{\text{KD}}(z)/\beta) - \lambda y - (1-\lambda) p_\tau \|_1 \\
&= \| \lambda(\sigma(h_{\text{CE}}(z)) - y) + (1-\lambda)(\sigma(h_{\text{KD}}(z)/\beta) - p_\tau) \|_1 \\
&\leq \lambda \|\sigma(h_{\text{CE}}(z)) - y\|_1 + (1-\lambda) \|\sigma(h_{\text{KD}}(z)/\beta) - p_\tau\|_1 \\
&\leq \lambda \varepsilon + (1-\lambda) \varepsilon = \varepsilon
\end{align}

where we applied the triangle inequality for the $\ell_1$ norm and used \Cref{assump:convergence}.

Therefore, we have established that:
\begin{equation}
\hat{p}_{\textbf{\texttt{DHO}}} \approx_\varepsilon \hat{p}^*
\end{equation}
where $\approx_\varepsilon$ denotes approximation with $\ell_1$ error bound $\varepsilon$.
\end{proof}

\begin{lemma}[Temperature Matching via KL Divergence]
\label{lemma:temp_match}
Assume the knowledge distillation head is trained to minimize KL divergence with respect to the teacher's predictions at temperature 1, such that:
\begin{equation}
\KL(p_1 \| \sigma(h_{\text{KD}}(z))) \leq \delta
\end{equation}

Then, setting the temperature parameter $\beta = \tau$ at inference time allows the KD head to approximate the teacher's prediction at temperature $\tau$ with error bound:
\begin{equation}
\|\sigma(h_{\text{KD}}(z)/\beta) - p_\tau\|_1 \leq \sqrt{2\delta}
\end{equation}
\end{lemma}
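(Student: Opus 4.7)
The plan is to prove the lemma by invoking the classical Pinsker inequality, which converts a Kullback–Leibler bound into a total variation (equivalently, scaled $\ell_1$) bound. Concretely, Pinsker states that for any two probability distributions $P$ and $Q$ on a finite set, $\|P-Q\|_1 \le \sqrt{2\,\KL(P\|Q)}$. The target inequality has precisely this form, so the whole proof reduces to showing that $\KL(p_\tau \,\|\, \sigma(h_{\text{KD}}(z)/\beta)) \le \delta$ when $\beta=\tau$, and then applying Pinsker once.

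First I would recall Pinsker's inequality formally and instantiate it with $P = p_\tau$ and $Q = \sigma(h_{\text{KD}}(z)/\tau)$, so that the conclusion $\|Q-P\|_1 \le \sqrt{2\delta}$ will follow as soon as $\KL(P\|Q)\le\delta$. Second, I would argue that the choice $\beta=\tau$ transfers the training-time KL bound to the inference-time distributions. The clean version of this argument uses the fact that if $\sigma(h_{\text{KD}}(z))=p_1=\sigma(z_t)$ exactly, then by the shift-invariance of the softmax the student logits equal the teacher logits up to an additive constant, i.e.\ $h_{\text{KD}}(z)=z_t + c\mathbf{1}$, and dividing both by $\tau$ preserves this, yielding $\sigma(h_{\text{KD}}(z)/\tau)=\sigma(z_t/\tau)=p_\tau$. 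Thus in the idealized limit $\delta\to 0$ the two inference distributions coincide, which motivates the temperature-aligned inference rule $\beta=\tau$.

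Third, to push from exact equality to the claimed bound I would establish that the training-time KL bound at temperature $1$ transfers to the corresponding KL bound at temperature $\tau$, so that $\KL(p_\tau\,\|\,\sigma(h_{\text{KD}}(z)/\tau))\le\delta$. The cleanest route is to write both distributions through the same logit difference and use the fact that softmax temperature scaling is a deterministic reparametrization, which under the standard information-theoretic view does not increase the KL between the ``matched'' pair (one can verify this by a direct change-of-variables on the logit differences, noting that dividing all logits by $\tau$ contracts the log-odds and hence the KL). Once this transfer is in place, substituting into Pinsker immediately gives $\|\sigma(h_{\text{KD}}(z)/\tau)-p_\tau\|_1 \le \sqrt{2\delta}$, finishing the argument.

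The main obstacle is exactly this temperature-transfer step: the training-time assumption is stated at temperature $1$, but the inference-time bound is needed at temperature $\tau$, and in general softmax temperature scaling can change KL in either direction for unrelated distributions. I expect the argument to succeed only because the student and teacher share a common logit structure up to the approximation error, so the temperature acts identically on both arguments of the KL. If a fully rigorous non-expansion statement is elusive, a weaker but still sufficient fallback is to prove the bound with a constant depending mildly on $\tau$ (\emph{e.g.} $\sqrt{2\delta}/\min(1,\tau)$) via a first-order Taylor expansion of softmax around the matched logits; this is enough to justify the use of $\beta=\tau$ in \Cref{eq:inference} and completes the plan.
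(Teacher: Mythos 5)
Your proposal follows essentially the same route as the paper's proof: transfer the temperature-$1$ KL bound to temperature $\tau$ by choosing $\beta=\tau$, then apply Pinsker's inequality once to get $\|\sigma(h_{\text{KD}}(z)/\tau)-p_\tau\|_1\le\sqrt{2\delta}$. Note that your ``deterministic reparametrization does not increase KL'' justification of the transfer step is not a genuine data-processing argument (temperature scaling is a nonlinear map on the simplex, not a stochastic channel, and can change KL in either direction), but the paper is no more rigorous at exactly this point---it simply asserts $\KL(p_\tau\,\|\,\sigma(h_{\text{KD}}(z)/\tau))\approx\KL(p_1\,\|\,\sigma(h_{\text{KD}}(z)))\le\delta$---so you have faithfully reproduced the argument and, to your credit, explicitly flagged its weak link.
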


\begin{proof}
When logits are properly scaled and under appropriate conditions of the softmax function, we can reasonably approximate:
\begin{equation}
\KL(p_\tau \| \sigma(h_{\text{KD}}(z)/\tau)) \approx \KL(p_1 \| \sigma(h_{\text{KD}}(z))) \leq \delta
\end{equation}

Applying Pinsker's inequality, which establishes a relationship between KL divergence and the L1 norm difference between probability distributions:
\begin{equation}
\|p_\tau - \sigma(h_{\text{KD}}(z)/\tau)\|_1 \leq \sqrt{2\KL(p_\tau \| \sigma(h_{\text{KD}}(z)/\tau))} \leq \sqrt{2\delta}
\end{equation}

To ensure $\varepsilon$-convergence between the KD head at temperature $\tau$ and the teacher's prediction at temperature $\tau$, it is sufficient to guarantee:
\begin{equation}
\sqrt{2\delta} \leq \varepsilon \Rightarrow \delta \leq \frac{\varepsilon^2}{2}
\end{equation}
\end{proof}

\begin{corollary}[Optimal \textbf{\texttt{DHO}} Configuration]
\label{cor:dho_config}
With proper training ensuring $\varepsilon$-convergence of both heads, dual-head optimization with temperature parameter $\beta = \tau$ and mixing parameter $\alpha = \lambda$ approximates the optimal single-head objective with error bounded by $\varepsilon$:
\begin{equation}
\hat{p}_{\textbf{\texttt{DHO}}} \approx_\varepsilon \hat{p}^* = \lambda y + (1-\lambda) p_\tau
\end{equation}
This demonstrates that our \textbf{\texttt{DHO}} approach achieves the same theoretical optimality as SHO.
\end{corollary}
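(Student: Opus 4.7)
The plan is to split the proof into two steps: first identify the optimal SHO output $\hat p_{\mathrm{SHO}}$ in closed form, then bound the $\ell_1$ gap to $\hat p_{\textbf{\texttt{DHO}}}$ by a one-line triangle inequality. Concretely, I would read ``$\hat p_{\mathrm{SHO}}$ is the output of SHO optimally trained with $\lambda$'' as the per-sample simplex minimizer of the combined objective $\lambda\,\ell(\hat p, y) + (1-\lambda)\,\KL(p \,\|\, \hat p)$. Expanding both terms, the $\hat p$-dependent part collapses to $-\sum_c[\lambda y_c + (1-\lambda)p_c]\log \hat p_c$ plus a constant in $\hat p$, and a standard Lagrange-multiplier argument on the simplex constraint $\sum_c \hat p_c = 1$ yields the unique minimizer $\hat p_{\mathrm{SHO}} = \lambda y + (1-\lambda)\,p$. (Equivalently, this is the $\hat p$ minimizing $\KL(\lambda y + (1-\lambda)p \,\|\, \hat p)$.)

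With this characterization, the second step is mechanical. Setting $\alpha = \lambda$ and $\beta = 1$ in \Cref{eq:inference}, I rewrite
\[
\hat p_{\textbf{\texttt{DHO}}} - \hat p_{\mathrm{SHO}} = \lambda\bigl(\sigma(h_{\mathrm{CE}}(z)) - y\bigr) + (1-\lambda)\bigl(\sigma(h_{\mathrm{KD}}(z)) - p\bigr),
\]
and apply the triangle inequality together with $\lambda,(1-\lambda)\in[0,1]$ to get
\[
\|\hat p_{\textbf{\texttt{DHO}}} - \hat p_{\mathrm{SHO}}\|_1 \le \lambda\,\|\sigma(h_{\mathrm{CE}}(z)) - y\|_1 + (1-\lambda)\,\|\sigma(h_{\mathrm{KD}}(z)) - p\|_1.
\]
\Cref{main_assump:convergence} upper-bounds each term by $\varepsilon$ uniformly in $x$, and the convex combination $\lambda\varepsilon + (1-\lambda)\varepsilon = \varepsilon$ closes the bound.

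The main obstacle is not technical but interpretive: pinning down what ``optimally trained SHO with $\lambda$'' means. A finite-capacity shared-head network cannot in general realize $\lambda y + (1-\lambda)p$ pointwise, so the claim is really a population/infinite-capacity statement about the idealized Bayes-optimal SHO predictor on the simplex. I would flag this explicitly at the start of the proof and adopt the convention above, after which the two steps go through cleanly. A minor extension I would mention is that if one instead only assumed $\|\hat p_{\mathrm{SHO}} - (\lambda y + (1-\lambda)p)\|_1 \le \varepsilon'$ (finite-capacity slack), the same argument yields the bound $\varepsilon + \varepsilon'$, showing the result is robust to this interpretive choice.
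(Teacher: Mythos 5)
Your proposal is correct and follows essentially the same route as the paper: the paper's appendix first derives the closed-form SHO optimum $\hat{p}^* = \lambda y + (1-\lambda)p_\tau$ by exactly the Lagrange-multiplier argument you sketch, and then proves the $\varepsilon$-bound by the same decomposition $\lambda(\sigma(h_{\mathrm{CE}}(z))-y) + (1-\lambda)(\sigma(h_{\mathrm{KD}}(z)/\beta)-p_\tau)$ and triangle inequality under the $\varepsilon$-convergence assumption; your reading of ``optimally trained SHO'' as the idealized per-sample simplex minimizer is also the paper's implicit convention. The one point you gloss over is the temperature: the corollary asserts the configuration $\beta=\tau$ (not $\beta=1$), and the paper handles this by stating the KD-head convergence condition directly on $\sigma(h_{\mathrm{KD}}(z)/\beta)$ and adding a separate temperature-matching lemma (via Pinsker's inequality, $\|p_\tau-\sigma(h_{\mathrm{KD}}(z)/\tau)\|_1\le\sqrt{2\delta}$) to justify that a KD head trained with small KL at temperature $1$ satisfies that condition once $\beta=\tau$. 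Your algebra carries through verbatim with $\beta$ threaded into the KD term, so this is a presentational omission rather than a logical flaw, but a complete proof of the corollary as stated should either assume convergence at temperature $\beta$ (as the paper does) or include that Pinsker-type step.
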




\section{Algorithms and Implementation}
\label{sec:algorithms_and_implementation}

\subsection{Pseudocode}
\label{sec:appendix_algorithm}

We present the pseudocode for \textbf{\texttt{DHO}} in \Cref{alg:training,alg:inference} for training and inference, respectively.

\begin{algorithm}[H]
    \caption{\textbf{\texttt{DHO}} Training with zero-shot CLIP~\citep{radford2021learning} teacher}
    \label{alg:training}
    \begin{algorithmic}[1]
        \STATE {\bfseries Input:} labeled set $\mathcal{D}^{(l)} = \{(x^{(l)}_i, y_i)\}_{i=1}^N$, unlabeled set $\mathcal{D}^{(u)} = \{x^{(u)}_j\}_{j=1}^M$,
        \STATE \phantom{\bfseries Input:} student feature extractor $g$, prediction heads $h_{\text{CE}}, h_{\text{KD}}$, teacher encoders $f_{\mathcal{X}}, f_{\mathcal{T}}$,
        \STATE \phantom{\bfseries Input:} prompt template ``A photo of \texttt{[CLASS]}'', temperature scaling factors $\zeta, \eta$, \STATE \phantom{\bfseries Input:} balancing hyperparameter $\lambda$,
        \STATE \phantom{\bfseries Input:} supervised mini-batch size $B$, and unsupervised mini-batch size $B'$.
        \WHILE{not converged}
            \STATE Sample mini-batch $\mathcal{B}^{(l)} = \{(x_b^{(l)}, y_b)\}_{b=1}^{B}$ from $\mathcal{D}^{(l)}$, $\mathcal{B}^{(u)} = \{x_{b'}^{(u)}\}_{b'=1}^{B'}$ from $\mathcal{D}^{(l)}\cup\mathcal{D}^{(u)}$.

            \STATE \textcolor{gray}{// Process labeled data}
            \FOR{each $(x_b^{(l)}, y_b) \in \mathcal{B}^{(l)}$}
                \STATE $z_b^{(l)} \leftarrow g(x_b^{(l)})$
                \STATE $\hat{p}_{\text{CE},b}^{(l)} \leftarrow \sigma(h_{\text{CE}}(z_b^{(l)}))$
                \STATE $\hat{p}_{\text{KD},b}^{(l)} \leftarrow \sigma(\frac{1}{\eta}h_{\text{KD}}(z_b^{(l)}))$
                \STATE $p_b^{(l)} \leftarrow \sigma\left(\frac{1}{\zeta\cdot\eta}[\mathtt{CosSim}(f_\mathcal{X}(x_b^{(l)}), f_\mathcal{T}(t_1)), \ldots,\mathtt{CosSim}(f_\mathcal{X}(x_b^{(l)}), f_\mathcal{T}(t_C))]^\top\right)$
            \ENDFOR

            \STATE \textcolor{gray}{// Process unlabeled data}
            \FOR{each $x_{b'}^{(u)} \in \mathcal{B}^{(u)}$}
                \STATE $z_{b'}^{(u)} \leftarrow g(x_{b'}^{(u)})$
                \STATE $\hat{p}_{\text{KD},b'}^{(u)} \leftarrow \sigma(\frac{1}{\eta}h_{\text{KD}}(z_{b'}^{(u)}))$
                \STATE $p_{b'}^{(u)} \leftarrow \sigma\left(\frac{1}{\zeta\cdot\eta}[\mathtt{CosSim}(f_\mathcal{X}(x_{b'}^{(u)}), f_\mathcal{T}(t_1)), \ldots,\mathtt{CosSim}(f_\mathcal{X}(x_{b'}^{(u)}), f_\mathcal{T}(t_C))]^\top\right)$
            \ENDFOR

            \STATE \textcolor{gray}{// Compute losses and update}
            \STATE $\mathcal{L}_{\text{CE}} \leftarrow \frac{1}{B}\sum_{b=1}^B\ell(\hat{p}_{\text{CE},b}^{(l)},y_b)$
            \STATE $\mathcal{L}_{\text{KD}} \leftarrow \frac{1}{B}\sum_{b=1}^B \KL(\hat{p}_{\text{KD},b}^{(l)}||p_{b}^{(l)}) + \frac{1}{B'}\sum_{b'=1}^{B'} \KL(\hat{p}_{\text{KD},b'}^{(u)}||p_{b'}^{(u)})$
            \STATE $\mathcal{L} \leftarrow \lambda \mathcal{L}_{\text{CE}} + (1 - \lambda) \mathcal{L}_{\text{KD}}$
            \STATE Update parameters of $g$, $h_{\text{CE}}$, $h_{\text{KD}}$ using $\nabla \mathcal{L}$
        \ENDWHILE
    \end{algorithmic}
\end{algorithm}

\begin{algorithm}[H]
    \caption{Dual-Head Optimization Inference}
    \label{alg:inference}
    \begin{algorithmic}[1]
        \STATE {\bfseries Input:} an image $x$, feature extractor $g$, prediction heads $h_{\text{CE}}, h_{\text{KD}}$, linear coefficient $\alpha$, temperature scaling $\beta$
        \STATE $z \leftarrow g(x)$
        \STATE $\hat{p}_{\text{CE}} \leftarrow \sigma(h_{\text{CE}}(z))$
        \STATE $\hat{p}_{\text{KD}} \leftarrow \sigma(h_{\text{KD}}(z)/\beta)$
        \STATE $\hat{p} \leftarrow \alpha \cdot \hat{p}_{\text{CE}} + (1-\alpha) \cdot \hat{p}_{\text{KD}}$
        \STATE $\hat{y} \leftarrow \arg\max_{c}(\hat{p}_c)$
        \STATE {\bfseries Return:} $\hat{y}$
    \end{algorithmic}
\end{algorithm}

\subsection{Implementation Details}
\label{sec:appendix_implementation_details}

\Cref{tab:implementation_details} provides a comprehensive overview of the implementation details for our experiments on 1) few-shot semi-supervised settings on ImageNet and 10 datasets, 2) low-shot semi-supervised settings on ImageNet, and 3) VLM-based adaptation methods.

\begin{table}[htb]
    \caption{Implementation details for our experiments across different settings.}
    \label{tab:implementation_details}
    \centering
    \small
    \resizebox{\textwidth}{!}{%
    \begin{tabular}{p{0.48\textwidth}|p{0.48\textwidth}}
        \toprule
        \rowcolor{gray!20} \multicolumn{2}{c}{\textit{Few-shot Semi-supervised Settings on ImageNet}} \\
        \midrule
        \textbf{Model Configuration} & \textbf{Student Training Details} \\
        \midrule
        \begin{itemize}[leftmargin=*,nosep]
            \item \textbf{Student:} ResNet18~\citep{he2016deep} from scratch or ResNet50 from DINO~\citep{caron2021emerging}
            \item \textbf{Input size:} 224$\times$224
            \item \textbf{Zero-shot Teacher:} ResNet50 from CLIP~\citep{radford2021learning}
            \item \textbf{Few-shot Teacher:} ResNet50 from Tip-Adapter-F~\citep{zhang2021tip}
            \item \textbf{Teacher input size:} 224$\times$224
            \item \textbf{labeled data:} $K \in \{1,2,4,8,16\}$ shots
            \item \textbf{$\zeta$, $\eta$, and $\lambda$}: $0.01, 2, 0.5$
            \item \textbf{$\alpha$ and $\beta$:} $\alpha=0.4$, $\beta=0.5$ (zero-shot); $\alpha=0.2$, $\beta=0.5$ (few-shot)
        \end{itemize}
        &
        \begin{itemize}[leftmargin=*,nosep]
            \item \textbf{Epochs:} 20
            \item \textbf{Optimizer:} AdamW ($\beta_1$=0.9, $\beta_2$=0.999)
            \item \textbf{Learning rate:} $1\times10^{-3}$, weight decay: $1\times10^{-2}$
            \item \textbf{Batch size:} 512 (labeled: 256, unlabeled: 256)
            \item \textbf{Scheduler:} Cosine decay without warmup
            \item \textbf{Augmentation:} Random crops (x0.5-1.0), horizontal flips
        \end{itemize} \\
        \midrule
        \rowcolor{gray!20} \multicolumn{2}{c}{\textit{Few-shot Semi-supervised Settings on 10 Fine-Grained Datasets}} \\
        \midrule
        \textbf{Model Configuration} & \textbf{Student Training Details} \\
        \midrule
        \begin{itemize}[leftmargin=*,nosep]
            \item \textbf{Student:} ResNet18~\citep{he2016deep} or MobileNet~\citep{sandler2018mobilenetv2} pre-trained on ImageNet under supervision
            \item \textbf{Input size:} 224$\times$224
            \item \textbf{Zero-shot Teacher:} ResNet50 from CLIP~\citep{radford2021learning}
            \item \textbf{Few-shot Teacher:} ResNet50 from Tip-Adapter-F~\citep{zhang2021tip}
            \item \textbf{Teacher input size:} 224$\times$224
            \item \textbf{labeled data:} $K \in \{1,2,4,8,16\}$ shots
            \item \textbf{$\zeta$, $\eta$, and $\lambda$}: $0.01, 2, 0.5$
            \item \textbf{$\alpha$ and $\beta$:} determined by validation
        \end{itemize}
        &
        \begin{itemize}[leftmargin=*,nosep]
            \item \textbf{Epochs:} 200
            \item \textbf{Optimizer:} AdamW ($\beta_1$=0.9, $\beta_2$=0.999)
            \item \textbf{Learning rate:} $1\times10^{-3}$, weight decay: $1\times10^{-2}$
            \item \textbf{Batch size:} 128 (labeled: 64, unlabeled: 64)
            \item \textbf{Scheduler:} Cosine decay without warmup
            \item \textbf{Augmentation:} Random crops (x0.5-1.0), horizontal flips
        \end{itemize} \\
        \midrule
        \rowcolor{gray!20} \multicolumn{2}{c}{\textit{Low-shot Semi-supervised Settings on ImageNet}} \\
        \midrule
        \textbf{Model Configuration} & \textbf{Student Training Details} \\
        \midrule
        \begin{itemize}[leftmargin=*,nosep]
            \item \textbf{Student:} CLIP ViT-B/16 or ViT-L/14~\citep{radford2021learning}
            \item \textbf{Input size:} 224$\times$224 (ViT-B/16) or 336$\times$336 (ViT-L/14)
            \item \textbf{Zero-shot Teacher:} CLIP ViT-L/14 or ViT-H/14~\citep{fang2023data}
            \item \textbf{Teacher input size:} 336$\times$336 (ViT-L/14) or 378$\times$378 (ViT-H/14)
            \item \textbf{Few-shot Teacher:} N/A
            \item \textbf{labeled data:} 1\% ($\frac{N}{N+M}\approx0.01$) or 10\% ($\frac{N}{N+M}\approx0.1$) of training data
            \item \textbf{$\zeta$, $\eta$, and $\lambda$}: $0.01, 2, 0.5$
            \item \textbf{$\alpha$ and $\beta$:} $\alpha=0.5$, $\beta=0.5$
        \end{itemize}
        &
        \begin{itemize}[leftmargin=*,nosep]
            \item \textbf{Epochs:} 32
            \item \textbf{Optimizer:} AdamW ($\beta_1$=0.9, $\beta_2$=0.999)
            \item \textbf{Learning rate:} $5\times10^{-5}$, weight decay: $5\times10^{-2}$
            \item \textbf{Batch size:} 512 (labeled: 256, unlabeled: 256)
            \item \textbf{Scheduler:} Cosine warmup decay (5000 steps)
            \item \textbf{Augmentation:} Random crops (x0.5-1.0), horizontal flips
        \end{itemize} \\
        \bottomrule
    \end{tabular}
    }
\end{table}

\Cref{tab:ood_implementation_details} provides the implementation details for our out-of-distribution generalization experiments on 1) full training model evaluation and 2) adaptation methods including linear evaluation, visual prompt tuning, and VLM-based methods.

\begin{table}[htb]
    \caption{Implementation details for our out-of-distribution generalization experiments.}
    \label{tab:ood_implementation_details}
    \centering
    \small
    \resizebox{\textwidth}{!}{%
    \begin{tabular}{p{0.48\textwidth}|p{0.48\textwidth}}
        \toprule
        \rowcolor{gray!20} \multicolumn{2}{c}{\textit{Full Training}} \\
        \midrule
        \textbf{Model Configuration} & \textbf{Training Details} \\
        \midrule
        \begin{itemize}[leftmargin=*,nosep]
            \item \textbf{Student:} CLIP ViT-B/16~\citep{radford2021learning}
            \item \textbf{Student input size:} 224$\times$224
            \item \textbf{Zero-shot Teacher:} CLIP ViT-L/14~\citep{radford2021learning}
            \item \textbf{Teacher input size:} 336$\times$336
            \item \textbf{Labeled data:} 1\% and 10\% ImageNet
            \item \textbf{$\zeta$, $\eta$, and $\lambda$}: $0.01, 2, 0.5$
            \item \textbf{$\alpha$ and $\beta$:} $0.5$ and $1$
        \end{itemize}
        &
        \begin{itemize}[leftmargin=*,nosep]
            \item \textbf{Epochs:} 32
            \item \textbf{Optimizer:} AdamW ($\beta_1$=0.9, $\beta_2$=0.999)
            \item \textbf{Learning rate:} $5\times10^{-5}$, weight decay: $5\times10^{-2}$
            \item \textbf{Batch size:} 512 (labeled: 256, unlabeled: 256)
            \item \textbf{Scheduler:} Cosine warmup decay (5000 steps)
            \item \textbf{Augmentation:} Random crops (x0.5-1.0), horizontal flips
        \end{itemize} \\
        \midrule
        \rowcolor{gray!20} \multicolumn{2}{c}{\textit{Adaptation Methods (Linear Evaluation \& Visual Prompt Tuning)}} \\
        \midrule
        \textbf{Method Configuration} & \textbf{Training Details} \\
        \midrule
        \begin{itemize}[leftmargin=*,nosep]
            \item \textbf{Linear evaluation}~\citep{caron2021emerging}
            \item \textbf{Visual prompt tuning}~\citep{jia2022visual}
            \item \textbf{Frozen backbone:} CLIP ViT-B/16~\citep{radford2021learning}
            \item \textbf{Input size:} 224$\times$224
            \item \textbf{Zero-shot Teacher:} CLIP ViT-L/14~\citep{radford2021learning}
            \item \textbf{Teacher input size:} 336$\times$336
            \item \textbf{Labeled data:} 1\% and 10\% ImageNet
            \item \textbf{$\zeta$, $\eta$, and $\lambda$}: $0.01, 2, 0.5$
            \item \textbf{$\alpha$ and $\beta$:} $0.5$ and $1$
        \end{itemize}
        &
        \begin{itemize}[leftmargin=*,nosep]
            \item \textbf{Epochs:} 20
            \item \textbf{Optimizer:} AdamW ($\beta_1$=0.9, $\beta_2$=0.999)
            \item \textbf{Learning rate:} $5\times10^{-5}$, weight decay: $5\times10^{-2}$
            \item \textbf{Batch size:} 512 (labeled: 256, unlabeled: 256)
            \item \textbf{Scheduler:} Cosine warmup decay (5000 steps)
            \item \textbf{Augmentation:} Random crops (x0.5-1.0), horizontal flips
        \end{itemize} \\
        \midrule
        \rowcolor{gray!20} \multicolumn{2}{c}{\textit{Adaptation Methods (Prompt Tuning)}} \\
        \midrule
        \textbf{Method Configuration} & \textbf{Training Details} \\
        \midrule
        \begin{itemize}[leftmargin=*,nosep]
            \item \textbf{Prompt tuning:} CoOp~\citep{zhou2022learning}, PromptSRC~\citep{khattak2023self}
            \item \textbf{Frozen backbone:} CLIP ViT-B/16~\citep{radford2021learning}
            \item \textbf{Input size:} 224$\times$224
            \item \textbf{Zero-shot Teacher:} CLIP ViT-L/14~\citep{radford2021learning}
            \item \textbf{Teacher input size:} 336$\times$336
            \item \textbf{Labeled data:} 1\% and 10\% ImageNet
            \item \textbf{$\zeta$, $\eta$, and $\lambda$}: $0.01, 2, 0.5$
            \item \textbf{$\alpha$ and $\beta$:} $0.5$ and $1$
        \end{itemize}
        &
        \begin{itemize}[leftmargin=*,nosep]
            \item \textbf{Prompt tuning:} Following PromptSRC~\citep{khattak2023self} configurations
            \item \textbf{Comparison:} CasPL~\citep{wu2025cascade} with domain-specific unlabeled data
        \end{itemize} \\
        \bottomrule
    \end{tabular}%
    }
\end{table}


\clearpage

\subsection{Computational Costs}
\label{sec:appendix_computational_overhead}

\paragraph{Inference overhead of \textbf{\texttt{DHO}} over SHO.}
\Cref{tab:full_computation_cost} presents computational overheads at inference time introduced by \textbf{\texttt{DHO}} over SHO for all the architectures in this paper, such as MobileNetV2~\citep{sandler2018mobilenetv2}, ResNet-18~\citep{he2016deep}, ResNet-50~\citep{he2016deep}, ViT-B/16~\citep{dosovitskiy2020image}, and ViT-L/16~\citep{dosovitskiy2020image}.

\begin{table}[h]
    \small
    \caption{Inference overhead using RTX 4090 across different architecture.}
    \vspace{-0.1in}
    \label{tab:full_computation_cost}
    \centering
    \begin{tabular}{lrrr}
        \toprule
        \textbf{Model} & \textbf{Params (M)} & \textbf{FLOPs (G)} & \textbf{Throughput (im/s)} \\
        \midrule
        MobileNetV2 & 3.50 & 0.33 & 2978.4 \\
        \rowcolor{gray!20} + \textbf{\texttt{DHO}} & 4.79 {\color{red}(+36.5\%)} & 0.34 {\color{red}(+3.0\%)} & 2971.2 {\color{red}(-0.24\%)} \\
        \midrule
        ResNet-18 & 11.69 & 1.83 & 3525.7 \\
        \rowcolor{gray!20} + \textbf{\texttt{DHO}} & 12.20 {\color{red}(+4.4\%)} & 1.83 {\color{red}(+0.0\%)} & 3518.6 {\color{red}(-0.20\%)} \\
        \midrule
        ResNet-50 & 25.56 & 4.14 & 1018.4 \\
        \rowcolor{gray!20} + \textbf{\texttt{DHO}} & 27.61 {\color{red}(+8.0\%)} & 4.15 {\color{red}(+0.2\%)} & 1016.4 {\color{red}(-0.19\%)} \\
        \midrule
        ViT-B/16 & 86.57 & 16.87 & 290.2 \\
        \rowcolor{gray!20} + \textbf{\texttt{DHO}} & 87.34 {\color{red}(+0.9\%)} & 16.87 {\color{red}(+0.0\%)} & 290.1 {\color{red}(-0.02\%)} \\
        \midrule
        ViT-L/16 & 304.33 & 59.70 & 255.1 \\
        \rowcolor{gray!20} + \textbf{\texttt{DHO}} & 305.35 {\color{red}(+0.3\%)} & 59.70 {\color{red}(+0.0\%)} & 255.6 {\color{red}(+0.18\%)} \\
        \bottomrule
    \end{tabular}
\end{table}

\begin{wraptable}{r}{0.5\textwidth}
    \vspace{-0.15in}
    \caption{
        Training time and hardware.
    }
    \label{tab:hardware_training_time}
    \vspace{-0.1in}
    \centering
    \resizebox{0.5\textwidth}{!}{
        \begin{tabular}{l|l|c|c}
            \toprule
            \textbf{Student} & \textbf{Teacher} & \textbf{Training Time} & \textbf{Hardware} \\
            \midrule
            ResNet-18 & ResNet-50 & $\approx$ 6 hours & 4× RTX 4090 \\
            ResNet-50 & ResNet-50 & $\approx$ 8 hours & 4× RTX 4090 \\
            ViT-B/16 & ViT-L/14 & $\approx$ 28 hours & 8× RTX 4090 \\
            ViT-B/16 & ViT-H/14 & $\approx$ 40 hours & 8× RTX 4090 \\
            ViT-L/14 & ViT-H/14 & $\approx$ 80 hours & 8× RTX 4090 \\
            \bottomrule
        \end{tabular}
    }
    \vspace{-0.15in}
\end{wraptable}

\paragraph{Training time and hardware requirements.}
\Cref{tab:hardware_training_time} presents the training time required for our experiments.
For VLM distillation experiments, which represent the most resource-intensive component of our work, we used 8× NVIDIA RTX 4090 GPUs.
The ViT-H/14 to ViT-L/14 distillation required approximately 80 hours, while the ViT-H/14 to ViT-B/16 and ViT-L/14 to ViT-B/16 distillations required approximately 40 and 28 hours, respectively.
For the ViT-H/14 to ViT-L/14 distillation, we implemented gradient accumulation with 4 steps and mixed precision training~\citep{micikevicius2017mixed} to optimize computational efficiency.
For ImageNet experiments, we used 4× NVIDIA RTX 4090 GPUs, with ResNet-18 and ResNet-50 models requiring approximately 6 and 8 hours of training time, respectively.
We provide these details to facilitate reproduction of our results and to give researchers a clear understanding of the computational resources needed to implement our approach at scale.

\paragraph{Inference overhead improvements with ToMe.}
To further improve the computational efficiency of our approach, we explored integrating Token Merging (ToMe)~\citep{bolya2022token} with \textbf{\texttt{DHO}}. ToMe is a technique that reduces the number of tokens in ViTs by merging similar tokens to improve the efficiency of ViTs. \Cref{tab:tome_efficiency} shows that combining \textbf{\texttt{DHO}} with ToMe significantly reduces computational costs with minimal impact on performance.

\begin{table}[h]
    \small
    \caption{Performance and inference overhead of \textbf{\texttt{DHO}} with Token Merging (ToMe) on ImageNet under low-shot semi-supervised settings using RTX 4090.}
    \vspace{-0.1in}
    \label{tab:tome_efficiency}
    \centering
    \begin{tabular}{lccccc}
        \toprule
        \textbf{Method} & \textbf{Labeled} & \textbf{Accuracy (\%)} & \textbf{Params (M)} & \textbf{FLOPs (G)} & \textbf{Throughput (im/s)} \\
        \midrule
        \textbf{\texttt{DHO}} & 1\% & 81.6 & 87.22 & 17.58 & 243.35 \\
        \rowcolor{gray!20} \textbf{\texttt{DHO}} + ToMe & 1\% & 81.4 {\color{red}(--0.2)} & 87.22 & 13.12 {\color{blue}(--25.4\%)} & 323.39 {\color{blue}(+32.9\%)} \\
        \midrule
        \textbf{\texttt{DHO}} & 10\% & 82.8 & 87.22 & 17.58 & 238.11 \\
        \rowcolor{gray!20} \textbf{\texttt{DHO}} + ToMe & 10\% & 82.5 {\color{red}(--0.3)} & 87.22 & 13.12 {\color{blue}(--25.4\%)} & 308.49 {\color{blue}(+29.6\%)} \\
        \bottomrule
    \end{tabular}
\end{table}


\section{Datasets}
\label{sec:appendix_datasets}

\begin{table}[H]
    \caption{
        Overview of datasets used in our experiments, organized into three categories:
        \textbf{(top)} standard classification datasets,
        \textbf{(middle)} ImageNet, and
        \textbf{(bottom)} ImageNet variants for out-of-distribution (OOD) evaluation.
        For few-shot semi-supervised learning experiments, we report both the absolute number of labeled samples and their percentage relative to the full training set.
    }
    \vspace{-0.1in}
    \label{tab:dataset_overview}
    \centering
    \small
    \resizebox{\textwidth}{!}{
        \begin{tabular}{l|rrrr|rr}
            \toprule
            \textbf{Dataset} & \textbf{\# Classes} & \textbf{\# Train} & \textbf{\# Val} & \textbf{\# Test} & \multicolumn{1}{c}{\textbf{\# Labeled (1-shot)}} & \multicolumn{1}{c}{\textbf{\# Labeled (16-shot)}} \\
            \midrule
            \rowcolor{gray!20}\multicolumn{7}{l}{\textit{Fine-grained 10 Datasets}} \\
            Caltech101~\citep{fei2004learning} & 100 & 4,128 & 1,649 & 2,465 & 100 (2.42\%) & 1,600 (38.76\%) \\
            OxfordPets~\citep{parkhi2012cats} & 37 & 2,944 & 736 & 3,669 & 37 (1.26\%) & 592 (20.11\%) \\
            StanfordCars~\citep{krause20133d} & 196 & 6,509 & 1,635 & 8,041 & 196 (3.01\%) & 3,136 (48.18\%) \\
            Flowers102~\citep{nilsback2008automated} & 102 & 4,093 & 1,633 & 2,463 & 102 (2.49\%) & 1,632 (39.87\%) \\
            Food101~\citep{bossard2014food} & 101 & 50,500 & 20,200 & 30,300 & 101 (0.20\%) & 1,616 (3.20\%) \\
            FGVCAircraft~\citep{maji2013fine} & 100 & 3,334 & 3,333 & 3,333 & 100 (3.00\%) & 1,600 (48.00\%) \\
            SUN397~\citep{xiao2010sun} & 397 & 15,880 & 3,970 & 19,850 & 397 (2.50\%) & 6,352 (40.00\%) \\
            DTD~\citep{cimpoi2014describing} & 47 & 2,820 & 1,128 & 1,692 & 47 (1.67\%) & 752 (26.67\%) \\
            EuroSAT~\citep{helber2019eurosat} & 10 & 13,500 & 5,400 & 8,100 & 10 (0.07\%) & 160 (1.19\%) \\
            UCF101~\citep{soomro2012ucf101} & 101 & 7,639 & 1,898 & 3,783 & 101 (1.32\%) & 1,616 (21.15\%) \\
            \midrule
            \rowcolor{gray!20}\multicolumn{7}{l}{\textit{Coarse-grained Dataset}} \\
            ImageNet~\citep{russakovsky2015imagenet} & 1,000 & 1.28M & - & 50,000 & 1,000 (0.08\%) & 16,000 (1.25\%) \\
            \midrule
            \rowcolor{gray!20}\multicolumn{7}{l}{\textit{ImageNet OOD Variants}} \\
            ImageNet-V2~\citep{recht2019imagenet} & 1,000 & - & - & 10,000 & - & - \\
            ImageNet-Sketch~\citep{wang2019learning} & 1,000 & - & - & 50,889 & - & - \\
            ImageNet-A~\citep{hendrycks2021natural} & 200 & - & - & 7,500 & - & - \\
            ImageNet-R~\citep{hendrycks2021many} & 200 & - & - & 30,000 & - & - \\
            \bottomrule
        \end{tabular}
    }
\end{table}

We evaluated our approach on 11 diverse datasets, with ImageNet~\citep{russakovsky2015imagenet} serving as our primary benchmark.
The datasets span general object recognition~\citep{russakovsky2015imagenet, fei2004learning}, fine-grained classification tasks (vehicles~\citep{krause20133d,maji2013fine}, natural entities~\citep{nilsback2008automated,parkhi2012cats,bossard2014food}), and specialized domains (scenes~\citep{xiao2010sun}, textures~\citep{cimpoi2014describing}, remote sensing~\citep{helber2019eurosat}, and human actions~\citep{soomro2012ucf101}).
Additionally, we conduct experiments on four out-of-distribution test sets to further validate the model's generalization capabilities.
To assess our model's robustness to distribution shifts, we evaluate it on several challenging variants of ImageNet: ImageNet-v2~\citep{recht2019imagenet}, ImageNet-Sketch~\citep{wang2019learning}, ImageNet-A~\citep{hendrycks2021natural}, and ImageNet-R~\citep{hendrycks2021many}.

We summarize the overview of datasets used in \Cref{tab:dataset_overview}, these datasets exhibit diversity in their characteristics, with varying numbers of classes and samples per dataset.
This diversity enabled us to thoroughly validate our method across different few-shot semi-supervised learning scenarios by systematically varying the ratios between labeled and unlabeled samples.


\section{Additional Experiments}
\label{sec:additional_experiments}

\subsection{Experiments on MobileNet}
\label{sec:appendix_mobilenet}

\begin{figure}[htb]
    \begin{minipage}[b]{1.0\textwidth}
        \centering
        \includegraphics[width=\textwidth, scale=0.8]{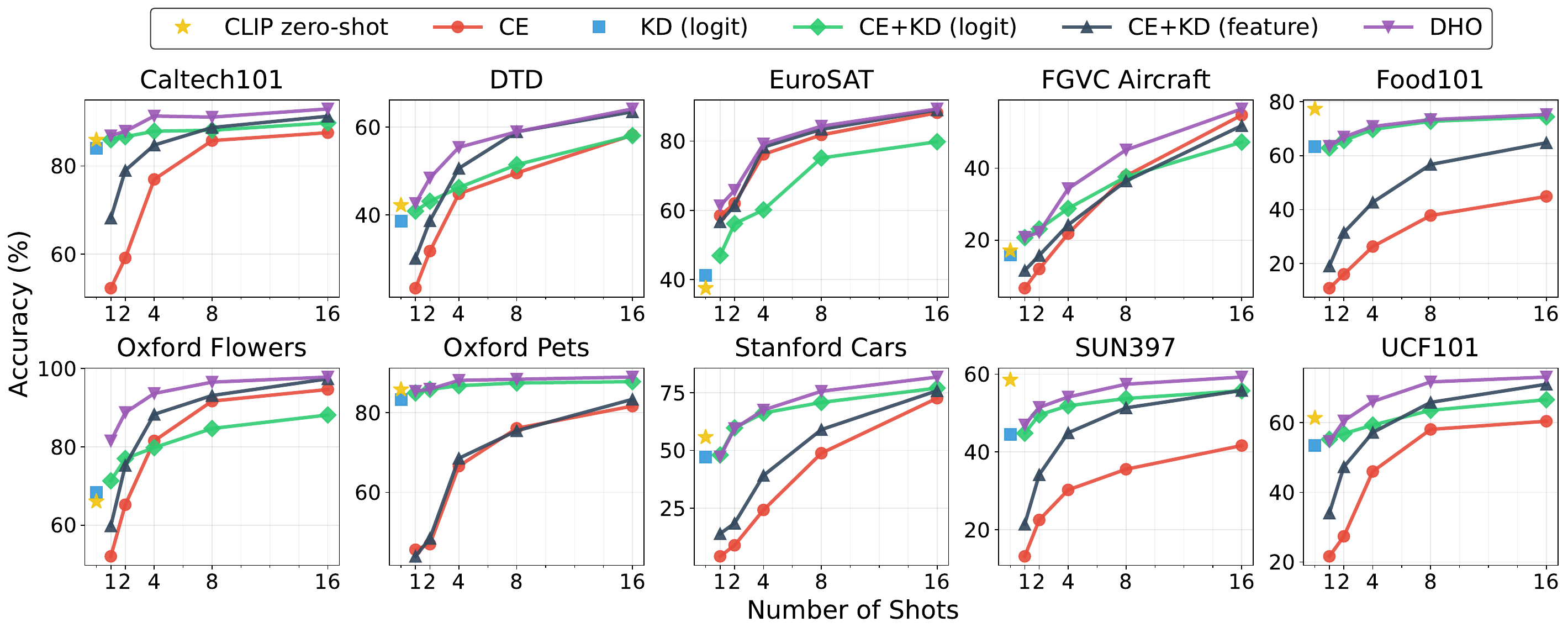}
        \caption{
            Results on \textbf{10 datasets} under few-shot semi-supervision using \textbf{MobileNetV2} with \textbf{zero-shot teacher}~\citep{radford2021learning}.
        }
        \label{fig:mobilenet_baseline}
    \end{minipage}

    \vspace{0.5cm}

    \begin{minipage}[b]{1.0\textwidth}
        \centering
        \includegraphics[width=\textwidth, scale=0.8]{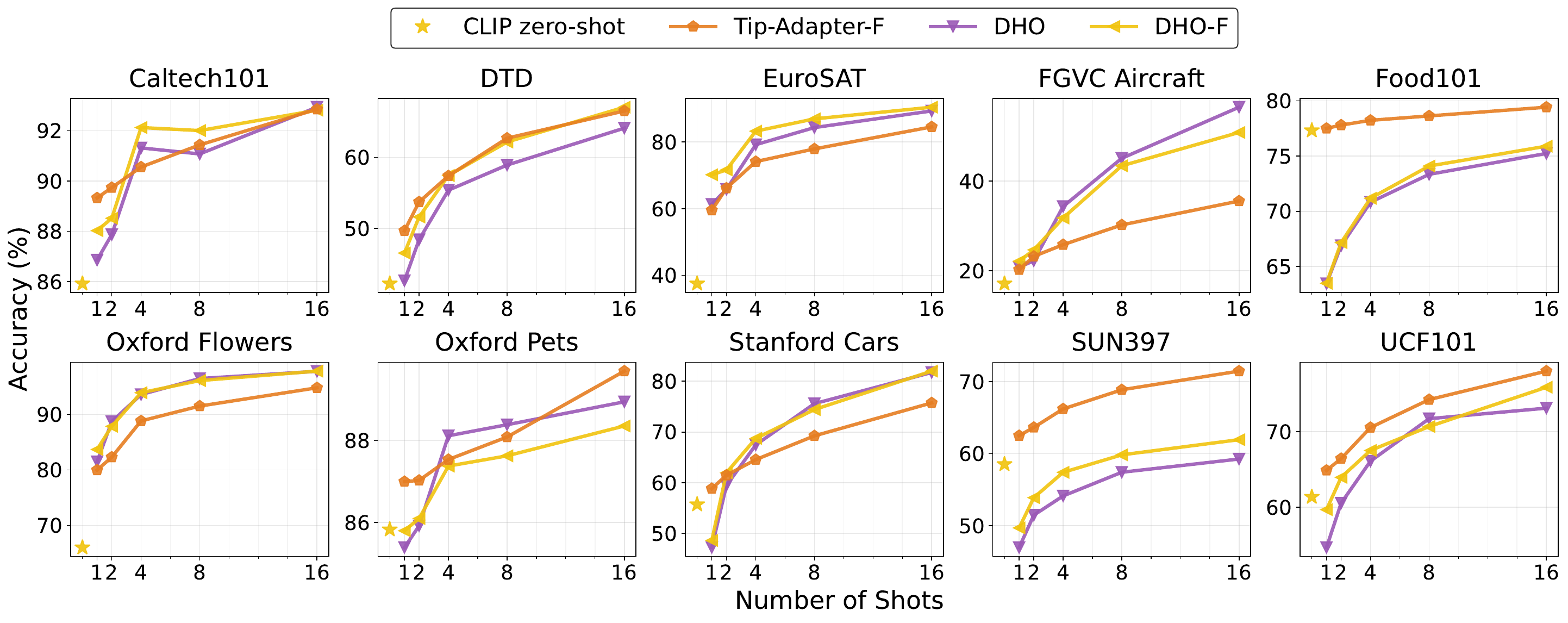}
        \caption{
            Results on \textbf{10 datasets} using \textbf{MobileNetV2} with either zero- or \textbf{few-shot teacher}~\citep{zhang2021tip}.
        }
        \label{fig:mobilenet_fewshot}
    \end{minipage}
\end{figure}

To demonstrate the versatility of our \textbf{\texttt{DHO}} approach beyond ResNet~\citep{he2016deep} and ViT~\citep{dosovitskiy2020image} architectures, we extended our experiments to the MobileNetV2~\citep{sandler2018mobilenetv2}, which is specifically designed for real-world applications with compact models.
We maintained identical experimental settings as described in \Cref{sec:appendix_implementation_details}, using MobileNetV2 as the student model while distilling from CLIP ResNet50.

As illustrated in \Cref{fig:mobilenet_baseline}, our \textbf{\texttt{DHO}} consistently outperforms all single-head baseline methods, demonstrating its effectiveness on lightweight model architectures along with ResNet18.
Furthermore, \Cref{fig:mobilenet_fewshot} reveals patterns similar to our ResNet18 experiments regarding few-shot integration.
Our method successfully incorporates few-shot teacher knowledge, although we observe that the few-shot teacher does not consistently yield improvements over the zero-shot teacher.
Notably, our distilled MobileNetV2 model sometimes achieves superior performance to the zero- and few-shot teachers (ResNet-50) despite having significantly fewer parameters.
This pattern of outperforming both zero-shot and few-shot teachers mirrors the observations from our main experiments, further validating the effectiveness of our approach across different architectural families.


\subsection{Additional Results with Gradient Surgery Methods}
\label{sec:appendix_pcgrad}

In this section, we present experimental results comparing $\textbf{\texttt{DHO}}$ with PCGrad~\citep{yu2020gradient}, a well-known gradient surgery method from the multi-task learning literature.
PCGrad addresses gradient conflicts by projecting conflicting gradients to resolve conflicts post hoc.

We compare $\textbf{\texttt{DHO}}$ with PCGrad using 8 of the 10 datasets (without Food101, Sun397) with ResNet-18 as the student model.
The results are shown in \Cref{tab:pcgrad_comparison}.

\begin{table}[h]
\small
\centering
\caption{Performance comparison of PCGrad with $\textbf{\texttt{DHO}}$ on 8 datasets (average) using ResNet-18.}
\label{tab:pcgrad_comparison}
\begin{tabular}{l|ccccc}
\toprule
Method & 1-shot & 2-shot & 4-shot & 8-shot & 16-shot \\
\midrule
CE+KD (logit) & 56.2 & 58.9 & 61.7 & 65.8 & 69.8 \\
PCGrad & 56.8 & 60.3 & 62.1 & 67.5 & 71.6 \\
\rowcolor{gray!20} $\textbf{\texttt{DHO}}$ & \textbf{60.1} & \textbf{63.7} & \textbf{70.2} & \textbf{75.4} & \textbf{79.1} \\
\bottomrule
\end{tabular}
\end{table}

We observe that PCGrad improves over CE+KD (logit), but still underperforms compared to $\textbf{\texttt{DHO}}$.
Moreover, PCGrad incurs additional memory and computational costs ($\mathcal{O}(|\theta|)$) due to gradient projection and storage, whereas $\textbf{\texttt{DHO}}$ remains lightweight ($\mathcal{O}(d \times C)$) and simple to implement.

The superior performance of $\textbf{\texttt{DHO}}$ can be attributed to its approach of avoiding gradient conflicts at the source by isolating the learning dynamics of each objective via dual heads, rather than resolving conflicts after they arise.
Additionally, $\textbf{\texttt{DHO}}$ provides dynamic interpolation capability at inference time between supervised and distillation signals, which standard gradient-based methods do not offer.


\subsection{Additional Results with KD Methods}
\label{sec:appendix_dkd_wkd}

In this section, we present experimental results combining our $\textbf{\texttt{DHO}}$ method with additional distillation approaches: Decoupled Knowledge Distillation (DKD)~\citep{zhao2022decoupled} and Wasserstein Knowledge Distillation (WKD)~\citep{lv2024wasserstein}.
Both DKD and WKD are orthogonal to $\textbf{\texttt{DHO}}$ since their losses can be applied directly to the outputs of $h_\text{CE}$.
DKD decouples the target class from non-target classes, and WKD computes a kernel matrix within each class.
Both approaches require ground-truth labels, restricting their use to the small labeled dataset $\mathcal{D}^{(l)}$.

Despite these limitations, we conducted experiments on ImageNet (\Cref{tab:dkd_wkd_imagenet}) and on 8 of the 10 datasets (without Food101, Sun397) as shown in \Cref{tab:dkd_wkd_8datasets}.
The results demonstrate that $\textbf{\texttt{DHO}}$ substantially improves the performance of these state-of-the-art KD methods.
Notably, WKD+$\textbf{\texttt{DHO}}$ consistently outperforms $\textbf{\texttt{DHO}}$ alone in most settings, demonstrating the extensibility of $\textbf{\texttt{DHO}}$ due to its simplicity.

\begin{table}[h]
\small
\centering
\caption{Performance comparison of DKD and WKD with and without $\textbf{\texttt{DHO}}$ on ImageNet. Numbers in parentheses show improvement over the base method.}
\label{tab:dkd_wkd_imagenet}
\begin{tabular}{l|ccccc}
\toprule
Method & 1-shot & 2-shot & 4-shot & 8-shot & 16-shot \\
\midrule
\rowcolor{gray!20} $\textbf{\texttt{DHO}}$ & 51.8 & 52.4 & 52.6 & 53.3 & 54.5 \\
DKD & 8.9 & 15.0 & 20.6 & 28.2 & 34.9 \\
\rowcolor{gray!20} DKD+$\textbf{\texttt{DHO}}$ & 47.1 (+38.2) & 44.3 (+29.3) & 40.9 (+20.3) & 40.2 (+12.0) & 42.4 (+7.5) \\
WKD & 11.0 & 17.0 & 17.0 & 27.7 & 34.6 \\
\rowcolor{gray!20} WKD+$\textbf{\texttt{DHO}}$ & \textbf{53.2} (+42.2) & \textbf{53.3} (+36.3) & \textbf{53.3} (+36.3) & \textbf{54.0} (+26.3) & \textbf{54.8} (+20.2) \\
\bottomrule
\end{tabular}
\end{table}

\begin{table}[h]
\small
\centering
\caption{Performance comparison of DKD and WKD with and without $\textbf{\texttt{DHO}}$ on 8 datasets (average). Numbers in parentheses show improvement over the base method.}
\label{tab:dkd_wkd_8datasets}
\begin{tabular}{l|ccccc}
\toprule
Method & 1-shot & 2-shot & 4-shot & 8-shot & 16-shot \\
\midrule
\rowcolor{gray!20} $\textbf{\texttt{DHO}}$ & \textbf{60.1} & 63.7 & 70.2 & 75.4 & 79.1 \\
DKD & 28.7 & 41.3 & 55.4 & 66.3 & 73.1 \\
\rowcolor{gray!20} DKD+$\textbf{\texttt{DHO}}$ & 46.2 (+17.5) & 51.2 (+9.9) & 61.8 (+6.4) & 69.8 (+3.5) & 74.9 (+1.8) \\
WKD & 30.0 & 38.2 & 53.3 & 66.1 & 73.6 \\
\rowcolor{gray!20} WKD+$\textbf{\texttt{DHO}}$ & 59.6 (+29.6) & \textbf{64.5} (+26.3) & \textbf{71.2} (+17.9) & \textbf{75.7} (+9.6) & \textbf{79.6} (+6.0) \\
\bottomrule
\end{tabular}
\end{table}


\subsection{Additional Results with Adaptive Weighting}
\label{sec:appendix_adaptive_weighting}

To further explore the potential of $\textbf{\texttt{DHO}}$, we implement an entropy-based adaptive weighting mechanism.
Let the entropy of a probability vector $p \in \Delta^{C-1}$ be $H(p) = -\sum_{c=1}^{C} p_c \log p_c$.
We compute the adaptive weight $\alpha$ as:
\begin{equation}
\alpha = \frac{\exp(-H(\hat{p}_\text{CE}))}{\exp(-H(\hat{p}_\text{CE})) + \exp(-H(\hat{p}_\text{KD}))}
\end{equation}
where $\hat{p}_\text{CE}$ and $\hat{p}_\text{KD}$ are the output probability vectors from $h_\text{CE}$ and $h_\text{KD}$, respectively.
The final prediction is then computed as $\hat{p} = \alpha \cdot \hat{p}_\text{CE} + (1-\alpha) \cdot \hat{p}_\text{KD}$.

The intuition behind this approach is that lower entropy (higher confidence) predictions should receive higher weights in the final ensemble.
When one head produces more confident predictions than the other, the adaptive weighting mechanism automatically emphasizes the more certain prediction.
The results show that the entropy-based adaptive weighting method is not proved to be effective in these experiments.
While the adaptive weighting does not consistently outperform the fixed interpolation approach, this is likely due to modern neural networks producing overconfident predictions, making entropy an unreliable proxy for uncertainty without proper calibration.
However, we believe adaptive weighting could be beneficial with well-calibrated models or alternative uncertainty measures.

\begin{table}[h]
    \centering
    \caption{Performance of ViT-B/16 and ViT-L/14 distilled from ViT-H/14 with entropy adaptive weighting under different percentages of labeled data.}
    \label{tab:vit_results}
    \begin{tabular}{l|cc|cc}
    \toprule
    \multirow{2}{*}{Method} & \multicolumn{2}{c|}{ViT-B/16} & \multicolumn{2}{c}{ViT-L/14} \\
    & 1\% & 10\% & 1\% & 10\% \\
    \midrule
    $\textbf{\texttt{DHO}}$ & \textbf{81.66} & \textbf{82.78} & 84.59 & \textbf{85.94} \\
    $\textbf{\texttt{DHO}}$+Ent & \textbf{81.66} & 82.65 & \textbf{84.60} & 85.92 \\
    \bottomrule
    \end{tabular}
\end{table}


\subsection{Out-of-Distribution Evaluation upon fully trained model}
\label{sec:appendix_robustness_full}

We provide the evaluation on out-of-distribution datasets with fully trained model in \Cref{tab:appendix_robustness}.
\textbf{\texttt{DHO}} significantly outperformed zero-shot baselines on similar-distribution variants (ImageNet-V2, ImageNet-Sketch) across both ViT-B/16 and ViT-L/14 architectures, but showed performance degradation on out-of-distribution datasets (ImageNet-R, ImageNet-A), suggesting increased distribution overfitting from full model training.
Interestingly, ViT-B/16 models distilled from ViT-L/14 handled shifted distributions better than those taught by the larger ViT-H/14 DFN~\citep{fang2023data}, despite the latter's superior performance on shifted distributions such as ImageNet-R and ImageNet-A.
We attribute this to the shared training background between ViT-B/16 and ViT-L/14 in the CLIP framework~\citep{radford2021learning}, which appears to better preserve generalization capabilities during the adaptation.
This points to an important insight: \textbf{our method works best on out-of-distributions when the teacher and student models share similar training distributions}, suggesting that successful knowledge distillation also depends on the alignment between teacher and student than just the teacher's raw capabilities.

\begin{table}[ht]
    \caption{
        Accuracy(\%) of \textbf{\texttt{DHO}} with full training model on the ImageNet distribution-shifted variants.
    }
    \label{tab:appendix_robustness}
    \centering
    \resizebox{\textwidth}{!}{
    \begin{tabular}{llllccccc}
        \toprule
        \textbf{Student Model} & \textbf{Params (M)} & \textbf{Labeled Data} & \textbf{Teacher Model} & \textbf{Val} & \textbf{V2} & \textbf{Sketch} & \textbf{R} & \textbf{A} \\
        \midrule
        \rowcolor{gray!20}\multicolumn{9}{l}{\textit{ViT-B/16 Student}} \\
        ViT-B/16~\citep{radford2021learning} & 86M & zero-shot & - & 66.7 & 60.8 & 46.2 & \cellcolor{green!15}\textbf{74.0} & \cellcolor{green!15}\textbf{47.0} \\
        ViT-B/16 & 86M & 1\% & ViT-L/14 & 78.7 & 70.1 & 48.0 & 70.9 & 41.1 \\
        ViT-B/16 & 86M & 10\% & ViT-L/14 & 80.8 & 71.3 & 47.4 & \cellcolor{yellow!15}\underline{71.7} & \cellcolor{yellow!15}\underline{41.4} \\
        ViT-B/16 & 86M & 1\% & ViT-H/14 & \cellcolor{yellow!15}\underline{81.6} & \cellcolor{yellow!15}\underline{72.6} & \cellcolor{yellow!15}\underline{50.6} & 65.5 & 35.6 \\
        ViT-B/16 & 86M & 10\% & ViT-H/14 & \cellcolor{green!15}\textbf{82.8} & \cellcolor{green!15}\textbf{73.6} & \cellcolor{green!15}\textbf{50.7} & 67.7 & 37.8 \\
        \midrule
        \rowcolor{gray!20}\multicolumn{9}{l}{\textit{ViT-L/14 Student}} \\
        ViT-L/14~\citep{radford2021learning} & 304M & zero-shot & - & 75.3 & 68.3 & 59.2 & \cellcolor{green!15}\textbf{86.5} & \cellcolor{green!15}\textbf{74.6} \\
        ViT-L/14 & 304M & 1\% & ViT-H/14 & \cellcolor{yellow!15}\underline{84.6} & \cellcolor{yellow!15}\underline{77.0} & \cellcolor{yellow!15}\underline{61.5} & 79.9 & 60.8 \\
        ViT-L/14 & 304M & 10\% & ViT-H/14 & \cellcolor{green!15}\textbf{85.9} & \cellcolor{green!15}\textbf{77.8} & \cellcolor{green!15}\textbf{61.7} & \cellcolor{yellow!15}\underline{82.8} & \cellcolor{yellow!15}\underline{64.4} \\
        \midrule
        \rowcolor{gray!20}\multicolumn{9}{l}{\textit{Zero-shot VLM}} \\
        ViT-H~\citep{fang2023data} & 632M & zero-shot & - & 83.6 & 77.2 & 71.7 & 92.3 & 77.4 \\
        \bottomrule
    \end{tabular}
    }
\end{table}


\section{Additional Analysis}
\label{sec:additional_analyses}





\subsection{Additional Analysis on Non-Linear Head Design}
\label{sec:appendix_nonlinear}

To further investigate the architectural advantages of dual head optimization, we conducted experiments with non-linear head designs, replacing the linear heads used in our main experiments.
We design a non-linear classifier with a sequence of layers: an initial linear projection layer, followed by layer normalization~\citep{ba2016layer}, GELU activation~\citep{hendrycks2016gaussian}, dropout~\citep{srivastava2014dropout}, and a final linear classification layer.
We compared \textbf{\texttt{DHO}} with three non-linear configurations; \textbf{\texttt{DHO}+NL-Head-CE}: non-linear CE head, \textbf{\texttt{DHO}+NL-Head-KD}: non-linear KD head, and \textbf{\texttt{DHO}+NL-Head-CE+KD}: non-linear both CE and KD heads.
All experiments followed the few-shot semi-supervised setting detailed in \Cref{sec:appendix_implementation_details}.

\begin{table}[htb]
    \caption{
    Results of different \textbf{non-linear head configurations} on \textbf{11 datasets} including \textbf{ImageNet} under few-shot semi-supervision using \textbf{ResNet-18} with \textbf{zero-shot teacher}~\citep{radford2021learning}.
    We report averaged accuracy for 10 visual recognition datasets except the ImageNet.
    }
    \centering
    \resizebox{\textwidth}{!}{%
    \begin{tabular}{lcccccc|ccccc}
        \toprule
        & \multicolumn{6}{c|}{\textbf{ImageNet}} & \multicolumn{5}{c}{\textbf{Average of 10 tasks}} \\
        \cmidrule(r){2-7} \cmidrule(l){8-12}
        \textbf{Configuration} & 1-shot & 2-shot & 4-shot & 8-shot & 16-shot & & 1-shot & 2-shot & 4-shot & 8-shot & 16-shot \\
        \midrule
        \textbf{\texttt{DHO}} (base) & 61.7 & 62.2 & 62.6 & 63.8 & 65.1 & & 58.9 & \cellcolor{green!15}\textbf{62.2} & 68.4 & \cellcolor{green!15}\textbf{73.1} & \cellcolor{green!15}\textbf{76.5} \\
        \textbf{\texttt{DHO}}+NL-Head-CE & 61.7 & 61.9 & 62.2 & 63.1 & 64.8 & & \cellcolor{green!15}\textbf{59.3} & \cellcolor{yellow!15}\underline{62.1} & 68.0 & \cellcolor{yellow!15}\underline{72.7} & \cellcolor{yellow!15}\underline{76.2} \\
        \textbf{\texttt{DHO}}+NL-Head-KD & \cellcolor{green!15}\textbf{62.1} & \cellcolor{green!15}\textbf{62.6} & \cellcolor{green!15}\textbf{62.9} & \cellcolor{green!15}\textbf{64.0} & \cellcolor{green!15}\textbf{65.9} & & 58.3 & \cellcolor{yellow!15}\underline{62.1} & 67.8 & 72.4 & \cellcolor{green!15}\textbf{76.5} \\
        \textbf{\texttt{DHO}}+NL-Head-CE+KD & \cellcolor{yellow!15}\underline{62.0} & \cellcolor{yellow!15}\underline{62.3} & \cellcolor{yellow!15}\underline{62.6} & \cellcolor{yellow!15}\underline{63.8} & \cellcolor{yellow!15}\underline{65.4} & & \cellcolor{yellow!15}\underline{59.1} & \cellcolor{green!15}\textbf{62.2} & \cellcolor{green!15}\textbf{68.6} & 72.4 & \cellcolor{green!15}\textbf{76.5} \\
        \bottomrule
    \end{tabular}
    }
    \label{tab:nonlinear_all_tasks}
\end{table}

\begin{figure}[htb]
    \centering
    \includegraphics[width=0.9\textwidth]{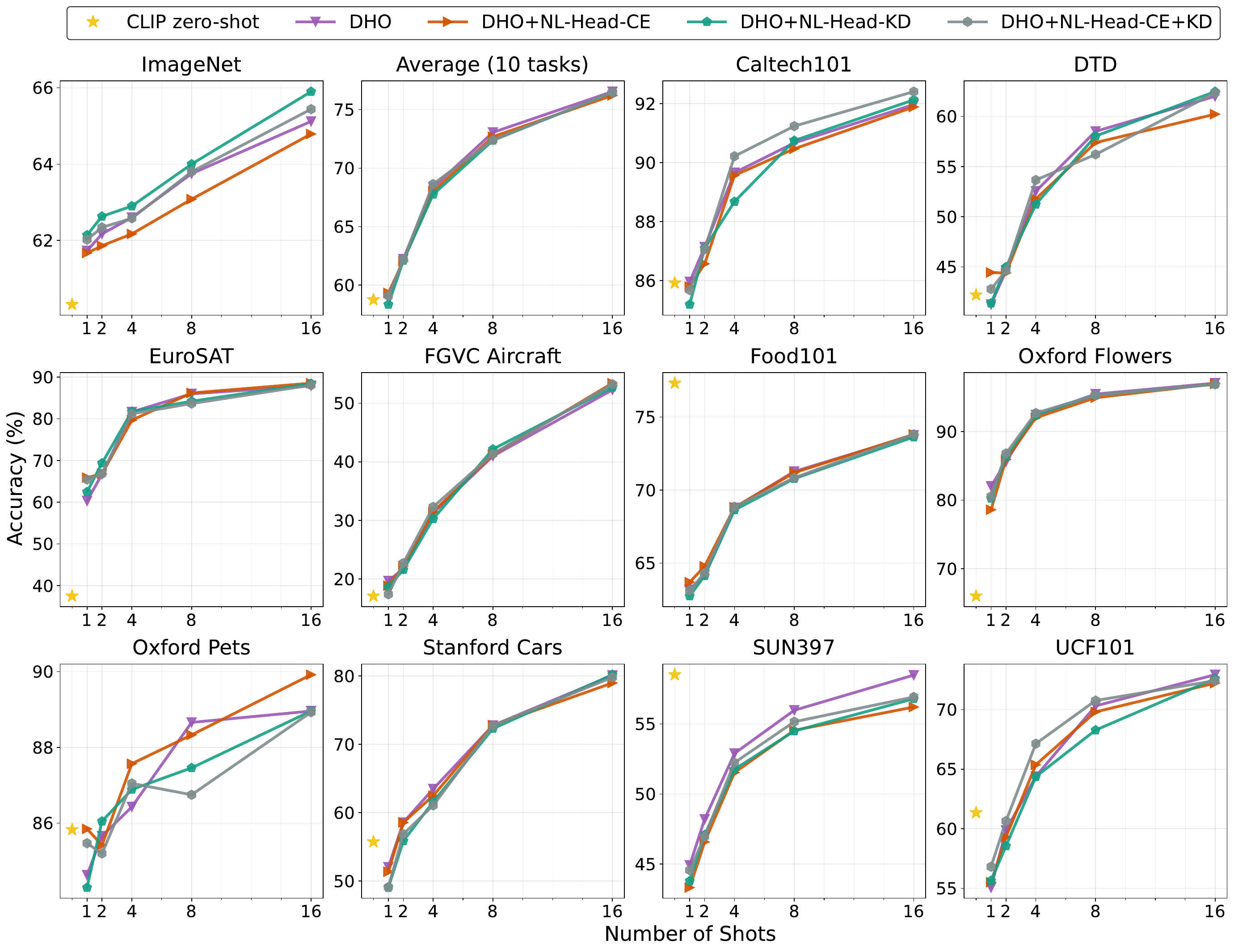}
    \caption{
        Results of different \textbf{non-linear head configurations} on \textbf{11 datasets} including \textbf{ImageNet} under few-shot semi-supervision using \textbf{ResNet-18} with \textbf{zero-shot teacher}~\citep{radford2021learning}.
    }
    \label{fig:nonlinear_all_tasks}
    \vspace{-0.2in}
\end{figure}

\paragraph{Performance Analysis.}
Our experiments revealed key findings regarding head architecture (\Cref{tab:nonlinear_all_tasks}, \Cref{fig:nonlinear_all_tasks}).
On ImageNet, non-linear KD heads consistently outperformed linear ones, suggesting complex architectures better capture teacher predictions.
Conversely, non-linear CE heads degraded performance, likely due to overfitting on limited labeled data.
While dual non-linear heads outperformed fully linear configurations, they were less effective than non-linearity in the KD head alone.

On the other 10 datasets, optimal configurations varied considerably with no consistently superior approach.
This highlights that non-linear transformation effectiveness depends strongly on dataset characteristics and head functionality.
Given comparable performance but superior computational efficiency, we adopted linear head architectures for all subsequent experiments.

\begin{wraptable}{r}{0.5\textwidth}
    \small
    \vspace{-0.2in}
    \caption{\small
    Results on \textbf{dual-heads interpolation strategy} of different \textbf{non-linear head configurations} on \textbf{ImageNet} under 16-shots semi-supervised setting.
    }
    \vspace{-0.1in}
    \centering
    \resizebox{0.5\textwidth}{!}{%
        \begin{tabular}{lccc}
            \toprule
            \textbf{Configuration} & \textbf{CE Head} & \textbf{KD Head} & \textbf{Combined} \\
            \midrule
            \textbf{\texttt{DHO}} (base) & 60.64 & 61.55 & 65.37 \\
            \textbf{\texttt{DHO}}+NL-Head-CE & 60.18 & 61.39 & 64.91 \\
            \textbf{\texttt{DHO}}+NL-Head-KD & \cellcolor{yellow!15}\underline{60.95} & \cellcolor{yellow!15}\underline{61.76} & \cellcolor{green!15}\textbf{65.97} \\
            \textbf{\texttt{DHO}}+NL-Head-CE+KD & \cellcolor{green!15}\textbf{61.66} & \cellcolor{green!15}\textbf{61.81} & \cellcolor{yellow!15}\underline{65.59} \\
            \bottomrule
        \end{tabular}
    }
    \vspace{-0.15in}
    \label{tab:appendix_nonlinear}
\end{wraptable}

\paragraph{Head Decomposition Analysis.}
Analysis under the 16-shot semi-supervised setting revealed complex relationships between architectural choices and head-wise performance as shown in \Cref{tab:appendix_nonlinear}.
Non-linear CE branches decreased CE head performance from $60.64\%$ to $60.18\%$ despite increased parameters.
Conversely, non-linear KD heads improved both heads: CE accuracy increased to $60.95\%$ and KD prediction to $61.76\%$.
However, dual non-linear heads reduced combined performance from $65.97\%$ to $65.59\%$, suggesting head specialization may compromise joint feature representation.
These findings highlight the complex interplay between architectural decisions and multi-head learning dynamics.


\subsection{Additional Dual-Head Analysis}
\label{sec:appendix_qualitative_results}

\begin{figure*}[h]
    \centering
    \includegraphics[width=0.95\textwidth]{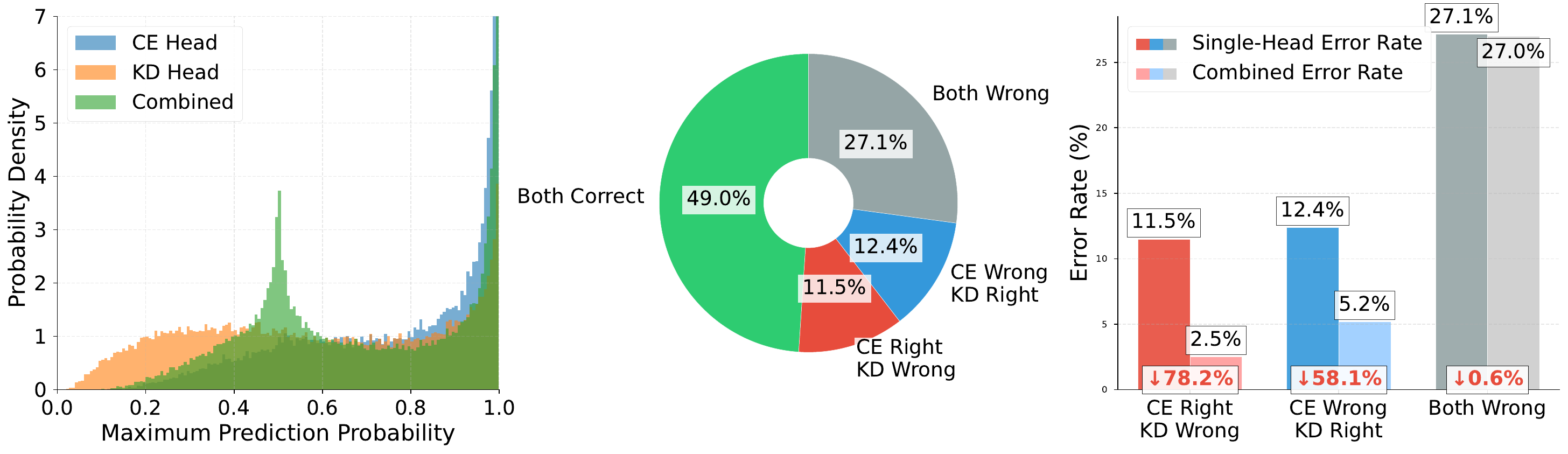}
    \vspace{-0.05in}
    \caption{
        Analysis of \textbf{\texttt{DHO}} on the ImageNet under 16-shot semi-supervised setting.
        \textbf{(Left)} Maximum probability distributions for predictions from CE head, KD head, and their combined output.
        \textbf{(Middle)} Prediction agreement diagram analysis, categorizing cases where both heads are correct, only one head is correct, and both heads are incorrect.
        \textbf{(Right)} Error reduction analysis comparing single-head failure cases against improvements achieved through combined predictions.
    }
    \label{fig:branch_analysis}
\end{figure*}

In this section, we further analyze the prediction behavior of \textbf{\texttt{DHO}}.
As shown in \Cref{fig:branch_analysis} (left), despite sharing feature representations, the CE head ($h_{\text{CE}}$), trained on labeled data, produces sharper predictions, whereas the KD head ($h_{\text{KD}}$), guided by teacher distillation, generates smoother distributions.
Prediction agreement analysis (\Cref{fig:branch_analysis}, middle) shows that the two heads agree in 76.2\% of cases while complementing each other:
the CE head correctly classifies 11.5\% of cases where the KD head fails, and vice versa for 12.4\%.
Error reduction analysis (\Cref{fig:branch_analysis}, right) further demonstrates that our combined approach reduces failure rates from 11.5\% to 2.5\% for the KD head and from 12.4\% to 5.2\% for the CE head, confirming the effectiveness of \textbf{\texttt{DHO}}.

We also present additional qualitative results of \textbf{\texttt{DHO}}, both on ImageNet (see \Cref{fig:additional_qualitative_results_imgnet_1,fig:additional_qualitative_results_imgnet_2}) and on other 10 datasets (see \Cref{fig:additional_qualitative_results_others_1,fig:additional_qualitative_results_others_2}).

\begin{figure}[hb]
    \centering
    \includegraphics[width=0.98\textwidth]{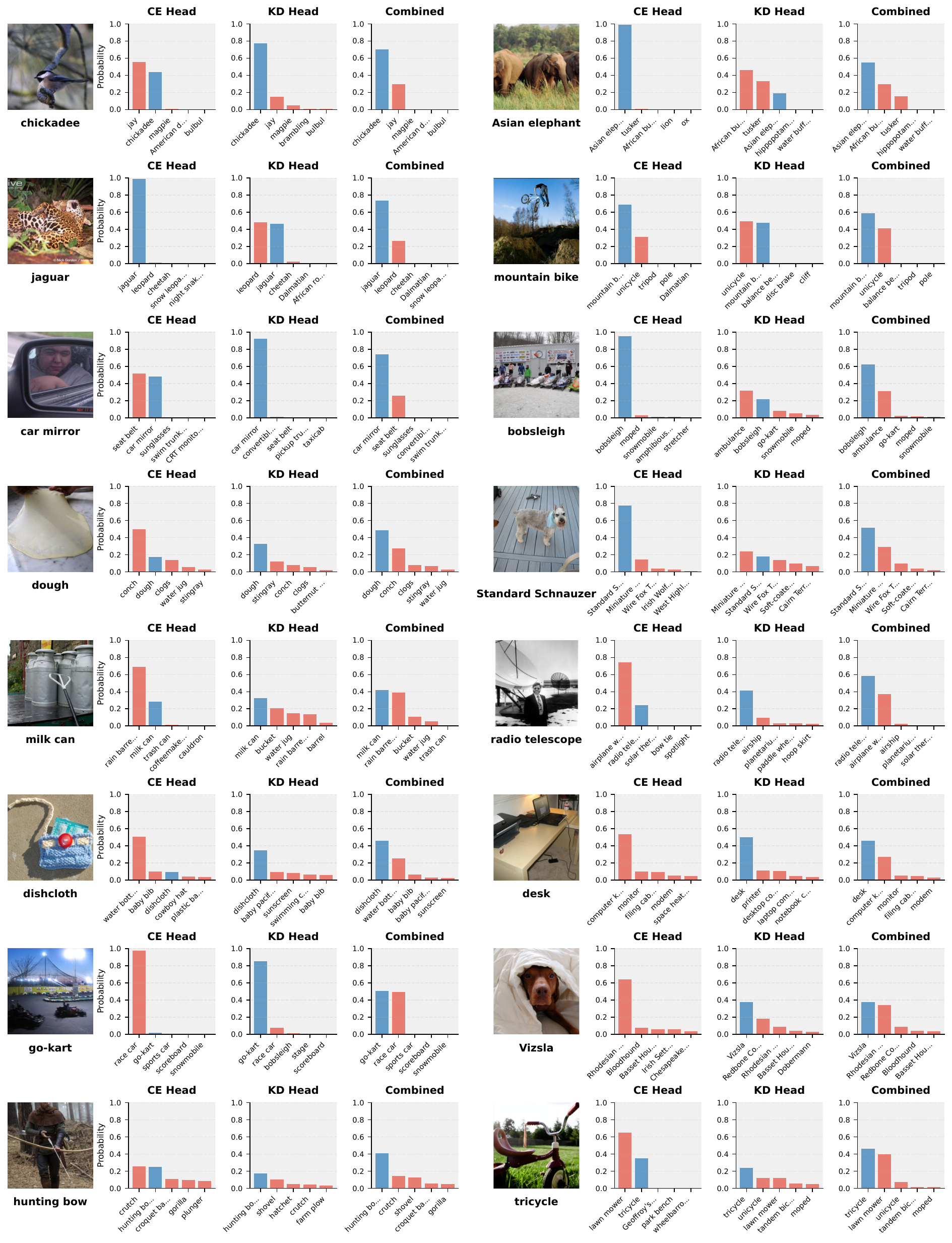}
    \caption{Additional qualitative results on ImageNet under 16-shot semi-supervised setting.}
    \label{fig:additional_qualitative_results_imgnet_1}
\end{figure}

\begin{figure}[hb]
    \centering
    \includegraphics[width=1.0\textwidth]{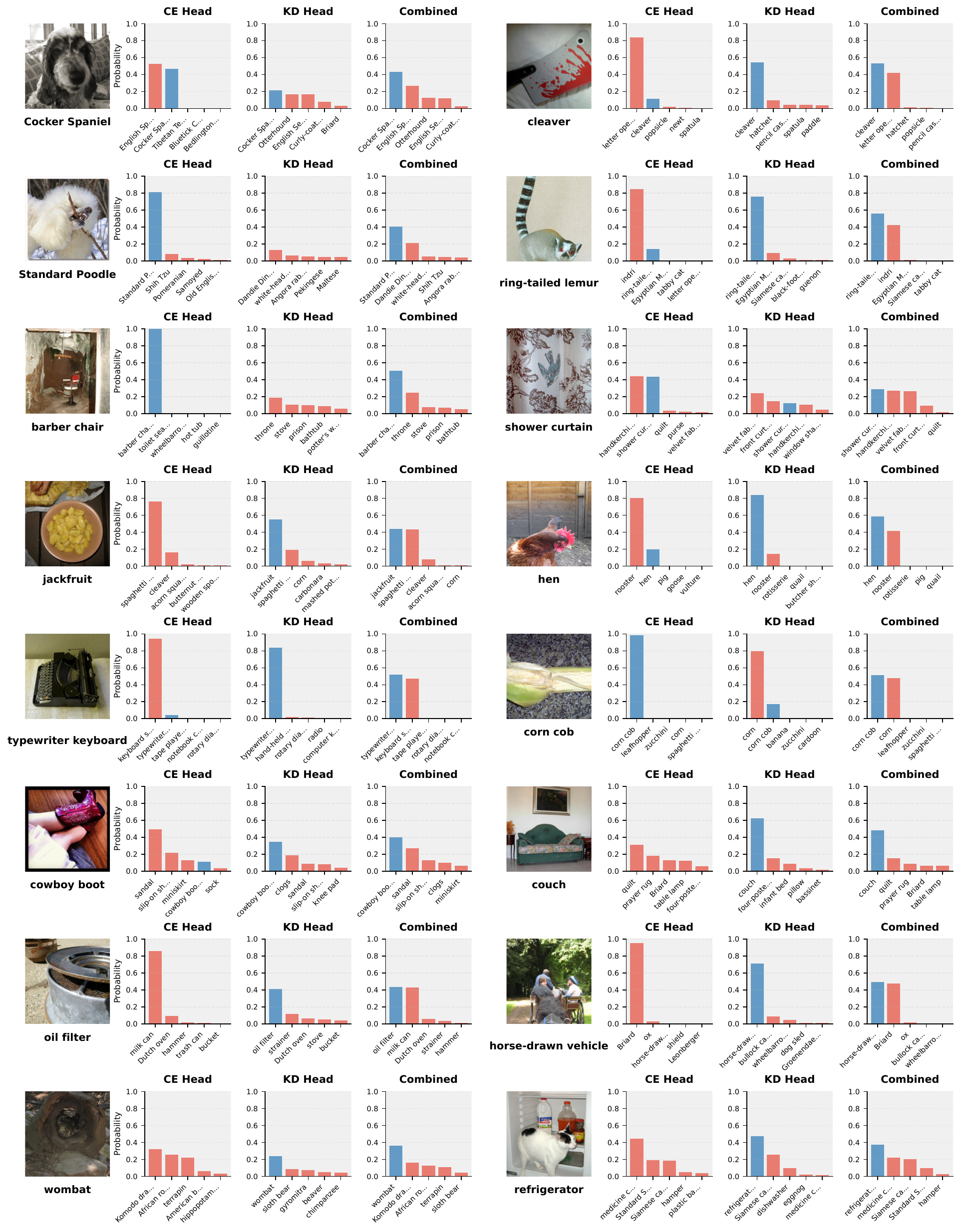}
    \caption{Additional qualitative results on ImageNet under 16-shot semi-supervised setting.}
    \label{fig:additional_qualitative_results_imgnet_2}
\end{figure}

\begin{figure}[hb]
    \centering
    \includegraphics[width=1.0\textwidth]{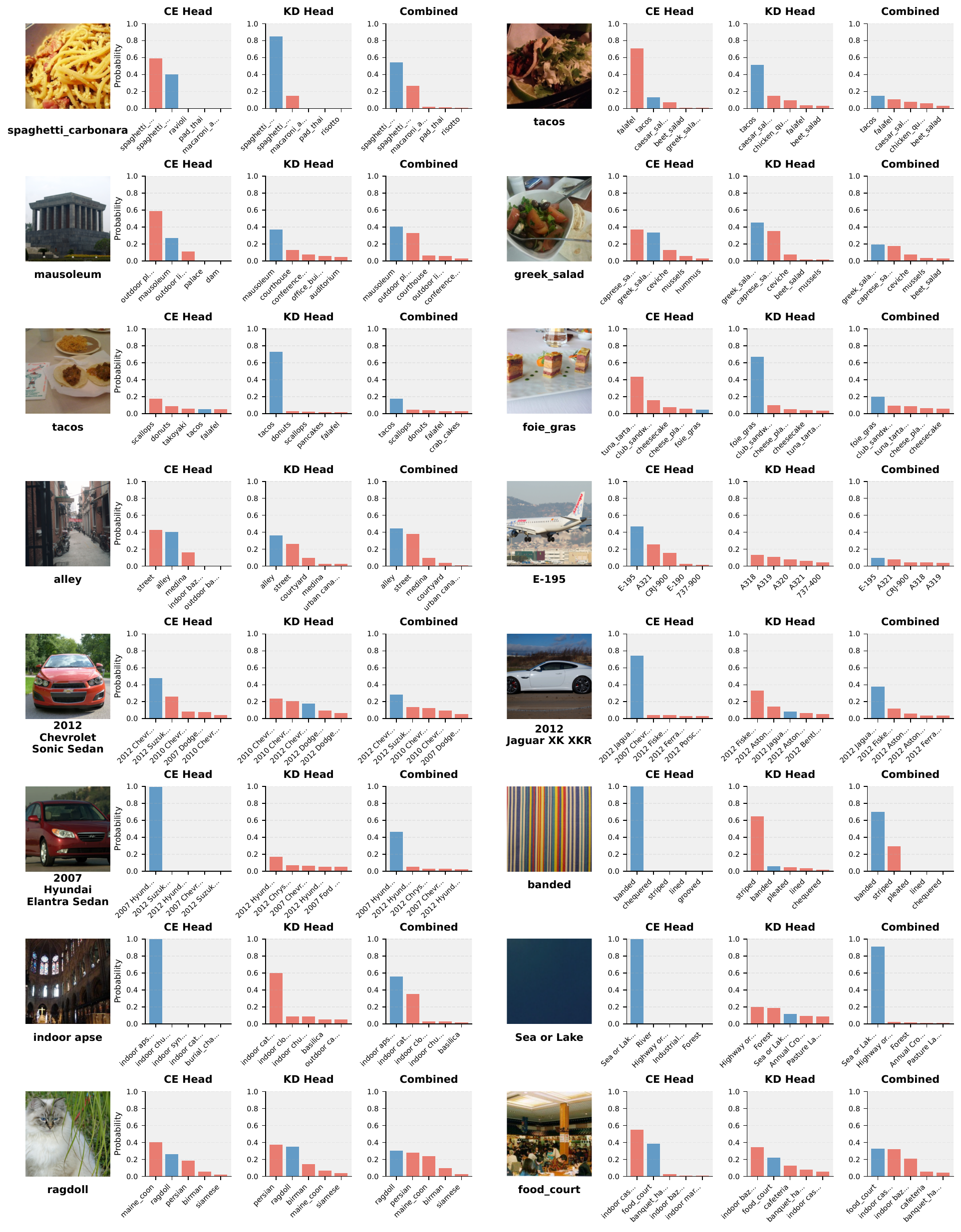}
    \caption{Additional qualitative results on other 10 datasets for models trained under 16-shot semi-supervised setting.}
    \label{fig:additional_qualitative_results_others_1}
\end{figure}

\begin{figure}[hb]
    \centering
    \includegraphics[width=0.98\textwidth]{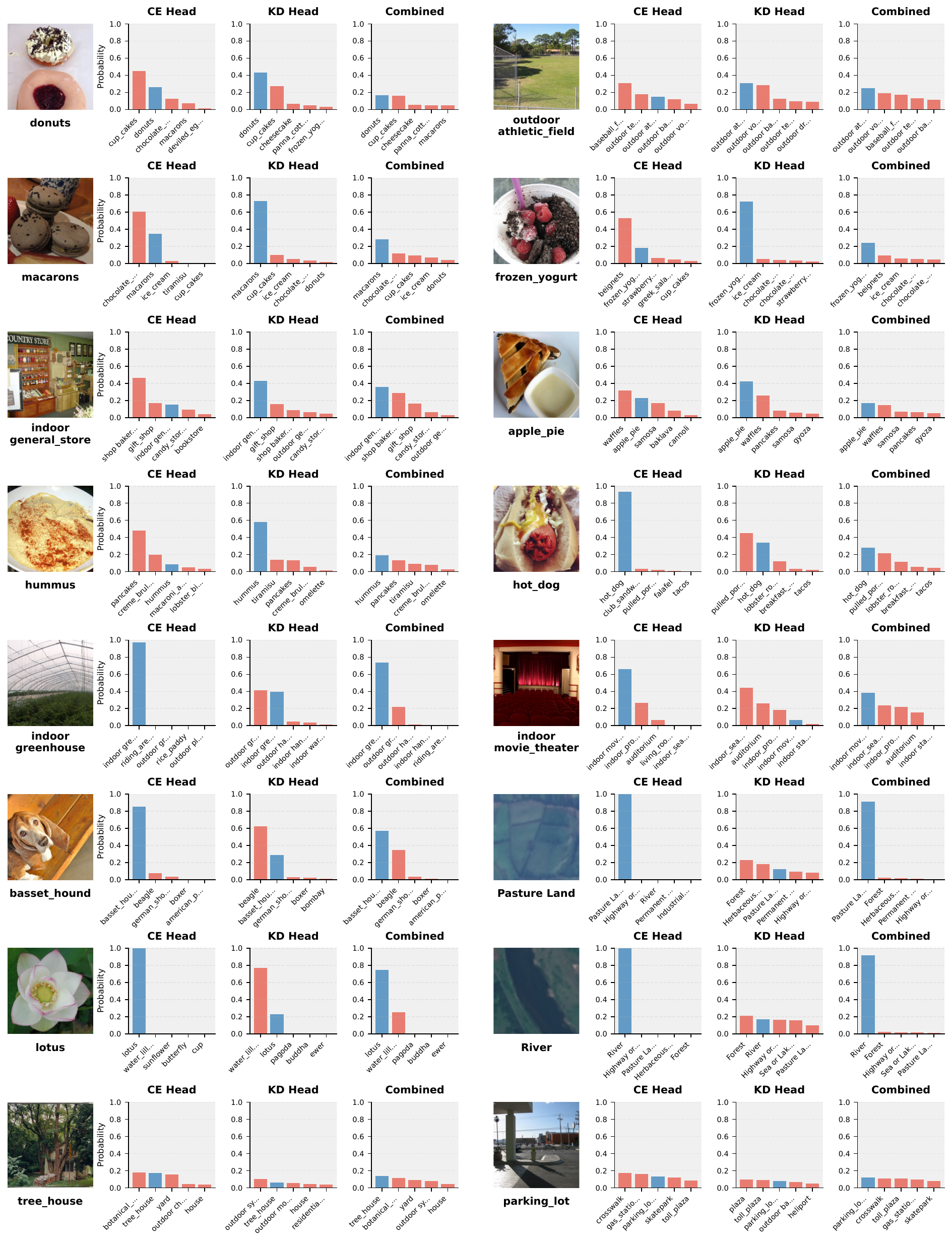}
    \caption{Additional qualitative results on other 10 datasets for models trained under 16-shot semi-supervised setting.}
    \label{fig:additional_qualitative_results_others_2}
\end{figure}

\clearpage
\section{The Use of LLMs}\label{sec:llm}
We used LLMs solely for light editing such as correcting grammatical errors and polishing some words. They did not contribute to research ideation, experiments, analysis, or substantive writing.



\end{document}